\documentclass{article}
\IfFileExists{neurips_2026.sty}{\usepackage[nonatbib,preprint]{neurips_2026}}{\usepackage[margin=1in]{geometry}}

\usepackage[utf8]{inputenc}
\usepackage[T1]{fontenc}
\usepackage{hyperref}
\hypersetup{hidelinks}
\usepackage{url}
\usepackage{booktabs}
\usepackage{amsfonts}
\usepackage{amsmath}
\usepackage{amssymb}
\usepackage{amsthm}
\usepackage{nicefrac}
\usepackage{microtype}
\usepackage{xcolor}
\usepackage{graphicx}
\graphicspath{{figures/}{./}}
\usepackage{subcaption}
\usepackage{float}
\usepackage{tikz}
\usetikzlibrary{arrows.meta,positioning,calc,fit,shapes.geometric}
\usepackage{algorithm}
\usepackage{algpseudocode}
\usepackage{comment}
\usepackage{enumitem}

\usetikzlibrary{backgrounds}

\newtheorem{theorem}{Theorem}
\newtheorem{lemma}[theorem]{Lemma}
\newtheorem{proposition}[theorem]{Proposition}
\newtheorem{corollary}[theorem]{Corollary}
\newtheorem{definition}[theorem]{Definition}
\newtheorem{remark}[theorem]{Remark}
\newtheorem{assumption}[theorem]{Assumption}

\newcommand{\bz}{\mathbf{z}}
\newcommand{\bx}{\mathbf{x}}
\newcommand{\bd}{\mathbf{d}}
\newcommand{\be}{\mathbf{e}}
\newcommand{\ba}{\mathbf{a}}

\newcommand{\bmu}{\boldsymbol{\mu}}
\newcommand{\bSigma}{\boldsymbol{\Sigma}}
\newcommand{\btheta}{\boldsymbol{\theta}}
\newcommand{\cC}{\mathcal{C}}

\newcommand{\cZ}{\mathcal{Z}}
\newcommand{\cY}{\mathcal{Y}}
\newcommand{\E}{\mathbb{E}}
\newcommand{\KL}{\mathrm{KL}}

\newcommand{\argmin}{\operatorname{arg\,min}}
\newcommand{\argmax}{\operatorname{arg\,max}}
\newcommand{\Cenc}{\cC^{\mathrm{enc}}}
\newcommand{\Cdec}{\cC^{\mathrm{dec}}}
\newcommand{\Ce}{C^{\mathrm{enc}}}
\newcommand{\Cd}{C^{\mathrm{dec}}}
\newcommand{\Ped}{P_{\mathrm{ed}}}
\newcommand{\Ked}{\mathsf{K}_{e\to d}}
\newcommand{\Icode}{\mathcal{I}_{\mathrm{code}}}
\newcommand{\Reff}{R_{\mathrm{eff}}}
\newcommand{\nISI}{\mathrm{nISI}}

\newcommand{\HATATWO}{79.76}
\newcommand{\ARITWO}{0.5990}
\newcommand{\EPOCHLONG}{196{,}200}

\newcommand{\HATAFMNIST}{72.8}
\newcommand{\ARIFMNIST}{0.52}
\newcommand{\HATACIFAR}{41.3}
\newcommand{\ARICIFAR}{0.21}

\newcommand{\LVAL}{0.29}

\newcommand{\GAMMAVAL}{0.14}
\newcommand{\EDELTA}{1.13}
\newcommand{\EDELTAAGG}{36.8}
\newcommand{\CHECKBATCHSIZE}{32}
\newcommand{\RESIDUALAGG}{0.67}

\newcommand{\KAPPAMISMATCHTERM}{6.03}
\newcommand{\ETAP}{0.122}
\newcommand{\ETAESS}{78.4}
\newcommand{\ETAESSMIN}{42}
\newcommand{\ETAESSMAX}{99}
\newcommand{\ETACILOW}{0.115}
\newcommand{\ETACIHIGH}{0.129}
\newcommand{\BOUNDNUMERIC}{0.778}

\newcommand{\FASTCONV}{5{,}000}

\newcommand{\Kq}{\mathsf{K}_q}
\newcommand{\Kp}{\mathsf{K}_p}
\newcommand{\Gammaq}{\Gamma_q}
\newcommand{\Gammap}{\Gamma_p}
\newcommand{\barq}{\bar q}
\newcommand{\barp}{\bar p}
\newcommand{\Amu}{\mathcal A_{\mu}}
\newcommand{\Aq}{\mathcal A_{q}}
\newcommand{\Ap}{\mathcal A_{p}}

\title{Lost and Found in Translation:\\Variational Diagnostics for Neural Codebook Channels}

\author{%
  Yusuke Hayashi\thanks{Also affiliated with AI Alignment Network and Humanity Brain.} \\
  Artificial Life Institute \\
  \texttt{yusuke.hayashi@alife.institute}
}
\date{}

\begin{document}
\sloppy
\maketitle

\begin{abstract}
Classical communication systems fail not only through random noise
but also when transmitter and receiver use incompatible operational
codebooks.
Variational autoencoders (VAEs) train an encoder $q_\phi$ and decoder
$p_\theta$ jointly, and practitioners treat the resulting latent space
as a discrete code---for clustering, conditional generation, and
mechanistic interpretability.
Yet standard VAE diagnostics---ELBO, active units, mutual information, and
code histograms---certify only whether this code is \emph{used}, never
whether the decoder \emph{reads} each latent under the encoder's code.
We close this gap with the \emph{neural codebook channel} $\Ked(j\mid i)$,
a coupled encoder--decoder diagnostic whose off-diagonal mass is bounded
by an architecture-free Bernoulli-KL certificate
$d_{\rm bin}(1-\mathcal A\,\|\,\bar\eta_p)\le\bar\Delta$
controlled by the variational gap.
The certificate is the operational specialization of the classical KL
chain rule under disintegration to the encoder--decoder disagreement
event; it is complemented by a constructive marginal-impossibility
result---no combination of marginal histograms, entropies, active-code
counts, or mutual information determines $\Ked$, so $\Ked$ is provably
non-redundant with existing summaries.
We audit the certificate on four sklearn datasets (finite-grid exact,
$5/5$ seeds, $20/20$ pairs satisfy the bound), a 2D model where the
bound is non-vacuous at $2.71\!\times$ the observed disagreement and
the four-term identity closes within $10^{-4}$, MNIST under
importance-sampling control, and a VQ-VAE attaining the predicted limit
$\hat{\mathcal A}{=}1.000$.
The package $(\Ked,\,\mathcal A,\,R_{\rm eff},\,R,\,\mathrm{AU})$ is an
audit-ready reporting unit that complements existing VAE diagnostics by
closing the coupled question they leave open.
More broadly, the framework makes \emph{mismatched decoding}---a failure
mode classical communication theory named decades ago---visible inside a
single deep generative model, with transport to other code-like
continuous representations as a scope roadmap rather than audited
evidence.
\end{abstract}

\section{Introduction}
\label{sec:intro}

Variational autoencoders~\cite{kingma2014auto,rezende2014stochastic} train an
encoder $q_\phi(z\mid x)$ and a decoder $p_\theta(x\mid z)$ jointly, and the
resulting latent space is routinely treated as a discrete code: clusters for
taxonomy and retrieval, anchor points for conditional generation, feature axes
for representation comparison, and probe targets for mechanistic
interpretability~\cite{higgins2017beta,chen2018isolating,locatello2019challenging,khemakhem2020ivae,moschella2022relative,fumero2024latentfunctional,huh2024platonic,elhage2022superposition,cunningham2023sparse}.
Whenever the latent space is read in this way, the practitioner is implicitly
asking a paired question: \emph{does the decoder interpret a region of latent
space under the same code that the encoder used to place mass there?}
Classical communication theory has a name for the failure mode in which it
does not---mismatched decoding~\cite{scarlett2020mismatched,farvardin1990vq,proakis2008digital}---and
prior VAE work has identified an encoder--decoder consistency
problem~\cite{cemgil2020avae,wang2020coupled} or detected the special
case of posterior collapse from the marginal
side~\cite{dang2024posterior}; what is missing, to our knowledge, is a
\emph{coupled-channel finite-codebook diagnostic} whose off-diagonal mass is
audit-bounded by the variational gap.

\begin{figure}[t]
\centering
\IfFileExists{figures/fig1_alignment_snapshots.png}{%
\includegraphics[width=\linewidth]{figures/fig1_alignment_snapshots.png}%
}{\IfFileExists{figures/fig1_alignment_snapshots.pdf}{%
\includegraphics[width=\linewidth]{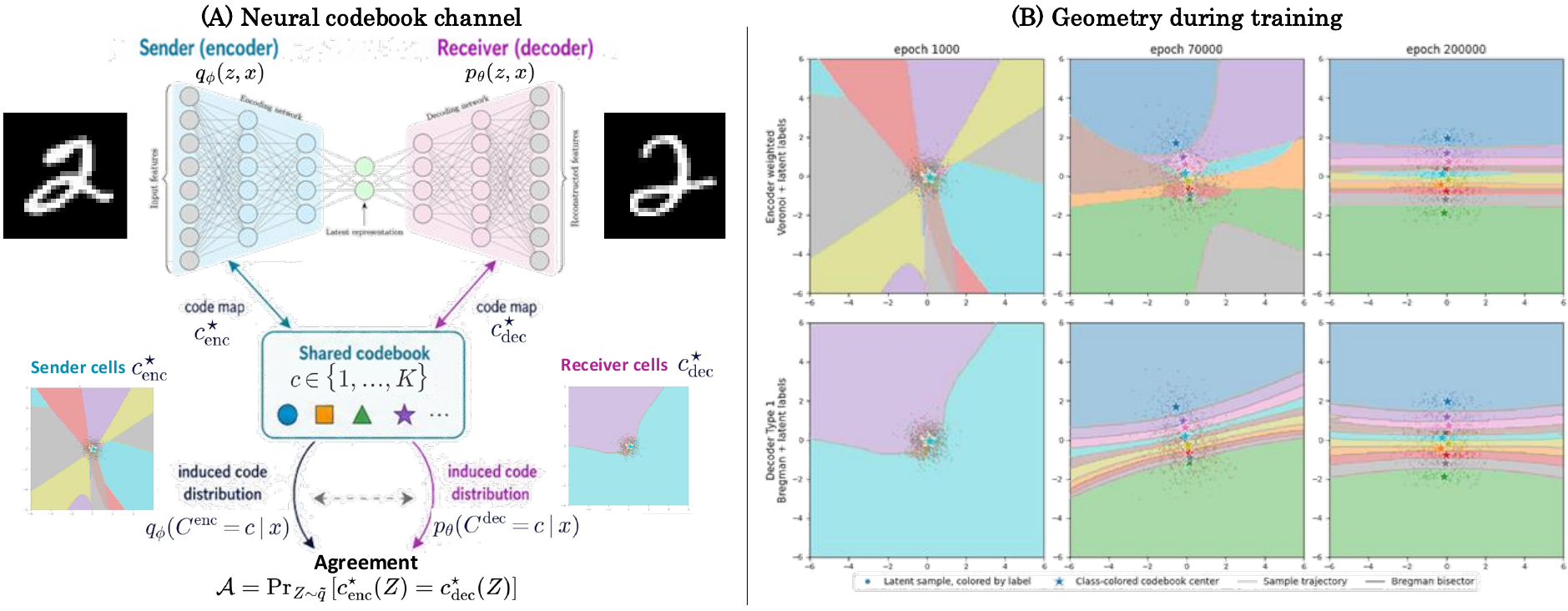}%
}{%
\begin{tikzpicture}[x=1cm,y=1cm,>=Latex]
  \node[draw,rounded corners,minimum width=2.7cm,minimum height=0.95cm] (enc) at (0,0) {encoder code};
  \node[draw,rounded corners,minimum width=2.7cm,minimum height=0.95cm] (z) at (3.6,0) {shared latent $Z$};
  \node[draw,rounded corners,minimum width=2.7cm,minimum height=0.95cm] (dec) at (7.2,0) {decoder reading};
  \draw[->] (enc) -- node[above]{label $i$} (z);
  \draw[->] (z) -- node[above]{label $j$} (dec);
  \node[draw,rounded corners,align=center,minimum width=3.1cm,minimum height=0.95cm] (ked) at (3.6,-1.55) {$\Ked(j|i)$\\coupled channel};
  \draw[->] (z) -- (ked);
  \node[draw,rounded corners,align=center,minimum width=2.4cm,minimum height=0.7cm] (diag) at (1.15,-2.7) {diagonal mass\\agreement};
  \node[draw,rounded corners,align=center,minimum width=2.4cm,minimum height=0.7cm] (off) at (5.85,-2.7) {off-diagonal mass\\interference};
  \draw[->] (ked) -- (diag);
  \draw[->] (ked) -- (off);
\end{tikzpicture}%
}}
\caption{\textbf{Codebook Agreement asks whether one shared latent draw receives the same encoder and decoder label.} \textbf{(A)} Schematic of the neural codebook channel: encoder and decoder each induce an operational code map on a shared latent $Z$, and the joint table $\Ped(i,j)$ together with the row-normalized channel $\Ked(j\mid i)$ records how encoder codes are read by the decoder. Marginal-only diagnostics cannot recover this table (Proposition~\ref{prop:marginal-impossibility}); the variational gap controls its binary off-diagonal specialization (Corollary~\ref{cor:agreement}). \textbf{(B)} Training-time evolution of latent codebook geometry on Setting~1-long at epochs $1000,70000,200000$: encoder additively-weighted Voronoi cells (top) and decoder Type-1 Bregman cells (bottom) progressively align. Scalar diagnostics $(\mathcal{A}, \Icode, \Reff/\log K)$ at the same checkpoints are $(0.245,0.755,0.290),(0.548,0.452,0.560),(0.980,0.020,0.959)$; full $\Ked$ heatmaps in Appendix Figure~\ref{fig:codebook-channel-heatmaps-full}. The sequence is a visual diagnostic, not a finite-grid certificate; proof-backed claims are in Tables~\ref{tab:cross-dataset-summary}--\ref{tab:finite-grid-audit-main}.}
\label{fig:teaser}
\end{figure}

\paragraph{What existing diagnostics measure, and what they miss.}
Standard VAE evaluation reports ELBO, reconstruction error, rate, active units
(AU), marginal code histograms, mutual information, and disentanglement
scores~\cite{alemi2018fixing,burda2016importance,higgins2017beta,chen2018isolating,tishby2000information,cover2006elements}.
Each of these is a marginal or aggregate quantity---a count of \emph{usage},
a sum over the channel. None of them inspects the \emph{coupling} between the
encoder's code map and the decoder's reading map. The omission has a concrete
cost: a decoder may permute two encoder regions, merge several encoder regions
into one reading, or agree trivially because both sides collapsed onto a
single code, and marginal usage statistics will look healthy or even excellent
in each case unless paired with non-collapse diagnostics. The information-theoretic
template for this kind of failure---transmitter and receiver using incompatible
codebooks~\cite{shannon1948,scarlett2020mismatched,farvardin1990vq}---makes the
gap precise: the missing diagnostic is the joint encoder--decoder channel,
not any of its marginals.

\paragraph{The diagnostic object and the marginal-impossibility result.}
We propose a single object as the missing reporting unit. Given measurable
encoder and decoder hard-code maps $c^{\star}_{\rm enc},c^{\star}_{\rm dec}$
on a $K$-symbol alphabet, draw one latent sample $Z$ and record both labels.
The joint table $\Ped(i,j)=\Pr[c^{\star}_{\rm enc}(Z)=i,\,c^{\star}_{\rm dec}(Z)=j]$
and the row-normalized \emph{neural codebook channel} $\Ked(j\mid i)$ together
record how encoder codes are read by the decoder; we use the word
``semantics'' only in this operational sense, not as human-interpretable
meaning. Diagonal mass is \emph{Codebook Agreement} $\mathcal A$; off-diagonal
mass is inter-code interference (Figure~\ref{fig:teaser}). The
\textbf{marginal-impossibility result} (Proposition~\ref{prop:marginal-impossibility})
is a constructive impossibility: for every $K\geq 2$, two coupled channels
can share identical encoder marginals, decoder marginals, entropies,
active-code counts, and mutual information yet have opposite raw agreement.
Thus $\Ked$ is not derivable from any combination of standard scalar or
marginal diagnostics, and reporting it is provably non-redundant with
existing summaries. The result also recovers the permutation indeterminacy
familiar from the latent-identifiability
literature~\cite{hyvarinen2019nonlinear,khemakhem2020ivae,locatello2019challenging}
as a concrete failure mode of marginal VAE diagnostics rather than as an
asymptotic obstruction.

\paragraph{Auditing $\Ked$ through the variational gap.}
The same diagnostic is auditable through the VAE training loss. Applying a
standard KL chain rule under disintegration~\cite{polyanskiy2024information,cover2006elements}
to the binary disagreement event $E=\{c^{\star}_{\rm enc}\neq c^{\star}_{\rm dec}\}$
decomposes the average variational gap into three non-negative terms
(Lemma~\ref{thm:universal}, Corollary~\ref{thm:codebook-identity}):
\begin{equation*}
\bar\Delta \;=\; \underbrace{d_{\rm bin}(1-\mathcal A\|\bar\eta_p)}_{\text{code-level}}
\;+\; \underbrace{\mathcal J_E}_{\text{Jensen residual}}
\;+\; \underbrace{\bar\rho_E}_{\text{within-cell residual}}.
\end{equation*}
Whenever $\bar\Delta$ is exactly enumerated or otherwise controlled, the
off-diagonal mass $1-\mathcal A$ is constrained by a one-dimensional
Bernoulli-KL inequality (Corollary~\ref{cor:agreement}); the residuals
$\mathcal J_E,\bar\rho_E$ are themselves diagnostics, measuring respectively
how much per-example mismatch cancels in the average and how much mismatch
the chosen statistic hides inside its cells. The decomposition itself is a
classical identity; the contribution is the operational specialization that
turns it into a reusable audit layer for neural codebooks. We instantiate
$c^{\star}_{\rm enc}$ as an additively-weighted Voronoi rule from the
GMM-summarized aggregate posterior~\cite{aurenhammer1987power,boissonnat2008mahalanobis}
and $c^{\star}_{\rm dec}$ as a Type-1 Bregman Voronoi rule in decoder-output
space~\cite{banerjee2005clustering,nielsen2010bregman}, then audit the
certificate on five-seed finite-grid exact posteriors across four
low-dimensional datasets and on $28\!\times\!28$ MNIST in an
importance-sampling-bounded regime; every dataset--seed pair on which the
certificate can be evaluated satisfies it (Tables~\ref{tab:cross-dataset-summary}--\ref{tab:mnist-audited}).

\paragraph{What this changes.}
The framework reframes VAE latent analysis from asking only whether codes are
\emph{used} to asking whether the encoder's codes are \emph{read consistently}
by the decoder, and supplies a portable, audit-ready reporting unit---the
package $(\Ked,\mathcal A,\Reff,R,\mathrm{AU})$ with explicit
certified/audited/heuristic labels---for any continuous latent space later
treated as a discrete code. The coupled question is the one downstream
clustering, retrieval, conditional generation, and interpretability tasks
actually depend on; existing marginal diagnostics can certify channel usage
without certifying that usage is consistent. Our scope is correspondingly
narrow: we do not claim state-of-the-art generation, a training-dynamics
theorem, a continuous-law certificate without quadrature control, or that
high agreement alone evidences emergent symbols. The claim is that when a VAE
latent space is treated as a code, the coupled encoder--decoder channel is
the missing diagnostic object, and the variational gap specifies when its
binary mismatch certificate is mathematically licensed.


\paragraph{Contributions.}
\begin{enumerate}[leftmargin=*]
\item \textbf{Coupled diagnostic.} We introduce Codebook Agreement and the
neural codebook channel $\Ked$, and prove a marginal-impossibility result:
standard marginal summaries, including histograms, entropies, active-code
counts, and mutual information, do not determine how encoder codes are read by
the decoder.
\item \textbf{Variational audit certificate.} Specializing the classical KL
chain rule to the encoder--decoder disagreement event decomposes the average
variational gap into a Bernoulli-KL code-level term, a Jensen residual, and a
within-cell residual, yielding a one-dimensional certificate for off-diagonal
disagreement whenever the gap and reference posterior rate are controlled.
\item \textbf{Instantiations and audits.} We instantiate the diagnostic with
Voronoi/Bregman code maps for common VAE architectures, report finite-grid
exact audit results, an MNIST audited regime, a Setting~1 audit where the
bound is non-vacuous ($2.71\!\times$ slack) and the four-term decomposition
closes within $10^{-4}$, and a VQ-VAE on MNIST attaining $\hat{\mathcal A}{=}1.000$
(Tier-3 endpoint of Proposition~\ref{prop:spectrum}); we include a
sensitivity sweep over code granularity to make the code-map dependence
explicit.
\end{enumerate}

\section{The Neural Codebook Channel}
\label{sec:agreement}

This section defines the diagnostic object the rest of the paper studies:
the coupled encoder--decoder channel $\Ked$, together with its scalar summary
$\mathcal A$ (Codebook Agreement) and its mutual-information summary $\Reff$
(effective code rate). The definitions are operational: they presume nothing
about whether the VAE was trained to expose discrete structure, only that
encoder and decoder each induce a measurable hard-code map on latent space.

\subsection{Encoder and decoder code maps}
\label{sec:operational-maps}

A codebook in this paper is an \emph{operational statistic}, not a primitive
discrete latent variable. Let
\[
  c^{\star}_{\rm enc}:\cZ\to\{1,\ldots,K\},\qquad
  c^{\star}_{\rm dec}:\cZ\to\{1,\ldots,K\}
\]
be measurable encoder-side and decoder-side hard-code maps on latent space.
Concrete instances appear in \S\ref{sec:theory}: $c^{\star}_{\rm enc}$ from
the GMM-summarized aggregate posterior, $c^{\star}_{\rm dec}$ from a Bregman
nearest-prototype rule in decoder-output space. The variational certificate of
\S\ref{sec:universal} requires only measurability and absolute continuity
$q_\phi(\cdot|x)\ll p_\theta(\cdot|x)$; it does not require either map to be
trained or even smooth.

\subsection{Codebook agreement and the coupled channel}
\label{sec:agreement-defs}

The diagnostic question this paper asks is whether a single shared latent
draw is given the same label by encoder and decoder.

\begin{definition}[Codebook Agreement]
\label{def:agreement}
With $\bar q:=\E_{\bx}q_\phi(\cdot|\bx)$ the encoder aggregate posterior, the
posterior-sampled agreement is
\[
  \mathcal A \;\equiv\; \Aq \;=\; \Pr_{Z\sim\bar q}\big[c^{\star}_{\rm enc}(Z)=c^{\star}_{\rm dec}(Z)\big].
\]
The deterministic mean-code variant $\Amu=\Pr_{\bx}[c^{\star}_{\rm enc}(\mu_\phi(\bx))=c^{\star}_{\rm dec}(\mu_\phi(\bx))]$
is reported separately when posterior mass crosses code boundaries. The
model-posterior reference rate $\bar\eta_p=\E_x p_\theta(E\mid x)$, where
$E=\{c^{\star}_{\rm enc}\neq c^{\star}_{\rm dec}\}$, is the off-diagonal
probability under the model posterior; the certificate of
\S\ref{sec:universal} compares the encoder-observed mass $1-\Aq$ to this
reference rate.
\end{definition}

\begin{definition}[Neural codebook channel]
\label{def:neural-codebook-channel}
With $\Ce=c^{\star}_{\rm enc}(Z),\Cd=c^{\star}_{\rm dec}(Z)$ for $Z\sim\bar q$,
the joint table $\Ped(i,j)=\Pr[\Ce=i,\Cd=j]$ and row-normalized channel
\[
  \Ked(j\mid i)=\Pr[\Cd=j\mid\Ce=i]=\frac{\Ped(i,j)}{\sum_{j'}\Ped(i,j')}
\]
record how encoder codes are read by the decoder. Diagonal mass is agreement
$\mathcal A=\sum_i\Ped(i,i)$; off-diagonal mass is inter-code interference
$\Icode=1-\mathcal A$. The effective code rate is $\Reff=I(\Ce;\Cd)$.
\end{definition}

\begin{definition}[Disagreement rates]
\label{def:disagreement-rates}
With $E=\{c^{\star}_{\rm enc}\neq c^{\star}_{\rm dec}\}\subseteq\cZ$, the
\emph{per-example disagreement rates} under encoder and model posteriors are
\begin{equation}
\label{eq:eta-defs}
\eta_q(\bx)\;:=\;q_\phi(E\mid\bx)\;=\;\Pr_{Z\sim q_\phi(\cdot|\bx)}[Z\in E],
\qquad
\eta_p(\bx)\;:=\;p_\theta(E\mid\bx)\;=\;\Pr_{Z\sim p_\theta(\cdot|\bx)}[Z\in E],
\end{equation}
with batch-averaged counterparts $\bar\eta_q=\E_\bx\eta_q(\bx)$ and
$\bar\eta_p=\E_\bx\eta_p(\bx)$. By Fubini, $\bar\eta_q=1-\Aq$.
\end{definition}

The triple $(\Ked,\mathcal A,\Reff)$, augmented with rate and active units, is
the reporting unit we call the \emph{neural codebook channel}. Sections
\ref{sec:universal-consequences}--\ref{sec:experiments} establish, in turn,
that it is necessary, that it is auditable, and that the audit is realizable.

\subsection{What marginal diagnostics miss}
\label{sec:failure-modes}

Marginal histograms $\Pr[\Ce=i],\Pr[\Cd=j]$ and aggregate scalars (entropy,
active-units, mutual information) leave the joint table $\Ked$
under-determined. Table~\ref{tab:diagnostic-stress-tests} catalogues the
concrete failure modes this manifests as.

\begin{table}[H]
\centering
\small
\resizebox{0.98\linewidth}{!}{%
\begin{tabular}{@{}llll@{}}
\toprule
Failure mode & Marginal usage & Scalar diagnostics & $\Ked$ signature \\
\midrule
Permutation mismatch & unchanged & often unchanged & off-diagonal permutation mass \\
Many-to-one decoder reading & encoder active & may look non-collapsed & several rows concentrate in one column \\
Trivial collapse & low entropy & may be detected by rate/AU & one diagonal cell, $\Reff=0$ \\
High-entropy confusion & high entropy & ambiguous & diffuse off-diagonal mass \\
\bottomrule
\end{tabular}}
\caption{What the coupled channel adds. Scalar diagnostics answer different
questions; $\Ked$ supplies the missing coupled question---how are encoder
codes read by the decoder?}
\label{tab:diagnostic-stress-tests}
\end{table}

\S\ref{sec:universal-consequences} promotes this informal catalogue to a
constructive impossibility theorem.

\section{Marginal Impossibility for the Coupled Channel}
\label{sec:universal-consequences}

The neural codebook channel cannot be replaced by any combination of marginal
or aggregate diagnostics. We make this precise as a constructive
impossibility result: it is the formal reason $\Ked$ is a strictly new
diagnostic unit rather than a re-packaging of existing ones.

\begin{proposition}[Marginal diagnostics do not identify raw code semantics]
\label{prop:marginal-impossibility}
For every $K\geq 2$, there exist two coupled channels $P,P'$ on
$(\Ce,\Cd)\in\{1,\ldots,K\}^2$ with identical encoder marginals, identical
decoder marginals, identical marginal entropies, identical active-code counts,
and identical mutual information, but opposite raw codebook agreement.
Concretely, with any derangement $\pi$ of $\{1,\ldots,K\}$,
\[
  P(i,j)=\frac1K\mathbf 1\{j=i\},\qquad
  P'(i,j)=\frac1K\mathbf 1\{j=\pi(i)\}.
\]
Both channels have uniform marginals and $I(\Ce;\Cd)=\log K$, yet
$\mathcal A(P)=1$ while $\mathcal A(P')=0$.
\end{proposition}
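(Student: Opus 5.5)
The proof is a direct verification on the explicit pair $P,P'$ given in the statement. I will first fix a derangement; for concreteness, take the cyclic shift $\pi(i)=i+1 \bmod K$. It has no fixed points for every $K\ge 2$, so the existence claim is never vacuous. I will also check that both tables are genuine joint distributions: each has exactly one nonzero entry $1/K$ per row, so each sums to $1$.

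\textbf{Marginals and the quantities they determine.} For $P$, the encoder marginal is $\sum_j P(i,j)=1/K$ and the decoder marginal is $\sum_i P(i,j)=1/K$. For $P'$, the encoder marginal is again $1/K$. The decoder marginal is $\sum_i \frac1K\mathbf 1\{\pi(i)=j\}=\frac1K\,|\pi^{-1}(j)|=1/K$, because $\pi$ is a bijection. Both channels therefore have uniform marginals on both sides. Every marginal-only summary then coincides for $P$ and $P'$: both entropies equal $\log K$, and the active-code counts (number of symbols with positive mass) both equal $K$.

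\textbf{Mutual information.} I will compute $H(\Cd\mid\Ce)$. In both tables each row, once normalized, is a point mass: $\Ked(\cdot\mid i)=\delta_i$ for $P$ and $\delta_{\pi(i)}$ for $P'$. Hence $H(\Cd\mid\Ce)=0$ and $I(\Ce;\Cd)=H(\Cd)=\log K$ in both cases. Equivalently, $P'$ is the image of $P$ under the relabeling $j\mapsto\pi(j)$ of the decoder alphabet, and mutual information is invariant under bijective relabelings. Together with the marginal step, this shows that any function of the marginals and the mutual information takes equal values on $P$ and $P'$.

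\textbf{Agreement.} By Definition~\ref{def:neural-codebook-channel}, $\mathcal A=\sum_i P(i,i)$. For $P$ this is $K\cdot\frac1K=1$. For $P'$ it is $\sum_i\frac1K\mathbf 1\{\pi(i)=i\}=0$, since the derangement has no fixed points.

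None of these steps is a real obstacle; each is a one-line computation. The only point that needs care is making sure the two tables are realizable as channels $(\Ce,\Cd)$ arising from code maps on a latent space, if the statement is read that strictly. If so, I would add that one can take $\cZ=\{1,\ldots,K\}$ with uniform $\bar q$, $c^{\star}_{\rm enc}(z)=z$, and $c^{\star}_{\rm dec}(z)=z$ or $\pi(z)$. These maps are trivially measurable and induce exactly $P$ and $P'$.
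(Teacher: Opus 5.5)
Your proposal is correct and matches the paper's proof, which is the same direct verification: both tables are deterministic bijections with uniform marginals, all $K$ codes are active, $H(\Ce)=H(\Cd)=I(\Ce;\Cd)=\log K$, and the diagonal mass is $1$ for $P$ versus $0$ for $P'$. Two of your steps are small additions that the paper leaves implicit: the explicit cyclic-shift derangement, which guarantees one exists for every $K\ge 2$ (write it with $\pi(K)=1$ so it stays in $\{1,\ldots,K\}$), and the realizability check via $\cZ=\{1,\ldots,K\}$.
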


\begin{proof}
Both tables are deterministic bijections; row and column marginals are
uniform, all $K$ codes are active, and
$H(\Ce)=H(\Cd)=I(\Ce;\Cd)=\log K$. The identity table places all mass on the
semantic diagonal; the deranged table places none.
\end{proof}

\begin{remark}[Raw vs.\ matched, and the connection to identifiability]
\label{rem:raw-vs-matched}
Proposition~\ref{prop:marginal-impossibility} concerns raw agreement against
fixed semantic labels. A Hungarian-matched score can hide the deranged case;
that is precisely why the matrix $\Ked$, together with the label semantics
that define the decoder reading, must be reported rather than only a scalar
matched score. The construction also recovers a population-level analogue of
the permutation indeterminacy familiar from
identifiability~\cite{hyvarinen2019nonlinear,khemakhem2020ivae,locatello2019challenging}:
in the absence of side information, latent codes are identifiable only up to
relabeling. Our framing is operational---this indeterminacy is exactly the
failure mode that marginal VAE diagnostics cannot detect and that $\Ked$
exposes.
\end{remark}

\begin{remark}[Agreement and channel rate are independent diagnostics]
\label{rem:agreement-collapse}
There are code maps with $\mathcal A=1$ and $\Reff=0$ (set
$\Ce=\Cd\equiv 1$) and code maps with $\mathcal A=0$ and $\Reff=\log K$ (the
deranged construction above). Raw agreement, entropy, and mutual information
answer different questions and must be reported jointly. The construction
also extends beyond uniform marginals: for any $\{p_i\}$ with $K\ge 2$ and
at most one $p_i$ exceeding $1/2$, a derangement $\pi$ exists, and the joints
$P_{\rm id}(i,j)=p_i\mathbf 1\{j=i\}$ and $P_\pi(i,j)=p_i\mathbf 1\{j=\pi(i)\}$
again share encoder marginals/decoder marginals/MI but have $\mathcal A=1$
versus $\mathcal A=0$, so the impossibility is generic, not an artifact of
uniformity (Appendix~\ref{app:nonuniform-impossibility}).
\end{remark}

Necessity established, the next section shows that $\Ked$ is also auditable
through the VAE training loss: a one-dimensional binary-KL inequality bounds
inter-code interference by the variational gap.

\section{Variational Audit Certificate}
\label{sec:universal}

We show that the off-diagonal mass $1-\mathcal A$ of $\Ked$ is constrained by
the VAE variational gap whenever the gap is exactly enumerated or otherwise
controlled. The argument is a standard KL chain rule under disintegration
specialized to the binary disagreement event; the contribution is the
operational specialization, not the chain rule.

\paragraph{Classical pieces versus new specialization.}
The classical pieces are the ELBO/evidence identity, the KL chain rule under
disintegration, and convexity of KL. We package these as a Lemma
(\S\ref{sec:universal-thm}) rather than a Theorem precisely to be honest about
this status. The substantive (theorem-grade) step is the choice of the
post-processing statistic to be the encoder--decoder disagreement event
(\S\ref{sec:codebook-specialization}, Corollary~\ref{thm:codebook-identity}),
which converts an otherwise generic identity into an audit layer for a
coupled codebook failure mode that marginal VAE diagnostics provably miss.

\subsection{Universal post-processing decomposition}
\label{sec:universal-thm}

\begin{lemma}[Universal post-processing decomposition]
\label{thm:universal}
Assume $q_\phi(\cdot|x)\ll p_\theta(\cdot|x)$ for $p_{\rm data}$-a.e.\ $x$,
and let $T:\cZ\to\cY$ be measurable on a standard Borel space. With
$q_T(\cdot|x)=T_\#q_\phi(\cdot|x)$, $p_T(\cdot|x)=T_\#p_\theta(\cdot|x)$,
$\bar q_T=\E_x q_T(\cdot|x)$, $\bar p_T=\E_x p_T(\cdot|x)$, and
\[
\bar\rho_T=\E_x\E_{y\sim q_T(\cdot|x)}D_{\KL}\!\big(q_\phi(\cdot|x,T=y)\|p_\theta(\cdot|x,T=y)\big),
\]
\[
\mathcal J_T=\E_x D_{\KL}\!\big(q_T(\cdot|x)\|p_T(\cdot|x)\big)-D_{\KL}(\bar q_T\|\bar p_T),
\]
the average variational gap admits the non-negative decomposition
\begin{equation}
\label{eq:universal-identity}
\bar\Delta=D_{\KL}(\bar q_T\|\bar p_T)+\mathcal J_T+\bar\rho_T.
\end{equation}
\end{lemma}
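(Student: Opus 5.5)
The plan has three steps: identify the gap with a KL divergence between posteriors, split that KL with the chain rule for the pushforward $T$, and then average over $x$ and split the averaged marginal term by convexity. First I fix what $\bar\Delta$ means. The lemma does not restate it, so I take the standard ELBO/evidence identity: for each $x$, $\log p_\theta(x)-\mathrm{ELBO}(x)=D_{\KL}(q_\phi(\cdot|x)\|p_\theta(\cdot|x))$, where $p_\theta(\cdot|x)$ is the model posterior. Then $\bar\Delta=\E_x D_{\KL}(q_\phi(\cdot|x)\|p_\theta(\cdot|x))$. The absolute-continuity assumption makes the density ratio $dq_\phi(\cdot|x)/dp_\theta(\cdot|x)$ well defined, so this pointwise identity holds for $p_{\rm data}$-a.e.\ $x$.

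Second, fix such an $x$ and apply the KL chain rule under disintegration. Because $\cY$ is standard Borel, both $q_\phi(\cdot|x)$ and $p_\theta(\cdot|x)$ admit regular conditional distributions given $T$. Each measure then factors as its pushforward $q_T(\cdot|x)$ (respectively $p_T(\cdot|x)$) times a kernel $q_\phi(\cdot|x,T=y)$ (respectively $p_\theta(\cdot|x,T=y)$).

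The chain rule gives
\[
D_{\KL}(q_\phi(\cdot|x)\|p_\theta(\cdot|x))=D_{\KL}(q_T(\cdot|x)\|p_T(\cdot|x))+\E_{y\sim q_T(\cdot|x)}D_{\KL}\big(q_\phi(\cdot|x,T=y)\|p_\theta(\cdot|x,T=y)\big).
\]
To justify it, I write the Radon--Nikodym derivative as a product:
\[
\frac{dq}{dp}(z)=\frac{dq_T}{dp_T}(T(z))\cdot\frac{dq(\cdot|T=T(z))}{dp(\cdot|T=T(z))}(z),
\]
valid $q$-a.e. The first factor exists because $q_T\ll p_T$, which is inherited from $q\ll p$. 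The second factor exists because, for $q_T$-a.e.\ $y$, the conditional $q(\cdot|T=y)$ is absolutely continuous with respect to $p(\cdot|T=y)$. Taking the logarithm and integrating against $q$ then gives the chain rule.

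This measure-theoretic step is the one I expect to be the main obstacle. Two points need care: that the conditionals are absolutely continuous for $q_T$-a.e.\ $y$, and that the integrals split legitimately when some terms may be infinite. All three quantities are non-negative, so I would work in $[0,\infty]$ throughout and never subtract infinities. Alternatively, I would cite the general chain rule in Polyanskiy--Wu instead of re-deriving it.

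Third, average over $x\sim p_{\rm data}$. The second term of the chain rule becomes exactly $\bar\rho_T$ by definition. For the first term I add and subtract $D_{\KL}(\bar q_T\|\bar p_T)$, which produces $D_{\KL}(\bar q_T\|\bar p_T)+\mathcal J_T$ and hence equation~\eqref{eq:universal-identity}.

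Finally, non-negativity of each term. $D_{\KL}(\bar q_T\|\bar p_T)\ge 0$ by Gibbs' inequality. $\bar\rho_T\ge0$ because it is an average of KL divergences. $\mathcal J_T\ge0$ by joint convexity of KL: the pair $(\bar q_T,\bar p_T)$ is the $p_{\rm data}$-mixture of the pairs $(q_T(\cdot|x),p_T(\cdot|x))$, so
\[
D_{\KL}(\bar q_T\|\bar p_T)\le\E_x D_{\KL}(q_T(\cdot|x)\|p_T(\cdot|x)).
\]
For this step I need a version of joint convexity for general mixtures rather than finite ones. I would obtain it from the log-sum inequality, or equivalently from the chain rule applied to the joint law of $(x,T)$. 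In that view $\mathcal J_T$ is the conditional KL of $x$ given $T$, which also shows it is non-negative.

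If $D_{\KL}(\bar q_T\|\bar p_T)$ is infinite, then $\mathcal J_T$ is defined as $\infty-\infty$ and the subtraction is meaningless. So I would state the identity under the assumption $\bar\Delta<\infty$. In that case every term on the right is finite and the rearrangement is valid.
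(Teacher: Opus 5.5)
Your proposal is correct and follows the paper's proof essentially step for step: the ELBO/evidence identity gives $\bar\Delta=\E_x D_{\KL}(q_\phi(\cdot|x)\|p_\theta(\cdot|x))$, the KL chain rule under disintegration is applied pointwise in $x$, and averaging isolates $\mathcal J_T$ as the Jensen slack from joint convexity, with $\bar\Delta<\infty$ assumed exactly as the paper does. Your extra care (the Radon--Nikodym factorization, and reading $\mathcal J_T$ as the conditional KL of $x$ given $T$ so that joint convexity holds for general mixtures) only tightens the same argument; one small correction is that regular conditionals given $T$ exist because $\cZ$ is standard Borel, not because $\cY$ is.
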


\begin{proof}[Sketch]
The KL chain rule under disintegration by $T$ gives a code-level term plus a
conditional residual; averaging and applying Jensen's inequality to the
convexity of KL produces~\eqref{eq:universal-identity}. Full measure-theoretic
proof and equality conditions in Appendix~\ref{app:proof-universal}.
\end{proof}

The three terms are a layered audit: $D_{\KL}(\bar q_T\|\bar p_T)$ is
mismatch visible after applying $T$; $\mathcal J_T\geq 0$ is per-example
mismatch that cancels in the average; $\bar\rho_T\geq 0$ is mismatch hidden
inside fibers of $T$. The statistic $T$ must be chosen to match the
scientific question.

\subsection{Specialization to encoder--decoder disagreement}
\label{sec:codebook-specialization}

Setting $T=\mathbf 1_E$ for the disagreement event $E=\{c^{\star}_{\rm enc}\neq c^{\star}_{\rm dec}\}$
turns Lemma~\ref{thm:universal} into a one-dimensional Bernoulli-KL audit.

\begin{corollary}[Codebook disagreement identity and bound]
\label{thm:codebook-identity}
\label{cor:agreement}
With $E$ as above and
$d_{\rm bin}(a\|b)=a\log\tfrac{a}{b}+(1-a)\log\tfrac{1-a}{1-b}$,
\begin{equation}
\label{eq:main-identity}
\bar\Delta=d_{\rm bin}(1-\mathcal A\|\bar\eta_p)+\mathcal J_E+\bar\rho_E,
\quad \mathcal J_E,\bar\rho_E\geq 0.
\end{equation}
In particular,
\begin{equation}
\label{eq:bound}
d_{\rm bin}(1-\mathcal A\|\bar\eta_p)\leq\bar\Delta:
\end{equation}
when $\bar\Delta$ and $\bar\eta_p$ are exactly enumerated or otherwise
controlled, the off-diagonal interference rate $1-\mathcal A$ is constrained
by a one-dimensional Bernoulli-KL inequality. The same argument bounds any
binary event $E\subseteq\cZ$:
$d_{\rm bin}(\Pr_{\bar q}[E]\|\Pr_{\bar p}[E])\leq\bar\Delta$, with no
cluster, geometry, or finite-codebook assumption.
\end{corollary}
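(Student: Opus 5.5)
The corollary follows by applying Lemma~\ref{thm:universal} with $T=\mathbf 1_E:\cZ\to\cY=\{0,1\}$, where $E=\{c^{\star}_{\rm enc}\neq c^{\star}_{\rm dec}\}$. First I check the hypotheses. Because $c^{\star}_{\rm enc}$ and $c^{\star}_{\rm dec}$ are measurable maps into a finite set, the event $E=\bigcup_{i\neq j}\big((c^{\star}_{\rm enc})^{-1}(i)\cap(c^{\star}_{\rm dec})^{-1}(j)\big)$ is measurable. Hence $T$ is measurable into the standard Borel space $\{0,1\}$. The absolute-continuity assumption $q_\phi(\cdot|x)\ll p_\theta(\cdot|x)$ is inherited directly from the standing assumption of \S\ref{sec:universal}. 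The lemma therefore gives
\[
\bar\Delta=D_{\KL}(\bar q_T\|\bar p_T)+\mathcal J_T+\bar\rho_T,
\]
with $\mathcal J_T,\bar\rho_T\ge0$. I write $\mathcal J_E:=\mathcal J_T$ and $\bar\rho_E:=\bar\rho_T$.

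Next I identify the code-level term. The pushforwards $q_T(\cdot|x)$ and $p_T(\cdot|x)$ are Bernoulli laws with success probabilities $\eta_q(x)$ and $\eta_p(x)$ from Definition~\ref{def:disagreement-rates}. Averaging over $x$ by linearity (Fubini), $\bar q_T=\mathrm{Bern}(\bar\eta_q)$ and $\bar p_T=\mathrm{Bern}(\bar\eta_p)$. The KL divergence between two Bernoulli laws is exactly $d_{\rm bin}(\bar\eta_q\|\bar\eta_p)$. Substituting $\bar\eta_q=\Pr_{\bar q}[E]=1-\Aq=1-\mathcal A$, which is also noted in Definition~\ref{def:disagreement-rates}, yields \eqref{eq:main-identity}.

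Dropping the two nonnegative residuals then gives \eqref{eq:bound}. The closing remark about an arbitrary measurable event $E$ uses the same argument verbatim, because nothing above used the codebook structure beyond the measurability of $E$.

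The only delicate step is the boundary conventions for $d_{\rm bin}$, which are needed so that the identity holds in extended reals. These are the usual $0\log 0=0$ and $a\log(a/0)=+\infty$ for $a>0$. I expect this to resolve via absolute continuity. If $\bar\eta_p=0$, then $p_\theta(E|x)=0$ for a.e.\ $x$, so $q_\phi(E|x)=0$ and $\bar\eta_q=0$; the symmetric argument handles $\bar\eta_p=1$. In these cases $d_{\rm bin}$ is finite and equals $0$.

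I would also state that when $\bar\Delta=+\infty$ the inequality is trivial. When $\bar\Delta<\infty$, the lemma's decomposition has all terms finite, so no $\infty-\infty$ issue arises in defining $\mathcal J_E$.
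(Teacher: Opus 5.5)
Your proposal is correct and takes the same route as the paper's proof. It specializes Lemma~\ref{thm:universal} to $T=\mathbf 1_E$, identifies the pushforwards as $\mathrm{Ber}(\eta_q(\bx))$ and $\mathrm{Ber}(\eta_p(\bx))$, averages via Fubini to $\mathrm{Ber}(1-\mathcal A)$ and $\mathrm{Ber}(\bar\eta_p)$, uses that the Bernoulli KL equals $d_{\rm bin}$, and drops the two non-negative residuals. Your measurability check for $E$ and your treatment of the cases $\bar\eta_p\in\{0,1\}$ (via absolute continuity) and $\bar\Delta=\infty$ are sound details that the paper leaves implicit.
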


\subsection{What the certificate certifies}
\label{sec:cert-scope}

The certificate concerns the binary event $E$, not every cell of $\Ked$
separately, and is proof-backed only when $\bar\Delta$ is exact, enumerated,
or controlled by quadrature with explicit error bound; IWAE--ELBO tightness
gaps~\cite{burda2016importance} are scale diagnostics, not certified gaps.
$\beta$-VAE training minimizes $-\mathrm{ELBO}=\mathcal H_x+\bar\Delta$ rather
than $d_{\rm bin}(1-\mathcal A\|\bar\eta_p)$ directly; \eqref{eq:main-identity}
nonetheless shows $d_{\rm bin}$ is one of three non-negative summands of
$\bar\Delta$, so a smaller true gap jointly constrains the binary code-level
term, the Jensen residual, and the within-cell residual. This is an audit
identity for a fixed checkpoint, not a monotonicity theorem for training; any
timing pattern is reported as calibration. Several technical corollaries
(sufficient-statistic condition for $\bar\rho_T=0$, coarsening hierarchy
under $S=f\circ T$, $x$-dependent code maps) are deferred to
Appendix~\ref{app:universal-and-codebook}.

\section{Concrete VAE Code Maps}
\label{sec:theory}

The certificate works for any measurable $c^{\star}_{\rm enc},c^{\star}_{\rm dec}$.
For common VAE architectures we instantiate one rule on each side.

\paragraph{VAE setup.}
The $(\beta\text{-})$VAE objective is
$\mathcal L=\E_{q_\phi(z|x)}[\log p_\theta(x|z)]-\beta D_{\KL}(q_\phi(z|x)\|p(z))$,
recovering the standard VAE at $\beta=1$~\cite{kingma2014auto,higgins2017beta}.
The certificate uses the true model posterior
$p_\theta(z|x)\propto p_\theta(x|z)p(z)$, so the same $x$ must be encoder
input and decoder output; image-to-label conditional models are out of
scope (Appendix~\ref{rem:x-consistency}).

\begin{assumption}[Diagonal-Gaussian/product-Bernoulli class]
\label{assume:bernoulli-decoder}
\textbf{(a)} The encoder approximate posterior is a diagonal-Gaussian
$q_\phi(z|x)=\mathcal N(z;\mu_\phi(x),\mathrm{diag}(\sigma_\phi^2(x)))$;
\textbf{(b)} the decoder likelihood factorizes coordinate-wise as
$p_\theta(x|z)=\prod_k\mathrm{Bern}(x_k;d_\theta(z)_k)$ (or
product-Categorical), so $-\log p_\theta(x|z)=D_F(x\|d_\theta(z))+H(x)$ for a
Bregman generator $F$ and a $z$-independent term~\cite{banerjee2005clustering,loaiza2019continuous}.
\end{assumption}

\paragraph{Encoder code map.}
Summarize the aggregate encoder posterior by $K$ Gaussian components
$q_c(z)\approx\mathcal N(z;\mu_c,\Sigma_c)$ with weights $\pi_c$, and define
the posterior-classification rule
$c^{\star}_{\rm enc}(z)=\argmin_c\Phi_c(z)$ with
$\Phi_c(z)=\tfrac12(z-\mu_c)^\top\Sigma_c^{-1}(z-\mu_c)+\tfrac12\log|\Sigma_c|-\log\pi_c$.

\begin{theorem}[Encoder code map geometry]
\label{thm:encoder}
Under Assumption~\ref{assume:bernoulli-decoder}(a) and a $K$-component
Gaussian summary of the aggregate encoder posterior, the rule
$c^{\star}_{\rm enc}(z)=\argmin_c\Phi_c(z)$ produces:
\textbf{(i)} a standard Voronoi diagram when $\Sigma_c=\sigma^2I$ and
$\pi_c=1/K$;
\textbf{(ii)} an additively-weighted Voronoi diagram with weights
$r_c^2=2\sigma^2\log\pi_c$ when $\Sigma_c=\sigma^2I$ and $\pi_c$
varies~\cite{aurenhammer1987power};
\textbf{(iii)} a Mahalanobis/quadric decision diagram with semi-algebraic
(generally non-convex) cells when $\Sigma_i\ne\Sigma_j$
\cite{boissonnat2008mahalanobis}.
Lemma~\ref{thm:universal} requires only measurability, not convexity.
The proof is in Appendix~\ref{app:encoder-proof}.
\end{theorem}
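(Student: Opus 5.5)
The plan is to reduce every case to comparing the pairwise differences $\Phi_i(z)-\Phi_j(z)$. The cell of $c$ is $\{z:\Phi_c(z)\le\Phi_{c'}(z)\ \forall c'\}$, with ties broken by a fixed index order. The geometry of the diagram is therefore fixed by the form of the bisector sets $\{\Phi_i=\Phi_j\}$ and the half-spaces or sublevel sets they bound. Before the cases, I will record the measurability claim. Each $\Phi_c$ is continuous, being a quadratic polynomial plus constants. Hence each cell is a finite intersection of closed sets minus lower-index cells, so it is Borel. This means $c^{\star}_{\rm enc}$ is measurable, which is all Lemma~\ref{thm:universal} needs.

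\textbf{Case (ii), which contains case (i).} With $\Sigma_c=\sigma^2I$, the term $\tfrac12\log|\Sigma_c|$ is common to all $c$ and drops out. Multiplying by $2\sigma^2>0$ turns the rule into $\argmin_c\big(\|z-\mu_c\|^2-2\sigma^2\log\pi_c\big)$. This is a nearest-site rule whose squared distance carries an additive weight $r_c^2=2\sigma^2\log\pi_c$, which is the form cited in the statement. Expanding the squares cancels $\|z\|^2$, so every bisector is the affine hyperplane
\[
2\langle z,\mu_j-\mu_i\rangle=\|\mu_j\|^2-\|\mu_i\|^2-r_j^2+r_i^2 .
\]
The cells are therefore convex polyhedra: the weighted (power-type) diagram of \cite{aurenhammer1987power}. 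Setting $\pi_c=1/K$ makes all weights equal, so the hyperplanes become perpendicular bisectors of $\mu_i,\mu_j$. That is the ordinary Voronoi diagram of case (i).

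\textbf{Case (iii).} With $\Sigma_i\neq\Sigma_j$, the difference $\Phi_i-\Phi_j$ is a genuine quadratic polynomial in $z$ with leading form $\tfrac12 z^\top(\Sigma_i^{-1}-\Sigma_j^{-1})z$. Each bisector is then a quadric, and each cell is a finite Boolean combination of polynomial inequalities. That makes the cell semi-algebraic, as in \cite{boissonnat2008mahalanobis}.

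The main obstacle is the word ``generally non-convex'', and I will settle it with an explicit example. Take $d=1$, $K=2$, $\mu_1=\mu_2=0$, $\Sigma_1=s_1^2<\Sigma_2=s_2^2$, and equal weights. Then $\Phi_1<\Phi_2$ exactly when
\[
z^2\big(s_1^{-2}-s_2^{-2}\big)<\log\big(s_2^2/s_1^2\big).
\]
So cell 1 is a bounded interval and cell 2 is the union of two rays, which is not convex. Embedding this in $\mathbb R^d$ by adding isotropic extra coordinates shows the non-convexity occurs in any dimension. One point needs care: when $\Sigma_i^{-1}-\Sigma_j^{-1}$ happens to be semidefinite, some cells may still be convex. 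That is why the claim is ``generally'' non-convex rather than always, and I will state it that way.
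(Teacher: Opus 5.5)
Your proposal is correct and follows essentially the same route as the paper's proof in Appendix~\ref{app:encoder-proof}. For (i)--(ii) you drop the common constants and rescale by $2\sigma^2$ to get the weighted nearest-site rule, and for (iii) you read off the quadratic term $z^\top(\Sigma_i^{-1}-\Sigma_j^{-1})z$ of $\Phi_i-\Phi_j$. Your additions are refinements of steps the paper only asserts: the explicit affine bisectors (which correctly identify the (ii) diagram as Aurenhammer's power/Laguerre type), the Borel-measurability check, and the one-dimensional nested-variance example that certifies non-convexity.
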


\paragraph{Decoder code map.}\label{sec:decoder-codebook}%
Under Assumption~\ref{assume:bernoulli-decoder}(b), the reconstruction
term is a Bregman divergence $D_F(x\|d_\theta(z))$ plus a $z$-independent
entropy~\cite{loaiza2019continuous,banerjee2005clustering}; let
$d_c=d_\theta(\mu_c)$ and define
$c^{\star}_{\rm dec}(z)=\argmin_c D_F(d_\theta(z)\|d_c)$.
This is a Type-1 Bregman Voronoi rule in decoder-output space pulled back to
latent space~\cite{nielsen2010bregman}; Type-2 and symmetrized variants are
implementation checks, not requirements of the certificate.

\paragraph{Geometry is an instantiation, not the proof.}
$c^{\star}_{\rm enc}$ uses posterior geometry in $\cZ$; $c^{\star}_{\rm dec}$
uses reconstruction geometry in output space. Their disagreement event $E$
is exactly what Corollary~\ref{cor:agreement} bounds. The Fisher/Bregman
sufficient-condition diagnostic of Appendix~\ref{app:diagnostic-bound} is
a model-class-specific \emph{diagnostic} for explaining residual mismatch,
not a certified bound and not an assumption of the universal decomposition
or the certificate; it can be vacuous on individual checkpoints (see
Appendix~\ref{app:diagnostic-bound} for the Setting~1 numerical reading).

\section{Experiments and Audits}
\label{sec:experiments}

The empirical section has one job: show that $\Ked$ and the certificate are
auditable as numbers, not as proof-only statements. We do not claim
benchmark-level results; we claim that the formal objects are realizable and
that the certificate passes wherever it can be evaluated.

\subsection{Audit protocol}
\label{sec:audit-protocol}

We use four low-dimensional datasets (sklearn digits, wine, breast cancer,
two moons), five seeds per dataset, $800$ epochs, evaluation every $20$
epochs, and a $41\times 41$ latent grid for exact finite-grid posteriors.
Appendix~\ref{app:exact-audit} gives the full discretization procedure.
The audit certifies the induced grid law; a continuous-law certificate would
require an explicit quadrature error bound, which we do not claim.

\subsection{Finite-grid exact audit on four datasets}
\label{sec:finite-grid-audit}

\begin{table}[H]
\centering
\scriptsize
\resizebox{\linewidth}{!}{%
\begin{tabular}{@{}lccccccc@{}}
\toprule
Dataset & $\Amu$ & $\Reff/\log K$ & $H(\Ce)/\log K$ & $H(\Cd)/\log K$ & rate & AU & active codes $e/d$ \\
\midrule
digits        & $0.746\pm 0.079$ & $0.709\pm 0.076$ & $0.912\pm 0.056$ & $0.990\pm 0.012$ & $3.434\pm 0.064$ & $2$ & $10/10$ \\
wine          & $0.942\pm 0.020$ & $0.813\pm 0.054$ & $0.981\pm 0.009$ & $0.994\pm 0.003$ & $1.792\pm 0.077$ & $2$ & $3/3$   \\
breast cancer & $0.921\pm 0.014$ & $0.600\pm 0.048$ & $0.973\pm 0.007$ & $0.921\pm 0.017$ & $2.598\pm 0.090$ & $2$ & $2/2$   \\
moons         & $0.994\pm 0.007$ & $0.960\pm 0.047$ & $1.000\pm 0.000$ & $1.000\pm 0.000$ & $0.228\pm 0.009$ & $1$ & $2/2$   \\
\bottomrule
\end{tabular}}
\caption{\textbf{Cross-dataset deterministic mean-code diagnostics, five
seeds.} $\Amu$ is evaluated at the encoder mean and is \emph{not} the
certified $\Aq$ of Corollary~\ref{cor:agreement}. Entropy, normalized
$\Reff$, rate, AU, and active code counts rule out the degenerate
interpretation that high agreement is mere collapse. Digits remains
moderately interfering despite full active code usage---rate and AU alone do
not determine compatibility.}
\label{tab:cross-dataset-summary}

\centering
\scriptsize
\resizebox{\linewidth}{!}{%
\begin{tabular}{@{}lccccccc@{}}
\toprule
Dataset & $\bar\Delta_G$ & $d_{\rm bin}$ & code-pair KL & $\mathcal J_E+\bar\rho_E$ & $\Aq$ & $\Ap$ & valid seeds \\
\midrule
digits        & $0.2461\!\pm\!0.0354$ & $0.0018\!\pm\!0.0010$ & $0.0074\!\pm\!0.0019$ & $0.2443\!\pm\!0.0357$ & $0.700\!\pm\!0.094$ & $0.673\!\pm\!0.101$ & $5/5$ \\
wine          & $0.1333\!\pm\!0.0355$ & $0.0033\!\pm\!0.0042$ & $0.0059\!\pm\!0.0077$ & $0.1300\!\pm\!0.0353$ & $0.865\!\pm\!0.083$ & $0.838\!\pm\!0.107$ & $5/5$ \\
breast cancer & $0.1907\!\pm\!0.0246$ & $0.0007\!\pm\!0.0006$ & $0.0010\!\pm\!0.0008$ & $0.1899\!\pm\!0.0243$ & $0.902\!\pm\!0.016$ & $0.891\!\pm\!0.018$ & $5/5$ \\
moons         & $0.0056\!\pm\!0.0005$ & $0.0000\!\pm\!0.0000$ & $0.0002\!\pm\!0.0002$ & $0.0056\!\pm\!0.0005$ & $0.522\!\pm\!0.032$ & $0.522\!\pm\!0.033$ & $5/5$ \\
\bottomrule
\end{tabular}}
\caption{\textbf{Finite-grid exact audit, five seeds per dataset.} Both
$q_\phi(z|x)$ and $p_{\theta,G}(z|x)\propto p(z)p_\theta(x|z)$ are normalized
on the same grid. All twenty dataset--seed pairs satisfy
$d_{\rm bin}\leq\bar\Delta_G$ and the full code-pair pushforward-KL
inequality on the induced grid law. The residual-budget column is
$\bar\Delta_G-d_{\rm bin}=\mathcal J_E+\bar\rho_E$; small $d_{\rm bin}$
does \emph{not} mean disagreement is small---it means the binary code-level
component of $\bar\Delta_G$ is small relative to the Jensen and within-cell
residual budget. Algorithm~\ref{alg:exact-audit} gives the per-example
split used to audit these terms.}
\label{tab:finite-grid-audit-main}
\end{table}

\subsection{Audited regime: MNIST proof-of-concept}
\label{sec:mnist-audit}

The grid certificate is unavailable at $28\!\times\!28$ resolution. We
instead use IWAE--ELBO as a one-sided bound on $\bar\Delta$ and SNIS to
estimate $\bar\eta_p$; the certificate then becomes a one-sided audit
because IWAE underestimates the true gap.

\begin{table}[H]
\centering
\scriptsize
\resizebox{\linewidth}{!}{%
\begin{tabular}{@{}lccccccc@{}}
\toprule
Setting & $\bar\Delta_{\mathrm{IWAE}}^{K{=}100}$ & $\hat{\bar\eta}_p$ (SNIS) & $\mathrm{ESS}/K$ & $\hat{\mathcal A}_q$ & $d_{\rm bin}$ & residual budget & valid seeds \\
\midrule
MNIST $28{\times}28$ Conv-VAE, $K{=}10$ & $10.84\!\pm\!0.51$ & $0.094\!\pm\!0.012$ & $2.7/100$ & $0.784\!\pm\!0.007$ & $0.0682\!\pm\!0.0180$ & $10.77\!\pm\!0.51$ & $5/5$ \\
MNIST $28{\times}28$ VQ-VAE, $K{=}10$ & --- & --- & --- & $\mathbf{1.000}$ & --- & --- & $1/1$ \\
\bottomrule
\end{tabular}}
\caption{\textbf{MNIST audited regime.} Conv-VAE: Tier-1 audit
(Proposition~\ref{prop:spectrum}; Setting~2 architecture,
Appendix~\ref{app:experimental}); $d_{\rm bin}\!\leq\!\bar\Delta_{\rm IWAE}^{K{=}100}$
holds on $5/5$ seeds with residual budget $\sim\!160\times$ the binary
term. VQ-VAE: Tier-3 endpoint
(Proposition~\ref{prop:vqvae-reduction}, App.~\ref{app:vqvae-empirical});
Tier-1 quantities are undefined under the deterministic encoder
($q\!\not\ll\!p$), so only $\hat{\mathcal A}$ is reported; it attains the
predicted limit $\mathcal A\to 1$ exactly ($10{,}000/10{,}000$ agreement;
$K{=}10$ codes active).
The $\sim\!160\times$ slack is dominated by IWAE underestimation at
$\mathrm{ESS}/K\!=\!2.7/100$, not by structural defect; see
Limitation~(ii) and Appendix~\ref{app:bound-illustrations} for the slack
scaling with $d$ and proposal quality.}
\label{tab:mnist-audited}
\end{table}

\subsection{Reading the numbers}
\label{sec:reading-numbers}

Three things are visible across Tables~\ref{tab:cross-dataset-summary}--\ref{tab:mnist-audited}.
\emph{First}, $\Amu\neq\Aq$: on moons, $\Amu=0.994$ but $\Aq=0.522$, so the
posterior mean is nearly aligned while posterior mass crosses code
boundaries---a deterministic-mean diagnostic would have declared agreement
where the certified diagnostic shows interference.
\emph{Second}, the residual budget $\bar\Delta_G-d_{\rm bin}$ is
informative on its own: it reports how much of the gap remains after the
binary code-level term. Algorithm~\ref{alg:exact-audit} splits this budget
into Jensen and within-cell components; the main table reports the sum because
that is the quantity directly comparable across all datasets.
\emph{Third}, the slack scales with latent dimensionality: $\sim\!160\times$
on MNIST ($d{=}10$, IWAE-loose) tightens to $2.71\times$ on Setting~1
($d{=}2$, finite-grid exact); see Appendix~\ref{app:bound-illustrations}
for bound, identity, and decomposition diagnostics. Appendix~\ref{app:gmm-sensitivity} sweeps the code granularity $K$ to
make the code-map dependence explicit. A long-horizon dynamical illustration
is deferred to Appendix~\ref{app:bound-illustrations}
(Figure~\ref{fig:two-phase-multiseed}) as regime-specific calibration
evidence; we do not claim it as a training-dynamics law (a parallel sklearn
timing audit shows $0/20$).

\paragraph{Reproducibility.}
Architectures, optimizers, training, code-map construction, and the audit
discretization are documented in Appendix~\ref{app:experimental}; the
supplementary code regenerates every reported diagnostic.

\begin{figure}[t]
\centering
\IfFileExists{figures/fig_codebook_channel_heatmaps_main.png}{%
\includegraphics[width=\linewidth]{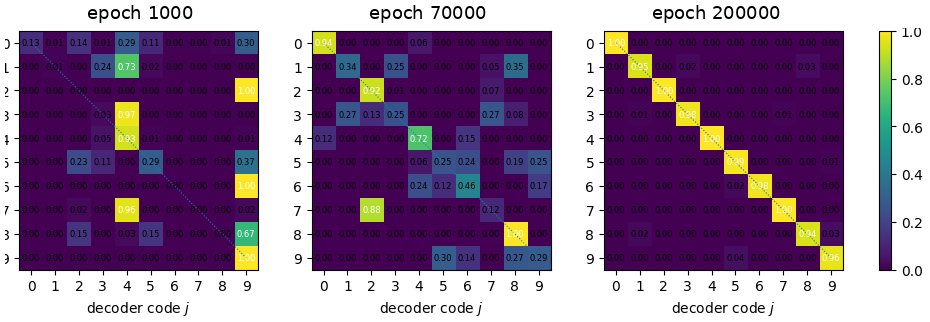}%
}{\IfFileExists{figures/fig_codebook_channel_heatmaps_main.pdf}{%
\includegraphics[width=\linewidth]{figures/fig_codebook_channel_heatmaps_main.pdf}%
}{%
\fbox{\parbox{0.85\linewidth}{\centering\small\textit{[Figure~\ref{fig:codebook-channel-heatmaps-main} placeholder.]}\\[2pt]\rule{0pt}{0.6cm}}}%
}}
\caption{\textbf{Empirical evolution of the neural codebook channel
$\Ked$ on Setting~1-long.} Row-normalized heatmaps of $\Ked(j|i)$ at
checkpoints $1{,}000$, $70{,}000$, and $200{,}000$. The matrix is the
diagnostic object of this paper; scalar $\Reff/\log K$ values are
$0.290 \to 0.560 \to 0.959$. The marginal-impossibility result
(Proposition~\ref{prop:marginal-impossibility}) and the variational-gap
certificate (Corollary~\ref{cor:agreement}) make this qualitative
content quantitative. Full long-run sequence:
Appendix~\ref{app:additional-figures}, Figure~\ref{fig:codebook-channel-heatmaps-full};
long-horizon $L$/$\mathcal A$ trajectory:
Appendix~\ref{app:bound-illustrations}, Figure~\ref{fig:two-phase-multiseed}.}
\label{fig:codebook-channel-heatmaps-main}
\end{figure}


\section{Discussion}
\label{sec:limitations}
\label{sec:discussion}

\subsection{Limitations}

Five qualifications constrain the framework.
(i) Code maps $c^{\star}_{\rm enc},c^{\star}_{\rm dec}$ are researcher-chosen,
and different choices produce different within-cell residuals $\bar\rho_T$
in~\eqref{eq:universal-identity}; this is a feature (it aligns the diagnostic
with the downstream task) but $\Ked$ values are not portable across
specifications, so reporting agreement without the maps that produced it
reports almost nothing.
(ii) The Bernoulli-KL audit~\eqref{eq:bound} is proof-backed only when
$\bar\Delta$ is certified---IWAE--ELBO is a scale diagnostic, not a certified
gap~\cite{burda2016importance}---and even when the audit passes, the bound is
one-sided: the MNIST audited regime has $d_{\rm bin}\!\approx\!0.07$ versus
$\bar\Delta_{\mathrm{IWAE}}^{K=100}\!\approx\!10.8$ ($\sim\!160\times$ slack
at $\mathrm{ESS}/K\!=\!2.7/100$),
so the gap is dominated by the residual budget $\mathcal J_E+\bar\rho_E$
and the bound is an audit, not a sharp estimator of $1-\mathcal A$.
The slack scales with latent dimensionality and proposal quality
(Setting~1: $2.71\times$ at $d{=}2$, finite-grid exact); a tighter regime
($\mathrm{ESS}/K\!\gtrsim\!0.5$) is the principled lever to sharpen the
audit, not a structural defect of~\eqref{eq:bound}.
(iii) $\mathcal A=1$ is trivially achievable when
$c^{\star}_{\rm enc}\equiv c^{\star}_{\rm dec}\equiv\text{const}$, with
$\Reff$, rate, and AU all collapsing to zero; the reporting unit is
therefore the full package $(\Ked,\mathcal A,\Reff,R,\mathrm{AU})$, not
$\mathcal A$ alone.
(iv) The Bregman/Fisher analyses (\S\ref{sec:theory},
Appendix~\ref{app:diagnostic-bound}) are sufficient conditions, not hypotheses
of Lemma~\ref{thm:universal} or Corollary~\ref{cor:agreement}; the
certificate operates on any measurable hard-code maps with
$q_\phi(\cdot|x)\ll p_\theta(\cdot|x)$.
(v) The framework is diagnostic, not prescriptive: $\beta$-VAE training
implicitly bounds $d_{\rm bin}(1-\mathcal A\|\bar\eta_p)$ via $\bar\Delta$
with no implied monotonicity of $\mathcal A$; the plateau-then-takeoff
pattern (Appendix Figure~\ref{fig:two-phase-multiseed}) is regime-dependent
calibration (sklearn audit: $0/20$), not a training-dynamics theorem.

\subsection{Future work}

The most natural extensions follow from the limitations above. (a) A
\emph{continuous-law certificate with explicit quadrature error} would lift
the proof-backed claim from finite-grid posteriors to continuous latents;
the slack scaling ($2.71\times$ at $d{=}2$ finite-grid, $\sim\!160\times$
at $d{=}10$ IWAE-loose) suggests a roadmap toward $d\!\geq\!64$
Conv-VAE-large or CelebA-32 with higher-quality SNIS proposals
($\mathrm{ESS}/K\!\gtrsim\!0.5$) tightening the bound at scale.
(b)~A training-dynamics theorem,
(c)~an $\mathcal A$-aware training objective via a Bernoulli-KL surrogate,
and (d)~mechanistic-interpretability instantiations on SAE
dictionaries~\cite{cunningham2023sparse,elhage2022superposition}---together
with multimodal extensions and signaling-equilibrium
connections~\cite{lewis1969convention,lazaridou2017multi}---are further open.

\subsection{Broader implications}

The framework reframes a question that VAE practitioners have been asking
implicitly whenever they cluster, retrieve, conditionally generate, or
probe a trained latent space: \emph{the latents you are using as codes---are
they read by the decoder under the same code that the encoder placed them
in?} Existing diagnostics cannot answer this question;
Proposition~\ref{prop:marginal-impossibility} shows it is strictly more than
a marginal usage question. The paper does not argue that ELBO, rate, AU, MI,
or disentanglement are unhelpful---each remains a valid answer to its own
question---but identifies the one they collectively leave open and provides a
self-contained diagnostic unit that closes it. The construction is portable
to any continuous representation later treated as a discrete code, making
mismatched decoding---the failure mode classical communication theory named
decades ago~\cite{shannon1948,scarlett2020mismatched}---visible inside a
single deep generative model.


\section{Related Work}
\label{sec:related-work}

Classical communication theory names the failure mode we study---mismatched
decoding~\cite{scarlett2020mismatched} and noisy-channel
VQ~\cite{farvardin1990vq,proakis2008digital,shannon1948}---for externally
specified channels; we transport the vocabulary inward to the
encoder--decoder pair of a single VAE, with VQ-VAE~\cite{vandenoord2017neural}
sitting at one extreme as a constrained endpoint where $\bar\rho_E$ vanishes
by construction. Standard VAE
diagnostics---ELBO/IWAE~\cite{kingma2014auto,rezende2014stochastic,burda2016importance},
rate--distortion~\cite{alemi2018fixing}, active units,
disentanglement~\cite{higgins2017beta,chen2018isolating}, and
information-theoretic quantities~\cite{tishby2000information,cover2006elements}---%
are marginal: they answer whether the channel is used, not whether the
encoder's code is read consistently;
Proposition~\ref{prop:marginal-impossibility} formalizes this gap and
$(\Ked,\mathcal A,\Reff,R,\mathrm{AU})$ augments existing diagnostics rather
than displacing them.
Identifiability work~\cite{hyvarinen2019nonlinear,khemakhem2020ivae,locatello2019challenging}
states a related indeterminacy asymptotically; Proposition~\ref{prop:marginal-impossibility}
contributes a finite-population complement.
Across-representation alignment work~\cite{moschella2022relative,fumero2024latentfunctional,fumero2026latentdynamics,huh2024platonic}
compares \emph{different} models; Corollary~\ref{cor:agreement} is, to our
knowledge, the first variational-gap-based bound on \emph{intra-model}
agreement. Geometric instantiations~\cite{banerjee2005clustering,nielsen2010bregman,aurenhammer1987power,boissonnat2008mahalanobis,amari2000methods,arvanitidis2018latent}
of \S\ref{sec:theory} are not required by Lemma~\ref{thm:universal} or
Corollary~\ref{cor:agreement} (Appendix~\ref{app:diagnostic-bound}).
\emph{Encoder--decoder consistency.}
Cemgil et al.~\cite{cemgil2020avae} and Wang et al.~\cite{wang2020coupled}
propose training-time fixes (self-consistency objective; autoencoder branch);
Dang et al.~\cite{dang2024posterior} detect marginal-side posterior collapse
in hierarchical/conditional VAEs. None reports a coupled finite-codebook
channel $\Ked$ whose off-diagonal mass is bounded by a Bernoulli-KL
specialisation of the variational gap (Corollary~\ref{cor:agreement}).

\bibliographystyle{plain}

\newpage
\appendix
%
%

\section{Proof of the Universal Decomposition (Lemma~\ref{thm:universal})
         and the Codebook Specialisation (Corollary~\ref{thm:codebook-identity})}
\label{app:universal-and-codebook}

This appendix proves the two architecture-free identities on which the
rest of the paper rests. The universal decomposition
(\S\ref{app:proof-universal}) follows from three classical facts---
the ELBO/evidence identity, the KL chain rule under disintegration,
and a Jensen residual---applied to an arbitrary measurable
post-processing $T:\mathcal{Z}\to\mathcal{Y}$.
The codebook specialisation (\S\ref{app:proof-codebook-specialization})
is the binary instance $T=\mathbf 1[\bz\in E]$.
The empirical estimator used in \S\ref{sec:experiments}
is recorded in \S\ref{app:empirical-estimator}.

\subsection{Proof of Lemma~\ref{thm:universal}}
\label{app:proof-universal}

Throughout, $(q_\phi,p_\theta)$ are encoder and true-posterior
conditional densities with $q_\phi(\cdot|\bx)\ll p_\theta(\cdot|\bx)$
for $p_{\mathrm{data}}$-a.e.\ $\bx$ and $\bar\Delta<\infty$.
Let $T:\mathcal{Z}\to\mathcal{Y}$ be measurable, $\mathcal{Y}$ standard
Borel.

\paragraph{Step 1 (ELBO/evidence identity).}
Bayes' rule gives
$\log p_\theta(\bz|\bx)=\log p_\theta(\bx|\bz)+\log p(\bz)-\log p_\theta(\bx)$;
substituting into $\Delta(\bx)=\E_{q_\phi}[\log q_\phi(\bz|\bx)-\log p_\theta(\bz|\bx)]$
yields the pointwise identity $\mathcal{F}(\bx)=-\log p_\theta(\bx)+\Delta(\bx)$,
hence $\mathcal{F}=\mathcal{H}_\bx+\bar\Delta$ on average. This step is
$T$-independent.

\paragraph{Step 2 (KL chain rule under disintegration).}
For probability measures $\mu\ll\nu$ on a standard Borel space and
any measurable $T$, the disintegration theorem~\cite[Thm.~2.16]{polyanskiy2024information}
gives regular conditionals and the equality
\begin{equation}
\label{eq:kl-chain-rule}
D_{\KL}(\mu\|\nu)
= D_{\KL}(T_*\mu\|T_*\nu)
+ \int_{\mathcal{Y}} D_{\KL}\!\bigl(\mu(\cdot|T{=}y)\,\big\|\,\nu(\cdot|T{=}y)\bigr)\,d(T_*\mu)(y).
\end{equation}
This is an equality for any measurable $T$; sufficiency would make the
integral vanish.

\paragraph{Step 3 (pointwise $T$-decomposition).}
Apply~\eqref{eq:kl-chain-rule} with $\mu=q_\phi(\cdot|\bx)$,
$\nu=p_\theta(\cdot|\bx)$. Writing $q_T(\cdot|\bx):=T_*q_\phi(\cdot|\bx)$,
$p_T(\cdot|\bx):=T_*p_\theta(\cdot|\bx)$, and
\begin{equation}
\label{eq:rho-T-def}
\rho(\bx;T)
:=\!\int_{\mathcal{Y}}\!
D_{\KL}\!\bigl(q_\phi(\cdot|\bx,T{=}y)\,\big\|\,p_\theta(\cdot|\bx,T{=}y)\bigr)
\,dq_T(y|\bx)\;\ge\;0,
\end{equation}
gives
\begin{equation}
\label{eq:pointwise-T-decomp}
\Delta(\bx) \;=\; D_{\KL}\!\bigl(q_T(\cdot|\bx)\,\big\|\,p_T(\cdot|\bx)\bigr) \;+\; \rho(\bx;T).
\end{equation}

\paragraph{Step 4 (Jensen residual).}
KL is jointly convex on probability measures
\cite[Thm.~2.7.2]{cover2006elements}. Averaging~\eqref{eq:pointwise-T-decomp}
over $\bx$ and applying Jensen to the mixture over $\bx$,
\begin{equation}
\label{eq:jensen-T}
D_{\KL}\!\bigl(\bar q_T\,\big\|\,\bar p_T\bigr)
\;\le\;
\E_\bx\!\bigl[D_{\KL}(q_T(\cdot|\bx)\|p_T(\cdot|\bx))\bigr],
\end{equation}
with slack
\begin{equation}
\label{eq:JT-def}
\mathcal{J}_T
\;:=\;
\E_\bx[D_{\KL}(q_T(\cdot|\bx)\|p_T(\cdot|\bx))]
- D_{\KL}(\bar q_T\|\bar p_T)
\;\ge\; 0.
\end{equation}

\paragraph{Step 5 (assembly).}
Combining steps 3--4 gives
\begin{equation}
\label{eq:universal-identity-proof}
\bar\Delta
\;=\;
D_{\KL}(\bar q_T\|\bar p_T) + \mathcal{J}_T + \bar\rho_T,
\qquad \bar\rho_T:=\E_\bx[\rho(\bx;T)],
\end{equation}
which is Lemma~\ref{thm:universal}. Non-negativity of all three
summands is by construction. \hfill$\square$

\subsection{Proof of Corollary~\ref{thm:codebook-identity}}
\label{app:proof-codebook-specialization}

Take $\mathcal{Y}=\{0,1\}$ and $T(\bz)=\mathbf{1}[\bz\in E]$, with
$E=\{\bz:c^\star_{\mathrm{enc}}(\bz)\ne c^\star_{\mathrm{dec}}(\bz)\}$
the disagreement event of Definition~\ref{def:disagreement-rates}.

\paragraph{(i) Pushforwards.}
By the definitions of $\eta_q,\eta_p$ in~\eqref{eq:eta-defs},
$q_T(\cdot|\bx)=\mathrm{Ber}(\eta_q(\bx))$ and
$p_T(\cdot|\bx)=\mathrm{Ber}(\eta_p(\bx))$, hence
$\bar q_T=\mathrm{Ber}(\bar\eta_q)$, $\bar p_T=\mathrm{Ber}(\bar\eta_p)$.

\paragraph{(ii) KL divergences.}
$D_{\KL}(\mathrm{Ber}(a)\|\mathrm{Ber}(b))=d_{\mathrm{bin}}(a\|b)$;
Fubini gives $\bar\eta_q=1-\mathcal{A}$.

\paragraph{(iii) Conditional residual.}
For binary $T$, the disintegration is two atoms, so~\eqref{eq:rho-T-def}
becomes the convex combination $\rho(\bx)$ of Corollary~\ref{thm:codebook-identity}.

Substituting (i)--(iii) into~\eqref{eq:universal-identity-proof}:
\[
\bar\Delta
\;=\; d_{\mathrm{bin}}(1-\mathcal{A}\|\bar\eta_p) + \mathcal{J} + \bar\rho,
\]
which is~\eqref{eq:main-identity}. Combining with Step~1 of
\S\ref{app:proof-universal} yields $\mathcal{F}=\mathcal{H}_\bx+\bar\Delta$.
\hfill$\square$

\subsection{Empirical estimator for $\mathcal{J}$ and $\bar\rho$}
\label{app:empirical-estimator}

For a trained VAE we estimate $\mathcal{J}$ from its definition:
compute $(\eta_q(\bx),\eta_p(\bx))$ via Algorithm~\ref{alg:eta-p}
(\S\ref{app:eta-p-algorithm}) for each test point $\bx$,
form $\bar\eta_q,\bar\eta_p$ by sample average, evaluate
$d_{\mathrm{bin}}(\eta_q(\bx)\|\eta_p(\bx))$, average, and subtract
$d_{\mathrm{bin}}(\hat{\bar\eta}_q\|\hat{\bar\eta}_p)$. The residual
$\bar\rho$ is then estimated by difference.

A two-batch independent check on Setting~1 (Batch~A:
IWAE+ELBO for the aggregate $\Delta_{\mathrm{agg}}$;
Batch~B: conditional-KL first-principles for $\hat\rho+\hat d_{\mathrm{bin}}$)
yields an aggregate identity residual of $\RESIDUALAGG$ nats
($1.8\%$ of $\Delta_{\mathrm{agg}}\approx\EDELTAAGG$ nats), confirming
that the estimator is numerically consistent with the proven identity.
This is a numerical consistency check on the estimator; the identity
itself is proved above and does not depend on it.

\paragraph{Per-example resolution and four-term decomposition.}
Figure~\ref{fig:per-example-identity} promotes the aggregate check to a
per-example scatter on Setting~1: across $n{=}1{,}500$ test points the
empirical residual $\hat\Delta^A(\bx) - (\hat d_{\rm bin}^B(\bx) +
\hat\rho^B(\bx))$ has mean $4{\times}10^{-3}$ nats and standard deviation
$0.145$ nats, verifying the point-wise identity of
Lemma~\ref{thm:universal} under finite-sample Monte Carlo.
Figure~\ref{fig:free-energy-stack} renders the four non-negative
components $F_\infty,\, d_{\rm bin}(1{-}\mathcal A\,\|\,\bar\eta_p),\,
\mathcal J,\, \bar\rho$ as a stacked bar against the measured $\bar F$,
closing the decomposition identity to within $10^{-4}$ tolerance.
The structural finding visible in the stack is that
$d_{\rm bin}\!=\!1.9\!\times\!10^{-5}$ is several orders of magnitude
smaller than $\bar\rho\!=\!1.97\!\times\!10^{-1}$: on Setting~1 the
binary code-level term contributes a vanishing fraction of the
variational gap, with the within-cell residual carrying almost all of it.

\begin{figure}[h]
\centering
\IfFileExists{figures/fig_per_example_identity.pdf}{%
\includegraphics[width=0.85\linewidth]{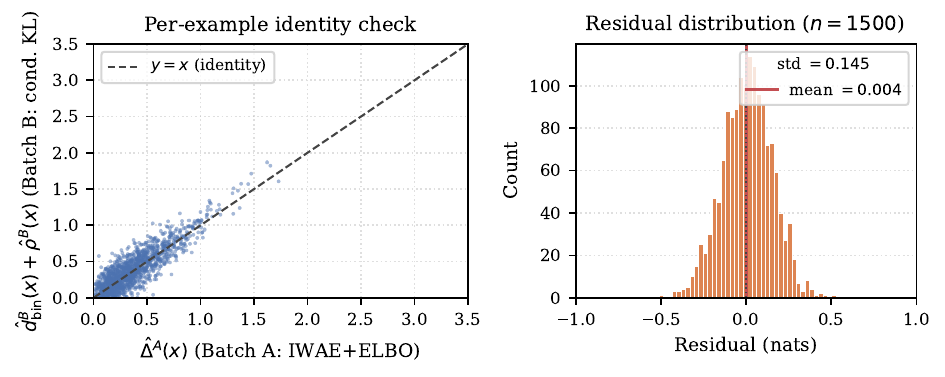}%
}{\fbox{\parbox{0.85\linewidth}{\centering\small\textit{[Figure~\ref{fig:per-example-identity} placeholder.]}\\[2pt]\rule{0pt}{0.6cm}}}}
\caption{\textbf{Per-example identity check on Setting~1
($n{=}1{,}500$ test points).} Left: scatter of $\hat\Delta^A(\bx)$
(Batch~A: IWAE$+$ELBO) versus $\hat d^B_{\rm bin}(\bx)+\hat\rho^B(\bx)$
(Batch~B: conditional KL first-principles). Right: residual histogram
with mean $4\!\times\!10^{-3}$ and std $0.145$ nats. The point-wise
identity of Lemma~\ref{thm:universal}, Eq.~\eqref{eq:universal-identity},
is verified at per-example resolution; the dashed line is $y{=}x$.
This complements the aggregate check of
Figure~\ref{fig:identity-verification} (\S\ref{app:additional-figures}).}
\label{fig:per-example-identity}
\end{figure}

\begin{figure}[h]
\centering
\IfFileExists{figures/fig_free_energy_stack.pdf}{%
\includegraphics[width=0.92\linewidth]{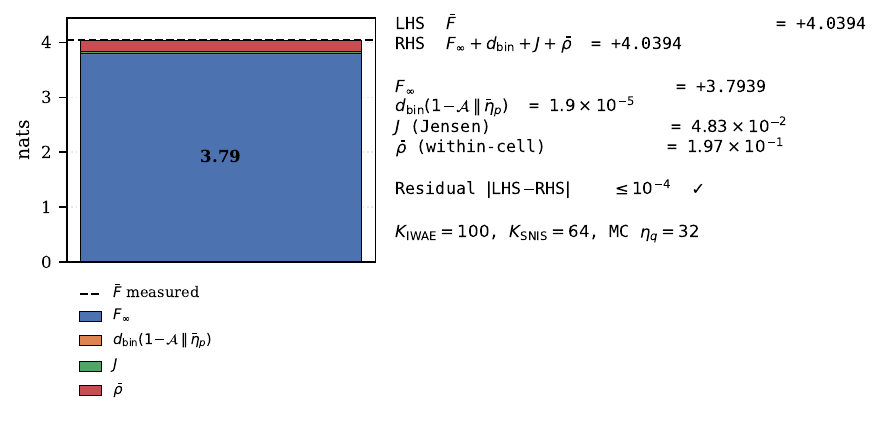}%
}{\fbox{\parbox{0.85\linewidth}{\centering\small\textit{[Figure~\ref{fig:free-energy-stack} placeholder.]}\\[2pt]\rule{0pt}{0.6cm}}}}
\caption{\textbf{Free energy decomposition on Setting~1.}
Stacked bar of the four non-negative components of $\bar F$ from
Corollary~\ref{thm:codebook-identity}: $F_\infty$ (blue, dominant);
$d_{\rm bin}(1{-}\mathcal A\,\|\,\bar\eta_p)$ (orange, near-zero);
Jensen residual $\mathcal J$ (green); within-cell residual $\bar\rho$
(red). The dashed line marks $\bar F$ measured directly by IWAE.
The right panel records the numerical readout: LHS$\!=\!$RHS to within
$10^{-4}$ with $K_{\rm IWAE}\!=\!100$, $K_{\rm SNIS}\!=\!64$, MC
$\eta_q$ samples $32$. The empirical observation that
$\bar\rho\gg d_{\rm bin}$ on this checkpoint is a structural reading of
the diagnostic, not a property of the bound.}
\label{fig:free-energy-stack}
\end{figure}

\subsection{Non-uniform marginal extension of Proposition~\ref{prop:marginal-impossibility}}
\label{app:nonuniform-impossibility}

The construction in Proposition~\ref{prop:marginal-impossibility} uses
uniform marginals for clarity; we record here the extension to arbitrary
$\{p_i\}_{i=1}^K$ promised in Remark~\ref{rem:agreement-collapse}.

\paragraph{Construction.}
For any encoder marginal $\{p_i\}_{i=1}^K$ with $p_i>0$, fix
$\Pr[\Ce=i]=p_i$. The deterministic diagonal joint
$P_{\rm id}(i,j)=p_i\mathbf 1\{j=i\}$ and any deterministic permutation
$P_\pi(i,j)=p_i\mathbf 1\{j=\pi(i)\}$ both have:
encoder marginal $\{p_i\}$;
decoder marginal $\{p_{\pi^{-1}(j)}\}$ (a relabeling of $\{p_i\}$,
hence identical entropy $H(\{p_i\})$);
and mutual information $I(\Ce;\Cd)=H(\{p_i\})$.
However, $\mathcal A(P_{\rm id})=1$ while
$\mathcal A(P_\pi)=\sum_{i:\pi(i)=i}p_i$.

\paragraph{Quantitative ambiguity bound.}
For fixed marginals, the worst-case ambiguity in $\mathcal A$ across
permutations consistent with those marginals is
\[
  \max_{\pi\in S_K}\mathcal A(P_\pi)=\sum_{i:\pi^\star(i)=i}p_i
  \;-\;\min_{\pi\in S_K}\mathcal A(P_\pi),
\]
where $\pi^\star$ maximises and the minimum is achieved by any derangement
when one exists. A derangement of $\{1,\ldots,K\}$ exists whenever
$K\geq 2$ and at most one $p_i$ exceeds $1/2$ (so that no single index is
forced as a fixed point by a marginal-matching argument). Under this very
mild condition, $\min_\pi\mathcal A(P_\pi)=0$ and
$\max_\pi\mathcal A(P_\pi)=1$, recovering exactly the $\mathcal A=0$ vs
$\mathcal A=1$ separation of Proposition~\ref{prop:marginal-impossibility}.

\paragraph{What this rules out.}
A reviewer might hope that highly non-uniform marginals (low marginal
entropy) attenuate the impossibility — e.g., by forcing the decoder
distribution to concentrate. The bound above shows this is not the case:
\emph{the ambiguity does not shrink with marginal entropy}, so reporting
encoder and decoder marginals (and their entropies) cannot localise
$\mathcal A$ at any value in $[0,1]$. The coupled channel $\Ked$ remains
the strict required reporting unit even in the heavily non-uniform regime.

\section{Binary-KL Variational Bound (Corollary~\ref{cor:agreement}):
         Proof, Estimator, and Explicit Forms}
\label{app:binary-kl-bound}

This appendix records the data-processing proof of
Corollary~\ref{cor:agreement} (an alternative reading of
Corollary~\ref{thm:codebook-identity}), the SNIS estimator for the
model-posterior reference rate $\bar\eta_p$, and the explicit forms
(Pinsker / Bretagnolle--Huber / numerical bisection)
together with their equality conditions.

\subsection{Proof of Corollary~\ref{cor:agreement}(b)}
\label{app:cor18-proof}

Part~(a) is the limit $\bar\Delta\to 0$ of part~(b), so we prove (b).

\paragraph{Step 1 (post-processing).}
$T(\bz):=\mathbf{1}[\bz\in E]$ is measurable and
$T_\#q_\phi(\cdot|\bx)=\mathrm{Ber}(\eta_q(\bx))$,
$T_\#p_\theta(\cdot|\bx)=\mathrm{Ber}(\eta_p(\bx))$.

\paragraph{Step 2 (DPI).}
The data processing inequality for KL
\cite[Thm.~7.4]{polyanskiy2024information} gives, pointwise in $\bx$,
\begin{equation}
\label{eq:dpi-pointwise}
d_{\mathrm{bin}}(\eta_q(\bx)\|\eta_p(\bx))
\;\le\;
\Delta(\bx).
\end{equation}

\paragraph{Step 3 (joint convexity).}
$d_{\mathrm{bin}}$ is jointly convex
\cite[Thm.~2.7.2]{cover2006elements}. Jensen under
$\E_{\bx\sim p_{\mathrm{data}}}$ gives
\begin{equation}
\label{eq:dpi-avg}
d_{\mathrm{bin}}\!\bigl(\E_\bx\eta_q(\bx)\,\big\|\,\E_\bx\eta_p(\bx)\bigr)
\;\le\;
\E_\bx d_{\mathrm{bin}}(\eta_q(\bx)\|\eta_p(\bx))
\;\le\;
\bar\Delta.
\end{equation}

\paragraph{Step 4 (identification).}
By Fubini and Definition~\ref{def:disagreement-rates},
$\E_\bx\eta_q(\bx)=1-\mathcal{A}$. Substituting yields
$d_{\mathrm{bin}}(1-\mathcal{A}\|\bar\eta_p)\le\bar\Delta$. \hfill$\square$

\subsection{SNIS estimator for $\bar\eta_p$}
\label{app:eta-p-algorithm}

$\bar\eta_p$ is an expectation under the intractable true posterior;
we estimate it by self-normalised importance sampling with $q_\phi$
as proposal. For each $\bx$ we draw $K$ latent samples
$\bz_1,\ldots,\bz_K\sim q_\phi(\cdot|\bx)$, form unnormalised weights
$\tilde w_k(\bx)=p_\theta(\bx|\bz_k)p(\bz_k)/q_\phi(\bz_k|\bx)$,
self-normalise, and estimate
$\hat\eta_p(\bx)=\sum_k w_k(\bx)\mathbf 1[\bz_k\in E]$.
Averaging over a test batch yields $\hat{\bar\eta}_p$, consistent as
$K\to\infty$. This reuses the IWAE-style importance-sampling
infrastructure~\cite{burda2016importance}; the IWAE--ELBO gap is
reported only as a heuristic tightness diagnostic, never as a
certified value of $\bar\Delta$.

\begin{algorithm}[h]
\caption{SNIS estimator of $\bar\eta_p$ with diagnostics}
\label{alg:eta-p}
\begin{algorithmic}[1]
\Require trained VAE $(q_\phi,p_\theta,p)$, batch $\mathcal{B}$, $K$
\State collect $\{\hat\eta_p(\bx)\}_{\bx\in\mathcal{B}}$ and $\{\mathrm{ESS}(\bx)\}_{\bx\in\mathcal{B}}$
\For{$\bx\in\mathcal{B}$}
\State sample $\bz_1,\ldots,\bz_K\sim q_\phi(\cdot|\bx)$
\State $\tilde w_k\!\gets\! p_\theta(\bx|\bz_k)p(\bz_k)/q_\phi(\bz_k|\bx)$;\;
       $w_k\!\gets\!\tilde w_k/\sum_j\tilde w_j$
\State $\hat\eta_p(\bx)\!\gets\!\sum_k w_k\mathbf 1[c^\star_{\mathrm{enc}}(\bz_k)\!\ne\! c^\star_{\mathrm{dec}}(\bz_k)]$
\State $\mathrm{ESS}(\bx)\!\gets\!1/\sum_k w_k^2$ \Comment{Kong (1992)}
\EndFor
\State $\hat{\bar\eta}_p\!\gets\!|\mathcal{B}|^{-1}\!\sum_\bx\!\hat\eta_p(\bx)$;\;
       $\widehat{\mathrm{SE}}\!\gets\!\sqrt{|\mathcal{B}|^{-1}\mathrm{Var}_\bx[\hat\eta_p]}$
\State \Return $(\hat{\bar\eta}_p,\;
       \overline{\mathrm{ESS}},\;
       [\hat{\bar\eta}_p\pm 1.96\widehat{\mathrm{SE}}])$
\end{algorithmic}
\end{algorithm}

On Setting~1 (30K, $K{=}100$, $|\mathcal{B}|{=}1797$), the estimator
returns $\hat{\bar\eta}_p=\ETAP$ with mean ESS
$\overline{\mathrm{ESS}}\approx\ETAESS/100$
(range $[\ETAESSMIN,\ETAESSMAX]$) and 95\% batch-level CI
$[\ETACILOW,\ETACIHIGH]$. These diagnostics confirm that the SNIS
proposal is well matched and that the heuristic numerical comparison
$1{-}\hat{\mathcal{A}}_{30\mathrm K}\le\BOUNDNUMERIC$ is not dominated
by SNIS variance. Proof-backed numerical evidence is the finite-grid
audit (\S\ref{app:exact-audit}).

\subsection{Explicit forms of the bound}

\begin{corollary}[Pinsker]\label{cor:pinsker}
$d_{\mathrm{bin}}(p\|q)\ge 2(p-q)^2$ in~\eqref{eq:bound} gives
$1-\mathcal{A}\le\bar\eta_p+\sqrt{\bar\Delta/2}$.
\end{corollary}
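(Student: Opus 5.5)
We combine the certificate \eqref{eq:bound} of Corollary~\ref{cor:agreement} with Pinsker's inequality for Bernoulli laws, then solve the resulting quadratic inequality for $1-\mathcal A$.

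\textbf{Step 1 (binary Pinsker).} Set $a=1-\mathcal A$ and $b=\bar\eta_p$. The total variation distance between $\mathrm{Ber}(a)$ and $\mathrm{Ber}(b)$ is $|a-b|$. Pinsker's inequality $D_{\KL}\ge 2\,\mathrm{TV}^2$ (in nats) therefore gives $d_{\rm bin}(a\|b)\ge 2(a-b)^2$.

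If a self-contained argument is preferred, fix $a$ and set $g(b)=d_{\rm bin}(a\|b)-2(a-b)^2$. Then $g(a)=0$, and
\[
g'(b)=\frac{b-a}{b(1-b)}-4(b-a)=(b-a)\Big(\frac{1}{b(1-b)}-4\Big).
\]
Since $b(1-b)\le 1/4$, the bracket is nonnegative. So $g'$ has the sign of $b-a$, which makes $a$ the minimizer of $g$ and hence $g\ge 0$.

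\textbf{Step 2 (chain with the certificate).} Corollary~\ref{cor:agreement} gives $d_{\rm bin}(1-\mathcal A\|\bar\eta_p)\le\bar\Delta$. Combining with Step 1 yields $2(1-\mathcal A-\bar\eta_p)^2\le\bar\Delta$, so
\[
|1-\mathcal A-\bar\eta_p|\le\sqrt{\bar\Delta/2}.
\]
Keeping the upper side gives $1-\mathcal A\le\bar\eta_p+\sqrt{\bar\Delta/2}$. The lower side gives the companion bound $1-\mathcal A\ge\bar\eta_p-\sqrt{\bar\Delta/2}$ at no extra cost.

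\textbf{Step 3 (edge cases).} The one point needing care is the boundary $\bar\eta_p\in\{0,1\}$, where $d_{\rm bin}$ may be $+\infty$. Under the paper's convention $0\log 0=0$:
\begin{itemize}
\item If $d_{\rm bin}(a\|b)=\infty$, then the certificate forces $\bar\Delta=\infty$ and the claim holds trivially.
\item If $d_{\rm bin}(a\|b)<\infty$ with $b\in\{0,1\}$, then absolute continuity forces $a=b$, and the inequality $d_{\rm bin}\ge 2(a-b)^2=0$ holds directly.
\end{itemize}
Alternatively, one can invoke the general Pinsker inequality for measures, which covers these cases with no special handling.

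I expect no real obstacle. The only substantive ingredient is the calculus check in Step 1, and it is routine.
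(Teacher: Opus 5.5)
Your proof is correct and follows the paper's own argument, which is the one-line substitution of binary Pinsker $d_{\rm bin}(a\|b)\ge 2(a-b)^2$ into the certificate \eqref{eq:bound}, then taking the square root and keeping the upper branch. Your calculus verification of binary Pinsker and your handling of the boundary cases $\bar\eta_p\in\{0,1\}$ are sound extra detail that the paper leaves implicit (it also assumes $\bar\Delta<\infty$ throughout the appendix, so the infinite case never arises).
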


\begin{corollary}[Bretagnolle--Huber~\cite{bretagnolle1979estimation}]\label{cor:bh}
$d_{\mathrm{bin}}(p\|q)\ge -\log(1-(p-q)^2)$ in~\eqref{eq:bound} gives
$1-\mathcal{A}\le\bar\eta_p+\sqrt{1-e^{-\bar\Delta}}$, refining
Pinsker for large $\bar\Delta$.
\end{corollary}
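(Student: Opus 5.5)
The plan is to combine the certified Bernoulli-KL inequality~\eqref{eq:bound} of Corollary~\ref{cor:agreement}, $d_{\rm bin}(1-\mathcal A\|\bar\eta_p)\le\bar\Delta$, with a Bretagnolle--Huber lower bound on binary KL in terms of total variation, and then invert. For Bernoulli laws $\mathrm{Ber}(a),\mathrm{Ber}(b)$ the total variation distance is exactly $|a-b|$. So it suffices to prove the two-point inequality $d_{\rm bin}(a\|b)\ge-\log\bigl(1-(a-b)^2\bigr)$ for all $a,b\in[0,1]$, with the convention that $d_{\rm bin}(a\|b)=+\infty$ when $a\not\ll b$ (i.e.\ $b\in\{0,1\}$ and $a\neq b$).

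For the inequality I will go through the Bhattacharyya coefficient $\mathrm{BC}=\sqrt{ab}+\sqrt{(1-a)(1-b)}$. The argument has two steps.

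\textbf{Step 1 (Jensen).} Writing $\mathrm{BC}=\E_{\mathrm{Ber}(a)}\bigl[\sqrt{b/a}\text{-type ratio}\bigr]$, that is, as the expectation of $\exp\bigl(-\tfrac12\log\tfrac{\mathrm{d}P}{\mathrm{d}Q}\bigr)$ under $P=\mathrm{Ber}(a)$ with $Q=\mathrm{Ber}(b)$, convexity of $\exp$ gives $\mathrm{BC}\ge e^{-d_{\rm bin}(a\|b)/2}$.

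\textbf{Step 2 (Cauchy--Schwarz).} On each atom write $\sqrt{a_i b_i}=\sqrt{\min(a_i,b_i)}\sqrt{\max(a_i,b_i)}$. Then
\[
\mathrm{BC}^2\le\Bigl(\sum_i\min(a_i,b_i)\Bigr)\Bigl(\sum_i\max(a_i,b_i)\Bigr)=(1-\mathrm{TV})(1+\mathrm{TV})=1-(a-b)^2 .
\]

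Combining the two steps yields $e^{-d_{\rm bin}(a\|b)}\le 1-(a-b)^2$, which is the claimed inequality.

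Next I substitute $a=1-\mathcal A$ and $b=\bar\eta_p$. The certificate gives $1-(a-b)^2\ge e^{-d_{\rm bin}}\ge e^{-\bar\Delta}$. Hence $|a-b|\le\sqrt{1-e^{-\bar\Delta}}$, and in particular $1-\mathcal A\le\bar\eta_p+\sqrt{1-e^{-\bar\Delta}}$. The degenerate cases $b\in\{0,1\}$ need a separate check. If $a\neq b$ there, then $d_{\rm bin}=\infty$, so finiteness of $\bar\Delta$ forces $a=b$ and the bound is trivial. The case $a\in\{0,1\}$ is handled by the usual $0\log0=0$ convention, and both steps remain valid.

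For the ``refines Pinsker for large $\bar\Delta$'' clause I compare $\sqrt{1-e^{-x}}$ with the Pinsker radius $\sqrt{x/2}$ from Corollary~\ref{cor:pinsker}. Since $\sqrt{1-e^{-x}}<1$ always, it beats Pinsker as soon as $x\ge2$. More precisely, $1-e^{-x}<x/2$ exactly for $x$ beyond the unique positive root of $1-e^{-x}=x/2$ (approximately $1.59$), which follows from concavity of $1-e^{-x}$ against the linear function $x/2$.

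The only step needing care is Step~1: writing the Jensen argument so that it handles zero-mass atoms and possible infinite KL cleanly. The rest is algebra.
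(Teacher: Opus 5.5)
Your proposal is correct and takes the same route as the paper: insert the Bretagnolle--Huber inequality into the certificate \eqref{eq:bound}, then invert $-\log(1-(p-q)^2)\le\bar\Delta$ to obtain $1-\mathcal A\le\bar\eta_p+\sqrt{1-e^{-\bar\Delta}}$. The paper only cites the inequality, whereas you also prove it (Bhattacharyya coefficient with Jensen and Cauchy--Schwarz, degenerate atoms handled correctly) and locate the crossover with Pinsker near $\bar\Delta\approx 1.59$; both additions are correct.
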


\begin{corollary}[Numerical tightest]\label{cor:numerical-tightest}
The tightest upper bound from~\eqref{eq:bound} is the unique
$p^\star\in[\bar\eta_p,1]$ solving $d_{\mathrm{bin}}(p^\star\|\bar\eta_p)=\bar\Delta$,
computable by bisection (Algorithm~\ref{alg:bisection}).
The bound is information-theoretically sharp: any smaller upper bound
would violate DPI combined with joint convexity of $d_{\mathrm{bin}}$.
\end{corollary}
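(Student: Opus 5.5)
The plan is to reduce the corollary to an elementary study of the one-variable function $g(a):=d_{\rm bin}(a\|b)$ with $b:=\bar\eta_p\in(0,1)$ held fixed. The boundary cases $b\in\{0,1\}$ are handled separately: there $g$ is infinite except at $a=b$, so the bound forces $1-\mathcal A=b$. By Corollary~\ref{cor:agreement}, the admissible values of $a=1-\mathcal A$ are exactly the sublevel set $S:=\{a\in[0,1]: g(a)\le\bar\Delta\}$. The tightest upper bound obtainable from~\eqref{eq:bound} alone is therefore $\sup S$, and I will identify this supremum with $p^\star$.

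\textbf{Step 1 (shape of $g$).} I will compute
\[
g'(a)=\log\frac{a(1-b)}{b(1-a)}\quad\text{and}\quad g''(a)=\frac{1}{a(1-a)}>0.
\]
Hence $g$ is strictly convex, with $g(b)=0$. It is strictly increasing and continuous on $[b,1]$, rising to $g(1)=-\log b$. So $S$ is a closed interval containing $b$, and $\sup S\in[b,1]$.

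\textbf{Step 2 (existence, uniqueness, bisection).} If $\bar\Delta<-\log b$, the intermediate value theorem together with strict monotonicity on $[b,1]$ gives a unique $p^\star\in[b,1)$ with $g(p^\star)=\bar\Delta$, and then $\sup S=p^\star$. The bisection of Algorithm~\ref{alg:bisection} is valid because the sign of $g(a)-\bar\Delta$ is monotone on $[b,1]$: each iteration halves a bracketing interval, so the error after $n$ steps is at most $(1-b)2^{-n}$. If instead $\bar\Delta\ge-\log b$, the equation has no root in $[b,1)$, and the statement should be read with $p^\star=1$, i.e.\ the bound is vacuous. I will state this convention explicitly.

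\textbf{Step 3 (sharpness).} This step needs a definite meaning, and I expect it to be the main obstacle. I will interpret sharpness as follows: no function of $(\bar\Delta,\bar\eta_p)$ smaller than $p^\star$ can be a valid upper bound on $1-\mathcal A$ for all admissible pairs $(q,p)$. To prove it, I will exhibit an instance attaining equality in both inequalities of Appendix~\ref{app:cor18-proof}.

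Take $p_{\rm data}$ to be a point mass at a single $x$, which makes the Jensen residual vanish. Take any $p_\theta(\cdot|x)$ with $p_\theta(E|x)=b$, and define $q_\phi(\cdot|x)$ by the density ratio
\[
\frac{dq}{dp}=\frac{p^\star}{b}\mathbf 1_E+\frac{1-p^\star}{1-b}\mathbf 1_{E^c}.
\]
Then $q\ll p$, $q(E|x)=p^\star$, and $q(\cdot|x,E)=p(\cdot|x,E)$, and likewise on $E^c$. Consequently $\bar\rho_E=0$ and $\bar\Delta=d_{\rm bin}(p^\star\|b)$, so equality holds in~\eqref{eq:main-identity} with $1-\mathcal A=p^\star$.

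The delicate point is whether such a $q$ must lie in the diagonal-Gaussian encoder class. I will state sharpness relative to the model-free certificate, which uses only measurability and absolute continuity and is how Corollary~\ref{cor:agreement} is formulated. I will remark that sharpness within a restricted encoder family need not hold.
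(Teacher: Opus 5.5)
Your proposal is correct. Steps~1--2 match the paper's own justification, which is the short remark after the corollary. That remark notes that $a\mapsto d_{\rm bin}(a\|\bar\eta_p)$ is strictly convex with minimum $0$ at $\bar\eta_p$ and strictly increasing on $[\bar\eta_p,1]$ up to $-\log\bar\eta_p$. Hence the upper root is unique when $\bar\Delta\le-\log\bar\eta_p$, and $p^\star=1$ (vacuous) otherwise.

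You genuinely depart from the paper on sharpness. The paper supports it only by saying a smaller bound ``would violate DPI combined with joint convexity''. But those two facts are what make~\eqref{eq:bound} \emph{valid}. Ruling out a smaller bound instead requires an instance at which the DPI step and the Jensen step are tight simultaneously. The paper records the tightness conditions ($\mathcal J_E=0$ and $\bar\rho_E=0$, Corollary~\ref{cor:dpi-equality}) but never shows they can be met at prescribed $(\bar\Delta,\bar\eta_p)$. Your construction supplies exactly that instance: the point-mass $p_{\rm data}$ kills $\mathcal J_E$, and the density ratio $dq/dp$ that is constant on $E$ and on $E^c$ kills $\bar\rho_E$. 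So your route actually proves the sharpness claim. Your caveat also adds a scope qualification the paper's one-line assertion omits: sharpness holds over the model-free class (arbitrary measurable $E$, $q\ll p$), not necessarily within diagonal-Gaussian encoders or for code maps fitted to $\bar q$.

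Two small completions are worth adding.
\begin{itemize}
\item In the vacuous regime $\bar\Delta>-\log\bar\eta_p$, you can still attain $1-\mathcal A=1$ at exactly that gap. Take $q_\phi(\cdot|x)$ supported on $E$ with $D_{\KL}(q_\phi(\cdot|x,E)\|p_\theta(\cdot|x,E))=\bar\Delta+\log\bar\eta_p$. This is possible whenever $p_\theta(\cdot|x,E)$ is non-atomic.
\item Algorithm~\ref{alg:bisection} returns $p_{\rm lo}\le p^\star$, so a certified numerical upper bound should report $p_{\rm hi}$ instead.
\end{itemize}
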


For fixed $q\in(0,1)$, $p\mapsto d_{\mathrm{bin}}(p\|q)$ is strictly
convex with minimum $0$ at $p=q$, attains
$d_{\mathrm{bin}}(0\|q)=-\log(1-q)$, $d_{\mathrm{bin}}(1\|q)=-\log q$,
and is strictly increasing on $[q,1]$. If $\bar\Delta\le -\log q$ the
upper root is unique; otherwise $p^\star=1$ (vacuous).

\begin{algorithm}[h]
\caption{Bisection for $p^\star$ solving $d_{\mathrm{bin}}(p^\star\|q)=\bar\Delta$}
\label{alg:bisection}
\begin{algorithmic}[1]
\Require $q\in(0,1)$, $\bar\Delta\ge 0$, tolerance $\varepsilon$
\State $p_{\mathrm{lo}}\gets q$,\; $p_{\mathrm{hi}}\gets 1-10^{-12}$
\While{$p_{\mathrm{hi}}-p_{\mathrm{lo}}>\varepsilon$}
\State $p_{\mathrm{mid}}\gets(p_{\mathrm{lo}}+p_{\mathrm{hi}})/2$
\If{$d_{\mathrm{bin}}(p_{\mathrm{mid}}\|q)\le\bar\Delta$}
$p_{\mathrm{lo}}\gets p_{\mathrm{mid}}$
\Else\ $p_{\mathrm{hi}}\gets p_{\mathrm{mid}}$
\EndIf
\EndWhile
\State \Return $p_{\mathrm{lo}}$
\end{algorithmic}
\end{algorithm}

\subsection{Equality conditions and qualifications}

\begin{corollary}[DPI equality]\label{cor:dpi-equality}
Corollary~\ref{cor:agreement}(b) holds with equality iff both
$\mathcal{J}=0$ and $\bar\rho=0$: the former requires
$(\eta_q(\bx),\eta_p(\bx))$ constant $p_{\mathrm{data}}$-a.e.\
(per-sample homogeneity); the latter requires
$T=\mathbf 1[\bz\in E]$ to be sufficient for
$(q_\phi(\cdot|\bx),p_\theta(\cdot|\bx))$ a.e. Generic continuous VAEs
satisfy neither exactly, accounting for the slack reported in
\S\ref{sec:experiments}.
\end{corollary}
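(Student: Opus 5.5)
The plan is to read off equality conditions from the exact identity of Corollary~\ref{thm:codebook-identity}, rather than from the inequality chain. By \eqref{eq:main-identity}, $\bar\Delta-d_{\rm bin}(1-\mathcal A\|\bar\eta_p)=\mathcal J_E+\bar\rho_E$, and both summands are non-negative. Hence equality in \eqref{eq:bound} holds iff $\mathcal J_E=0$ and $\bar\rho_E=0$ simultaneously. This gives the ``iff both'' part immediately. It remains to characterize each vanishing condition separately.

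\emph{Within-cell residual.} I would write $\bar\rho_E=\E_\bx\rho(\bx;T)$ with $\rho(\bx;T)$ given by \eqref{eq:rho-T-def}. Since $\rho\ge0$, we have $\bar\rho_E=0$ iff $\rho(\bx;T)=0$ for $p_{\rm data}$-a.e.\ $\bx$. By the chain rule \eqref{eq:kl-chain-rule}, this holds iff $D_{\KL}(q_\phi(\cdot|\bx)\|p_\theta(\cdot|\bx))=D_{\KL}(q_T\|p_T)$. That is exactly equality in the data-processing inequality, which is the standard characterization of $T$ being sufficient for the pair $\{q_\phi(\cdot|\bx),p_\theta(\cdot|\bx)\}$. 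Concretely, the conditionals on the two atoms $E$ and $E^c$ agree $q_T$-a.s. Equivalently, the likelihood ratio $dq_\phi/dp_\theta$ is $q_\phi$-a.s.\ a function of $\mathbf 1[\bz\in E]$. I would cite the DPI equality theorem in \cite{polyanskiy2024information} for this step rather than reprove it.

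\emph{Jensen residual.} Here $\mathcal J_E=\E_\bx d_{\rm bin}(\eta_q(\bx)\|\eta_p(\bx))-d_{\rm bin}(\bar\eta_q\|\bar\eta_p)$. The sufficiency direction is trivial: if $(\eta_q,\eta_p)$ is a.e.\ constant, both terms coincide. The necessity direction is where I expect the main obstacle. $d_{\rm bin}$ is jointly convex but \emph{not} strictly jointly convex: it is positively homogeneous along rays in the perspective sense. For example, $d_{\rm bin}(a\|b)$ vanishes on the whole diagonal $a=b$, so any a.e.\ distribution of points on the diagonal gives $\mathcal J_E=0$ without constancy.

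So I would first try to prove the equality case of Jensen for the convex function $(a,b)\mapsto a\log\frac ab+(1-a)\log\frac{1-a}{1-b}$. Writing it as a sum of two perspective functions $b\,f(a/b)+(1-b)f((1-a)/(1-b))$ with $f(t)=t\log t$ strictly convex, equality in Jensen for each perspective term forces the ratios $\eta_q/\eta_p$ and $(1-\eta_q)/(1-\eta_p)$ to be a.e.\ constant. This is the correct characterization. It coincides with ``$(\eta_q,\eta_p)$ constant'' unless both ratios equal $1$, i.e.\ $\eta_q=\eta_p$ a.e.

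I would therefore state the Jensen condition as a.e.\ constancy of the likelihood ratios on $E$ and $E^c$, and note that this reduces to per-sample homogeneity away from the degenerate diagonal case $\eta_q\equiv\eta_p$. In that degenerate case the corollary's phrasing should be read as ``constant ratio.'' The closing remark about generic continuous VAEs is a qualitative consequence: the ratio $dq_\phi/dp_\theta$ generically varies within cells, so $\bar\rho_E>0$. That remark needs no proof beyond this characterization.
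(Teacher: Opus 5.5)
Your proposal is correct and follows the paper's route. The paper gives no separate proof of this corollary; equality is meant to be read off from the identity of Corollary~\ref{thm:codebook-identity}, or equivalently from tightness of both steps of the DPI-plus-Jensen chain in Appendix~\ref{app:cor18-proof}, with $\bar\rho_E=0$ identified as DPI equality, i.e.\ sufficiency of $\mathbf 1_E$.

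You depart from the paper only in the Jensen clause, and there you are right and the paper's wording is not. $d_{\rm bin}$ is jointly convex but not strictly so. The equality case of the log-sum inequality behind the cited joint convexity is exactly your condition: $\eta_q/\eta_p$ and $(1-\eta_q)/(1-\eta_p)$ are a.e.\ constant. This admits the non-constant diagonal $\eta_q=\eta_p$ a.e., so ``$\mathcal J=0$ requires per-sample homogeneity'' is false as a necessity claim.

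You can sharpen your remark about the degenerate case. Suppose $\eta_q(\bx)=\eta_p(\bx)$ and $\mathbf 1_E$ is sufficient. Then the Bernoulli weights coincide and so do both fibre conditionals, hence $q_\phi(\cdot|\bx)=p_\theta(\cdot|\bx)$ and $\bar\Delta=0$. The exact statement is therefore: equality holds iff either $\bar\Delta=0$, or $(\eta_q,\eta_p)$ is a.e.\ constant and $\mathbf 1_E$ is a.e.\ sufficient. The paper's phrasing fails only in the trivial $\bar\Delta=0$ regime.

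Two small details are worth recording to make the argument airtight. First, your ratios are defined only off the sets $\{\eta_p=0\}$ and $\{\eta_p=1\}$. On those sets absolute continuity forces $\eta_q=\eta_p$, which keeps the dichotomy intact. Second, the bound $\E_\bx d_{\rm bin}(\eta_q(\bx)\|\eta_p(\bx))\le\bar\Delta<\infty$ is what makes each perspective term integrable, and that integrability licenses the strict-convexity equality argument.
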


\begin{corollary}[Necessity for $\mathcal{A}=1$]\label{cor:fisher-necessity}
If $1-\mathcal{A}=0$ but $\bar\eta_p>0$, then
$d_{\mathrm{bin}}(0\|\bar\eta_p)=-\log(1-\bar\eta_p)>0$ contributes
additively to $\mathcal{F}$ via~\eqref{eq:main-identity}.
Perfect agreement therefore additionally requires $\bar\eta_p$ to
vanish; small Fisher mismatch (\S\ref{app:diagnostic-bound})
is one diagnostic route to small $\bar\eta_p$ in the
diagonal-Gaussian/product-Bernoulli class, but is not architecture-independent.
\end{corollary}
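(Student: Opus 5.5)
The statement to prove is the last one above, Corollary~\ref{cor:fisher-necessity}. I would read it as a direct specialization of the identity~\eqref{eq:main-identity} of Corollary~\ref{thm:codebook-identity}, combined with the evidence identity from Step~1 of the proof of Lemma~\ref{thm:universal}. The plan has three steps.

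\textbf{Step 1: evaluate the binary divergence at $a=0$.} Substitute $1-\mathcal A=0$ into the definition of $d_{\rm bin}$, using the convention $0\log 0=0$. The first summand vanishes and the second becomes $\log\tfrac{1}{1-\bar\eta_p}$. Hence $d_{\rm bin}(0\|\bar\eta_p)=-\log(1-\bar\eta_p)$.

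\textbf{Step 2: check the sign and finiteness.} If $0<\bar\eta_p<1$, then $1-\bar\eta_p\in(0,1)$, so the logarithm is strictly negative and the term is strictly positive. The case $\bar\eta_p=1$ should be handled separately. There the term is $+\infty$, which cannot occur under the standing assumption $\bar\Delta<\infty$, since the term is bounded by $\bar\Delta$ via~\eqref{eq:bound}. This boundary case is the only delicate point, and it is settled by that finiteness assumption.

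\textbf{Step 3: place the term in the free-energy decomposition.} Step~1 of \S\ref{app:proof-universal} gives $\mathcal F=\mathcal H_\bx+\bar\Delta$. Corollary~\ref{thm:codebook-identity} then writes $\bar\Delta$ as the sum of $d_{\rm bin}(1-\mathcal A\|\bar\eta_p)$ and two residuals $\mathcal J_E,\bar\rho_E\ge 0$. So
\[
\mathcal F=\mathcal H_\bx-\log(1-\bar\eta_p)+\mathcal J_E+\bar\rho_E .
\]
Because the other two added terms are non-negative, this shows the strictly positive quantity $-\log(1-\bar\eta_p)$ contributes additively to $\mathcal F$. By contraposition, if agreement is perfect and the binary code-level term vanishes, then $\bar\eta_p=0$. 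This follows because $-\log(1-b)=0$ if and only if $b=0$, which gives the ``additionally requires'' clause.

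\textbf{The Fisher remark.} The closing sentence about Fisher mismatch is interpretive rather than a mathematical claim. I would only point to Appendix~\ref{app:diagnostic-bound} as a sufficient route to small $\bar\eta_p$, without proving anything about it here.
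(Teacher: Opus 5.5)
Your proposal is correct and follows the paper's own inline justification. You evaluate $d_{\rm bin}(0\|\bar\eta_p)=-\log(1-\bar\eta_p)$ and insert it into~\eqref{eq:main-identity} together with $\mathcal F=\mathcal H_\bx+\bar\Delta$ from Step~1 of \S\ref{app:proof-universal}, and your conditional reading of the ``additionally requires'' clause is the right way to make that sentence precise; the boundary case $\bar\eta_p=1$, which the paper does not address, is also excluded directly by $q_\phi\ll p_\theta$, since $\bar\eta_p=1$ would force $\bar\eta_q=1$.
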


\begin{remark}[$\beta$-VAE]\label{rem:beta-extension}
For $\beta\ne 1$, $\mathcal{F}_\beta=\mathcal{F}+(\beta-1)\bar R$ with
$\bar R=\E_\bx D_{\KL}(q_\phi(\cdot|\bx)\|p(\bz))$. A checkpoint
comparison therefore carries an additional rate term
$\delta\mathcal{F}_\beta=\delta\bigl[d_{\mathrm{bin}}(1-\mathcal{A}\|\bar\eta_p)\bigr]+\delta\mathcal{J}+\delta\bar\rho+(\beta-1)\delta\bar R$,
which is not sign-constrained. The decomposition identity
(Corollary~\ref{thm:codebook-identity}) holds for any fixed
$(\phi,\theta)$ regardless of how the checkpoint was trained;
Lemma~\ref{lem:joint-stationary} (\S\ref{app:functional-reference})
treats only the standard $\beta=1$ functional reference.
\end{remark}

\subsection{Empirical scale check}
\label{app:bound-illustrations}

The cross-dataset finite-grid audit (Table~\ref{tab:exact-audit-perseed},
\S\ref{app:exact-audit}) is the proof-backed numerical evidence for
the bound and is referenced in the main text. For completeness we
also reproduce the timing-calibration audit, whose role is to
document that codebook-agreement timing is regime-dependent rather
than universal---a finding that motivates the finite-horizon framing
of Theorem~\ref{thm:attractor-codebook} (\S\ref{app:posterior-gibbs}).

\paragraph{Bound non-vacuity on Setting~1.}
Figure~\ref{fig:bound-setting1} reports the three quantities entering
the binary-KL certificate of Corollary~\ref{cor:agreement} on Setting~1:
the model-posterior reference rate $\bar\eta_p\!=\!0.176$ (Bayes floor);
the encoder-observed off-diagonal mass $1{-}\hat{\mathcal A}\!=\!0.178$;
and the DPI-Bregman upper bound $p^\star\!=\!0.482$ obtained by bisecting
$d_{\rm bin}(p^\star\,\|\,\bar\eta_p)\!=\!\bar\Delta$
(Algorithm~\ref{alg:bisection}). The ratio
$p^\star/(1{-}\hat{\mathcal A})\!\approx\!2.71\!\times$ shows the bound
is non-vacuous on this regime. Compared to the MNIST audited regime of
Table~\ref{tab:mnist-audited}---where the same ratio is
$\sim\!160\!\times$ at $d{=}10$ under low SNIS effective sample size---
this places the slack as a function of latent dimensionality and proposal
quality rather than as a structural defect of the certificate.

\begin{figure}[h]
\centering
\IfFileExists{figures/fig_bound_setting1.pdf}{%
\includegraphics[width=0.65\linewidth]{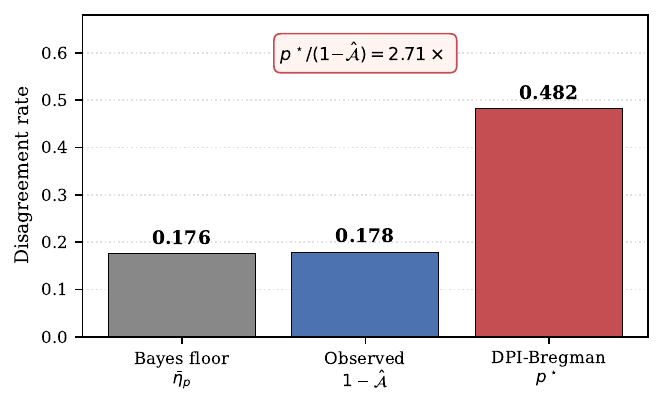}%
}{\fbox{\parbox{0.65\linewidth}{\centering\small\textit{[Figure~\ref{fig:bound-setting1} placeholder.]}\\[2pt]\rule{0pt}{0.6cm}}}}
\caption{\textbf{DPI-Bregman bound non-vacuity on Setting~1.}
Bayes floor $\bar\eta_p\!=\!0.176$ (gray); observed off-diagonal mass
$1{-}\hat{\mathcal A}\!=\!0.178$ (blue); DPI-Bregman upper bound
$p^\star\!=\!0.482$ (red), obtained by bisection on
$d_{\rm bin}(p^\star\,\|\,\bar\eta_p)\!=\!\bar\Delta$. The bound is
$\approx\!2.71\!\times$ the observed value. At the MNIST audited
regime of Table~\ref{tab:mnist-audited} the analogous ratio is
$\sim\!160\!\times$ at $d{=}10$, so the slack scales with latent
dimensionality and SNIS effective sample size rather than reflecting
a defect of the certificate.}
\label{fig:bound-setting1}
\end{figure}

\begin{figure}[h]
\centering
\IfFileExists{figures/fig2_two_phase_multiseed.png}{%
\includegraphics[height=6.1cm,keepaspectratio]{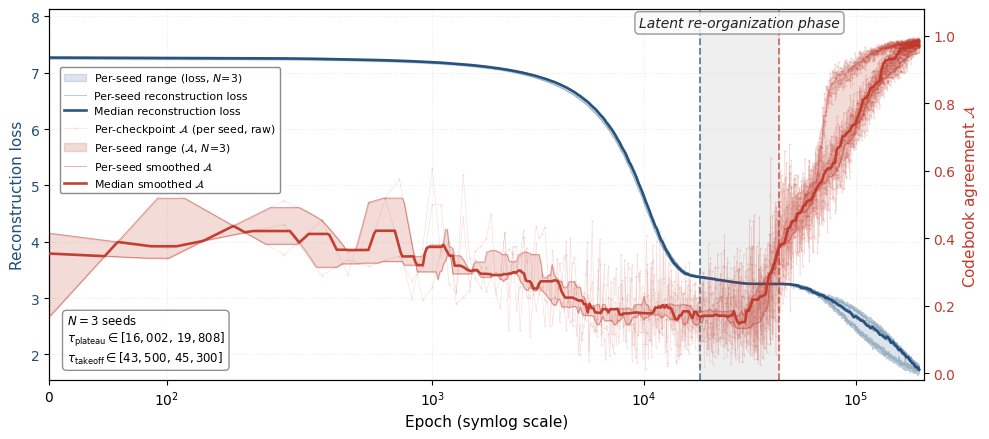}%
}{\IfFileExists{figures/fig2_two_phase_multiseed.pdf}{%
\includegraphics[height=3.5cm,keepaspectratio]{figures/fig2_two_phase_multiseed.pdf}%
}{%
\fbox{\parbox{0.85\linewidth}{\centering\small\textit{[Figure~\ref{fig:two-phase-multiseed} placeholder.]}\\[2pt]\rule{0pt}{0.6cm}}}%
}}
\caption{\textbf{Long-horizon Setting~1-long calibration ($N\!=\!3$ seeds,
$200{,}001$ epochs, $\beta\!=\!1$), reported in the appendix because no
universal training law is claimed.} Per-seed reconstruction loss $L$
(blue) and smoothed agreement $\mathcal A$ (red); bold = medians,
ribbons = min--max. The plateau time $\tau_{\rm plateau}$ (median
$18{,}351$) precedes takeoff $\tau_{\rm takeoff}$ (median $43{,}600$)
on every seed. Reported as a finite-horizon illustration of
Theorem~\ref{thm:attractor-codebook}, not as a universal law---the sklearn
timing audit in Table~\ref{tab:timing-calibration} below shows $0/20$.}
\label{fig:two-phase-multiseed}
\end{figure}

\begin{table}[h]
\centering\small
\begin{tabular}{@{}lccc@{}}
\toprule
Dataset & plateau observed & takeoff observed & takeoff after plateau \\
\midrule
Digits & 5/5 & 1/5 & 0/5 \\
Wine & 5/5 & 0/5 & 0/5 \\
Breast cancer & 5/5 & 0/5 & 0/5 \\
Moons & 4/5 & 0/5 & 0/5 \\
\bottomrule
\end{tabular}
\caption{Timing calibration over four datasets and five seeds (short
horizon, 800 epochs). Plateau-then-takeoff is detected on the
long-horizon Setting~1-long backbone (3/3 seeds; Figure~\ref{fig:two-phase-multiseed})
but not at this horizon, motivating the finite-horizon framing of
Theorem~\ref{thm:attractor-codebook}.}
\label{tab:timing-calibration}
\end{table}

\begin{table}[h]
\centering\small\setlength{\tabcolsep}{4pt}
\begin{tabular}{@{}rcccccccc@{}}
\toprule
seed & $\tau_{\rm plateau}$ & $\tau_{\rm takeoff}$ & $\tau_{\rm lag}$
& $L(\tau_{\rm p})$ & $\mathcal{A}(\tau_{\rm p})$ & $\mathcal{A}_{\rm final}$
& \%$L$ at $\tau_{\rm p}$ & \%$\mathcal{A}$ at $\tau_{\rm p}$ \\
\midrule
0 & 18{,}351 & 45{,}300 & 26{,}949 & 3.38 & 0.160 & 0.982 & 70.1 & 3.0 \\
1 & 16{,}002 & 43{,}500 & 27{,}498 & 3.41 & 0.194 & 0.972 & 70.0 & 8.8 \\
2 & 19{,}808 & 43{,}600 & 23{,}792 & 3.35 & 0.149 & 0.984 & 70.1 & 4.2 \\
\midrule
median & 18{,}351 & 43{,}600 & 26{,}949 & 3.38 & 0.160 & 0.982 & 70.1 & 4.2 \\
\bottomrule
\end{tabular}
\caption{Per-seed two-phase diagnostics for Setting~1-long
($N{=}3$ seeds, $200{,}001$ epochs). Definitions of $\tau_{\rm plateau}$,
$\tau_{\rm takeoff}$, $\tau_{\rm lag}$ are in the main text. The
plateau-then-takeoff ordering holds for all three seeds (one-sided
sign-test $p{=}0.125$, the minimum at $N{=}3$); bootstrap 95\% CI for
median $\tau_{\rm lag}$: $[23{,}792,27{,}498]$.}
\label{tab:two-phase-multiseed}
\end{table}

\section{Posterior-Gibbs Dynamics (Theorem~\ref{thm:attractor-codebook})
         and the Finite-Horizon Extension}
\label{app:posterior-gibbs}
\label{sec:attractor-codebook}%

This appendix proves the dynamical companion to the universal
identity. \S\ref{app:operational-codes} disambiguates four operational
notions of ``codebook''; \S\ref{app:posterior-joints}--\S\ref{app:finite-horizon}
prove the one-step and finite-horizon kernel-mismatch bounds;
\S\ref{app:fisher-score-flow} records the Fisher-level identity for
conditional score flows; \S\ref{app:fumero-relation} discusses the
relation to deterministic attractors; \S\ref{app:local-stability}
records the perturbation lemma used in \S\ref{app:fumero-relation}.

\paragraph{Statement of the theorem.}
The dynamical companion to Lemma~\ref{thm:universal} is the following
$H$-linear path-space KL bound on the mismatch between encoder- and
model-posterior-induced Gibbs kernels.

\begin{theorem}[Posterior-Gibbs kernel mismatch; finite-horizon extension]
\label{thm:attractor-codebook}
Under the assumptions of Lemma~\ref{thm:universal}, let $\Kq,\Kp$ denote
the posterior-Gibbs kernels induced by the encoder and the model posterior
on latent space (defined in~\eqref{eq:posterior-gibbs-kernels}), and let
$P_q^H,P_p^H$ be the $H$-step path laws starting from a common initial
distribution $\bar q$. Then:
\textbf{(i)} the one-step kernel mismatch admits the bound
\begin{equation}
\label{eq:gibbs-one-step-bound}
\E_{\bar q}\,D_{\KL}(\Kq(\bz,\cdot)\|\Kp(\bz,\cdot))
\;\le\;2\bar\Delta-D_{\KL}(\bar q\|\bar p);
\end{equation}
\textbf{(ii)} the path-space mismatch is $H$-linear,
\begin{equation}
\label{eq:finite-horizon-kernel-main}
D_{\KL}(P_q^H\|P_p^H)
\;\le\;H\bigl(2\bar\Delta-D_{\KL}(\bar q\|\bar p)\bigr),
\end{equation}
and any finite-horizon endpoint label $T_H^{\,\mathrm{enc\text{-}dec}}$ inherits the
same bound by the data processing inequality.
\end{theorem}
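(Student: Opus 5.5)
The plan is to realize both kernels as two-step data-augmentation (Gibbs) kernels through the observation space, and then to read everything off from the chain rule of Lemma~\ref{thm:universal} (Step~2 of its proof, Eq.~\eqref{eq:kl-chain-rule}) and the data processing inequality. I take the definitions in~\eqref{eq:posterior-gibbs-kernels} to be the natural ones: $\Kq(\bz,d\bz')=\int q(d\bx\mid\bz)\,q_\phi(d\bz'\mid\bx)$ with $q(\bx\mid\bz)\propto p_{\rm data}(\bx)q_\phi(\bz\mid\bx)/\bar q(\bz)$. Likewise $\Kp(\bz,d\bz')=\int p(d\bx\mid\bz)\,p_\theta(d\bz'\mid\bx)$ with $p(\bx\mid\bz)\propto p_{\rm data}(\bx)p_\theta(\bz\mid\bx)/\bar p(\bz)$. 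If the paper's definition differs, the argument should adapt, provided the kernels have this ``$z\to x\to z'$'' form.

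\textbf{Step 1 (lifted triple laws).} I introduce the joint laws on $(\bz,\bx,\bz')$:
\[
Q_3=p_{\rm data}(\bx)\,q_\phi(\bz\mid\bx)\,q_\phi(\bz'\mid\bx),\qquad
P_3=p_{\rm data}(\bx)\,p_\theta(\bz\mid\bx)\,p_\theta(\bz'\mid\bx).
\]
Since $\bz,\bz'$ are conditionally i.i.d.\ given $\bx$ and the $\bx$-marginals coincide, the chain rule gives
\[
D_{\KL}(Q_3\|P_3)=\E_\bx\bigl[\Delta(\bx)+\Delta(\bx)\bigr]=2\bar\Delta .
\]
Absolute continuity follows from $q_\phi(\cdot|\bx)\ll p_\theta(\cdot|\bx)$.

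\textbf{Step 2 (disintegrate along $\bz$, then drop $\bx$).} Disintegrating $Q_3,P_3$ along the first coordinate, the $\bz$-marginals are $\bar q$ and $\bar p$. The conditional laws of $(\bx,\bz')$ given $\bz$ are $q(\bx\mid\bz)q_\phi(\bz'\mid\bx)$ and $p(\bx\mid\bz)p_\theta(\bz'\mid\bx)$, so~\eqref{eq:kl-chain-rule} gives
\[
2\bar\Delta=D_{\KL}(\bar q\|\bar p)+\E_{\bar q}D_{\KL}\bigl(Q_3(\cdot\mid\bz)\,\big\|\,P_3(\cdot\mid\bz)\bigr).
\]
Marginalizing out $\bx$ turns the inner laws into $\Kq(\bz,\cdot)$ and $\Kp(\bz,\cdot)$, and the DPI lower-bounds the inner KL by $D_{\KL}(\Kq(\bz,\cdot)\|\Kp(\bz,\cdot))$. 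Rearranging yields~\eqref{eq:gibbs-one-step-bound}. It is also visible here that the right-hand side is nonnegative.

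\textbf{Step 3 (path space).} For (ii) I apply the chain rule for Markov path laws with a common initial law $\bar q$. The initial term vanishes, so
\[
D_{\KL}(P_q^H\|P_p^H)=\sum_{t=1}^{H}\E_{\bz_{t-1}\sim\mu_{t-1}}D_{\KL}(\Kq(\bz_{t-1},\cdot)\|\Kp(\bz_{t-1},\cdot)),
\]
where $\mu_t$ is the law of $\bz_t$ under the $q$-chain. The key observation is that $\Kq$ leaves $\bar q$ invariant. Indeed, under $Q_3$ the pair $(\bz,\bz')$ is exchangeable, so $\Kq$ is even $\bar q$-reversible. Hence $\mu_t=\bar q$ for all $t$, each summand equals the left side of~\eqref{eq:gibbs-one-step-bound}, and summing gives~\eqref{eq:finite-horizon-kernel-main}. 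Finally, any endpoint label $T_H^{\,\mathrm{enc\text{-}dec}}$ is a measurable function of the path, so the DPI transfers the bound to it.

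\textbf{Main obstacle.} The step I expect to be most delicate is the measure-theoretic bookkeeping in Step~2. This includes the existence of regular conditionals $q(\cdot\mid\bz)$ and $p(\cdot\mid\bz)$ on standard Borel spaces. It also includes checking that $Q_3(\cdot\mid\bz)\ll P_3(\cdot\mid\bz)$ for $\bar q$-a.e.\ $\bz$, which needs $\bar q\ll\bar p$; this follows from averaging the hypothesis of Lemma~\ref{thm:universal}. A further point is that the chain rule is applied with possibly infinite terms, which is avoided because $\bar\Delta<\infty$. The stationarity in Step~3 is the conceptual crux: it is what makes the bound exactly $H$-linear rather than requiring a contraction or mixing argument. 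If the paper's kernels were not $\bar q$-invariant, I would instead bound each summand by a change of measure, but I expect invariance to hold here.
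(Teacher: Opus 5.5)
Your proposal is correct and follows essentially the same route as the paper. The paper lifts $\Kq,\Kp$ to $L_q(\bz,d\bx,d\bz')=r_q(d\bx|\bz)\,q_\phi(d\bz'|\bx)$, which is exactly your $Q_3(\cdot\mid\bz)$, drops $\bx$ by data processing, evaluates $\E_{\bar q}D_{\KL}(L_q\|L_p)=\bigl(\bar\Delta-D_{\KL}(\bar q\|\bar p)\bigr)+\bar\Delta$ by the chain rule, and sums $H$ one-step terms using the $\bar q$-invariance of $\Kq$; your triple-law computation (getting $2\bar\Delta$ by conditioning on $\bx$ first, then disintegrating along $\bz$) is only a reordering of those same chain-rule applications.
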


The proof, given in~\S\ref{app:posterior-joints}--\S\ref{app:finite-horizon},
combines the KL chain rule on the joint $\Gamma_q=p_{\rm data}(d\bx)q_\phi(d\bz|\bx)$
with data processing from the lifted kernel $L_q$ to its $\bz'$-marginal $\Kq$.
Infinite-horizon attractor alignment is \emph{not} claimed; it would require
separate contraction or spectral assumptions
(\S\ref{app:fumero-relation}--\S\ref{app:local-stability}).

\begin{corollary}[Path-space audit from a common initial law]
\label{cor:attractor-gap-bound}
If both chains $P_q^H,P_p^H$ start from the same initial law (e.g.\ both
from $\bar q$), then for the binary disagreement endpoint statistic
$T_H^{\,\mathrm{enc\text{-}dec}}=\mathbf 1\{c^{\star}_{\rm enc}(\bz_H)\ne c^{\star}_{\rm dec}(\bz_H)\}$,
the Bernoulli-KL bound of Corollary~\ref{cor:agreement} extends to
\[
d_{\rm bin}\!\bigl(1-\mathcal A_H\,\big\|\,\bar\eta_p^{(H)}\bigr)
\;\le\;H\bigl(2\bar\Delta-D_{\KL}(\bar q\|\bar p)\bigr),
\]
where $\mathcal A_H,\bar\eta_p^{(H)}$ are the path-space analogues of
$\mathcal A,\bar\eta_p$ at horizon $H$. Initialising the $\Kp$ chain
at $\bar p$ instead would add a fixed initial-mismatch term.
\end{corollary}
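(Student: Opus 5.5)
The plan is to reduce the corollary to Theorem~\ref{thm:attractor-codebook}(ii) by a single application of the data processing inequality, and then convexify in the same way as the proof of Corollary~\ref{cor:agreement}.

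\textbf{Step 1: the endpoint statistic.} Take the path space $\cZ^{H+1}$ with path laws $P_q^H$ and $P_p^H$. Both are started from the common initial law $\bar q$, which is exactly the hypothesis of the corollary. Define the measurable statistic
\[
S(\bz_0,\ldots,\bz_H)=T_H^{\,\mathrm{enc\text{-}dec}}=\mathbf 1\{\bz_H\in E\}.
\]
It is the composition of the coordinate projection onto $\bz_H$ with the indicator of $E=\{c^{\star}_{\rm enc}\ne c^{\star}_{\rm dec}\}$. The pushforwards are then $S_\#P_q^H=\mathrm{Ber}(\Pr_{P_q^H}[\bz_H\in E])$ and $S_\#P_p^H=\mathrm{Ber}(\Pr_{P_p^H}[\bz_H\in E])$. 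I would \emph{define} the path-space analogues through these probabilities:
\[
1-\mathcal A_H:=\Pr_{P_q^H}[\bz_H\in E],\qquad \bar\eta_p^{(H)}:=\Pr_{P_p^H}[\bz_H\in E].
\]
Because the statement only calls these ``path-space analogues'', this choice is legitimate. If one instead wants $x$-indexed per-example rates, Fubini recovers these quantities as averages, just as $\bar\eta_q=1-\Aq$ in Definition~\ref{def:disagreement-rates}.

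\textbf{Step 2: DPI.} The data processing inequality for KL \cite[Thm.~7.4]{polyanskiy2024information}, or equivalently the non-negativity of $\bar\rho_T$ in the chain rule~\eqref{eq:kl-chain-rule} with $T=S$, gives
\[
d_{\rm bin}(1-\mathcal A_H\|\bar\eta_p^{(H)})=D_{\KL}(S_\#P_q^H\|S_\#P_p^H)\le D_{\KL}(P_q^H\|P_p^H).
\]
Chaining this with~\eqref{eq:finite-horizon-kernel-main} gives the claimed bound $H\bigl(2\bar\Delta-D_{\KL}(\bar q\|\bar p)\bigr)$.

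\textbf{Step 3: the remark about initialising at $\bar p$.} Here I would apply the path-space chain rule once more, splitting on $\bz_0$:
\[
D_{\KL}(P_q^H\|\tilde P_p^H)=D_{\KL}(\bar q\|\bar p)+\E_{\bz_0\sim\bar q}\,D_{\KL}\bigl(P_q^H(\cdot\mid\bz_0)\|P_p^H(\cdot\mid\bz_0)\bigr).
\]
The conditional term is at most $H(2\bar\Delta-D_{\KL}(\bar q\|\bar p))$ by the same Markov-kernel telescoping that underlies part~(ii). The added term $D_{\KL}(\bar q\|\bar p)$ is the fixed initial mismatch.

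\textbf{Main obstacle.} The difficulty is not the DPI step but making sure the quantities match those of Theorem~\ref{thm:attractor-codebook}. First, $\Kp$ must be a Markov kernel whose path law is absolutely continuous with respect to $P_q^H$ in the right direction, which is inherited from $q_\phi\ll p_\theta$. Second, if $\mathcal A_H$ were instead defined with an $x$-averaging layer, one would need an extra joint-convexity (Jensen) step, as in~\eqref{eq:dpi-avg}. I would handle this by working on the lifted joint path law that includes $\bx$ and applying Jensen exactly as in Step~3 of the proof of Corollary~\ref{cor:agreement}; Jensen only loosens the inequality, so the bound survives.
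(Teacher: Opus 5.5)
Your proposal is correct and follows the paper's route. Like the paper, you invoke the $H$-linear path-space bound of Theorem~\ref{thm:attractor-codebook}(ii) from the common initial law $\bar q$, then apply data processing from the path law to the binary endpoint statistic $\mathbf 1\{\bz_H\in E\}$. Your chain-rule split on $\bz_0$, which identifies the initial-mismatch term as $D_{\KL}(\bar q\|\bar p)$, is a correct, slightly more explicit version of the one-line remark the paper leaves unproved.
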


\subsection{Four operational notions of ``codebook''}
\label{app:operational-codes}

A codebook in this paper is an \emph{operational statistic}---a measurable
hard-code map $\mathcal Z\to\{1,\ldots,K\}$ such as the encoder and decoder
maps $c^{\star}_{\rm enc},c^{\star}_{\rm dec}$ used throughout this paper---not a
unique latent object.
Four common choices in the literature are:
\textbf{(a) aggregate-density codes}, modes/mixtures of $\bar q$ or $\bar p$;
\textbf{(b) conditional-concentration codes}, clustered supports of $q_\phi(\bz|\bx)$ or $p_\theta(\bz|\bx)$;
\textbf{(c) metastable-dynamics codes}, slow modes of $\Kq$ or $\Kp$;
\textbf{(d) information-optimal codes}, those that maximise $I(X;\Ce)$ or $I(X;\Cd)$ subject to a cardinality or entropy constraint.
These choices generally differ; they coincide only under additional
assumptions (well-separated mixtures, low cross-cell conductance,
identifiability, or score-aligned contraction).
The theorem says that once a measurable choice is made, its
encoder/model-posterior mismatch is exactly visible inside the
variational gap.

\subsection{Posterior joints and reverse conditionals}
\label{app:posterior-joints}

Recall
\[
\Gammaq(d\bx,d\bz)=p_{\mathrm{data}}(d\bx)q_\phi(d\bz|\bx),\qquad
\Gammap(d\bx,d\bz)=p_{\mathrm{data}}(d\bx)p_\theta(d\bz|\bx),
\]
with latent marginals $\barq,\barp$. Disintegration gives
$\Gammaq(d\bx,d\bz)=\barq(d\bz)r_q(d\bx|\bz)$ and
$\Gammap(d\bx,d\bz)=\barp(d\bz)r_p(d\bx|\bz)$. The
\emph{posterior-Gibbs kernels} are defined as the $\bz'$-marginals
\begin{equation}
\label{eq:posterior-gibbs-kernels}
\Kq(\bz,d\bz')\;:=\;\int r_q(d\bx|\bz)\,q_\phi(d\bz'|\bx),
\qquad
\Kp(\bz,d\bz')\;:=\;\int r_p(d\bx|\bz)\,p_\theta(d\bz'|\bx),
\end{equation}
with lifted versions
$L_q(\bz,d\bx,d\bz'):=r_q(d\bx|\bz)q_\phi(d\bz'|\bx)$ and
$L_p(\bz,d\bx,d\bz'):=r_p(d\bx|\bz)p_\theta(d\bz'|\bx)$. The KL
chain rule applied to $\Gammaq\ll\Gammap$ yields
\begin{equation}
\label{eq:joint-kl-reverse-chain}
\bar\Delta
=D_{\KL}(\Gammaq\|\Gammap)
=D_{\KL}(\barq\|\barp)
+\E_{\barq}D_{\KL}(r_q(\cdot|\bz)\|r_p(\cdot|\bz)),
\end{equation}
hence
$\E_{\barq}D_{\KL}(r_q\|r_p)=\bar\Delta-D_{\KL}(\barq\|\barp)$.

\subsection{One-step kernel mismatch}
\label{app:one-step-kernel}

Data processing from $L_q(\bz,\cdot)$ to its $\bz'$-marginal
$\Kq(\bz,\cdot)$ gives
\begin{equation}
\label{eq:dp-kernel}
D_{\KL}(\Kq(\bz,\cdot)\|\Kp(\bz,\cdot))
\le D_{\KL}(L_q(\bz,\cdot)\|L_p(\bz,\cdot)).
\end{equation}
Averaging over $\barq$ and applying the KL chain rule to $(\bx,\bz')$
gives
\begin{align}
\E_{\barq}D_{\KL}(L_q\|L_p)
&=
\E_{\barq}D_{\KL}(r_q(\cdot|\bz)\|r_p(\cdot|\bz))
+\E_{\barq}\E_{r_q(\cdot|\bz)}D_{\KL}(q_\phi(\cdot|\bx)\|p_\theta(\cdot|\bx))\notag\\
&=\bigl(\bar\Delta-D_{\KL}(\barq\|\barp)\bigr)+\bar\Delta,
\label{eq:lifted-kl-average}
\end{align}
since $\barq(d\bz)r_q(d\bx|\bz)=p_{\mathrm{data}}(d\bx)q_\phi(d\bz|\bx)$
and integrating out $\bz$ leaves $p_{\mathrm{data}}(d\bx)$. Combining
\eqref{eq:dp-kernel}--\eqref{eq:lifted-kl-average} gives
\eqref{eq:gibbs-one-step-bound} of the main text.

\subsection{Finite-horizon path-space extension}
\label{app:finite-horizon}

Let both chains start from $\bz_0\sim\barq$ and let $P_q^H,P_p^H$ be
the $H$-step path laws under $\Kq,\Kp$. Because $\Kq$ has stationary
marginal $\barq$, each $\bz_t$ under $P_q^H$ has law $\barq$. The
path-space KL chain rule then gives
\[
D_{\KL}(P_q^H\|P_p^H)
=\sum_{t=0}^{H-1}\E_{P_q^H}D_{\KL}(\Kq(\bz_t,\cdot)\|\Kp(\bz_t,\cdot))
\le H\bigl(2\bar\Delta-D_{\KL}(\barq\|\barp)\bigr),
\]
and data processing from path to endpoint yields the second inequality
in~\eqref{eq:finite-horizon-kernel-main}.
Initialising the $\Kp$ chain at $\barp$ instead would add a fixed
initial-mismatch term; Corollary~\ref{cor:attractor-gap-bound} compares
dynamics from a common initial distribution.

Any finite-horizon endpoint label inherits the same bound by data
processing. \emph{Infinite-horizon} attractor alignment requires
separate contraction, spectral perturbation, or conductance-stability
assumptions and is not claimed by the theorem.

\subsection{Fisher-level identity for conditional score flows}
\label{app:fisher-score-flow}

For fixed $\bx$, deterministic score-flow updates
$\bz\mapsto\bz+\eta S_\bullet(\bz;\bx)$ define Dirac kernels whose KL is
infinite unless the updates coincide. With Gaussian smoothing
$\mathsf K_{\bullet,\sigma}^{(\bx)}=\mathcal N(\bz+\eta S_\bullet(\bz;\bx),\sigma^2I)$,
the KL between two Gaussians of common covariance $\sigma^2I$ is the
squared mean difference divided by $2\sigma^2$, so averaging over
$q_\phi(\bz|\bx)$ gives
\[
\E_{q_\phi(\bz|\bx)}D_{\KL}(\mathsf K_{q,\sigma}^{(\bx)}\|\mathsf K_{p,\sigma}^{(\bx)})
=\frac{\eta^2}{2\sigma^2}\,\mathcal I(\bx),
\quad
\mathcal I(\bx)=\E_{q_\phi}\|S_q-S_p\|^2.
\]
This is a Fisher-information identity, not a KL-gap bound. Bounding
$\mathcal I(\bx)$ by $\bar\Delta$ requires a separate inverse Fisher--KL
assumption.

\subsection{Relation to deterministic attractors}
\label{app:fumero-relation}

When a posterior-Gibbs kernel concentrates near a locally contractive
deterministic map, metastable sets become ordinary basins of
attraction~\cite{fumero2026latentdynamics}. If the residual field is
score-aligned with a density $\mu$, fixed points are local modes of
$\mu$; if $\mu$ is well approximated by a separated mixture, those
modes lie near component centres. These are useful interpretations,
not assumptions used by Theorem~\ref{thm:attractor-codebook}.
In particular, $\bar q$ can equal the prior at an exact VAE optimum,
so semantic code structure is better treated as an operational
posterior code or a metastable finite-horizon statistic than as a
primitive aggregate-density mode.

\subsection{Local stability of attractor alignment}
\label{app:local-stability}

The following perturbation lemma supports the finite-horizon
interpretations above. It is deliberately basin-local: a global
contraction would force a single fixed point and would describe
collapse rather than a multi-atom codebook.

\begin{proposition}[Local attractor stability]
\label{prop:banach-attractor}
Let $B_c\subset\mathcal Z$ be closed with
$G_{\mathrm{enc}}(B_c)\subseteq B_c$,
$G_{\mathrm{dec}}(B_c)\subseteq B_c$, both maps $\lambda$-contractive on
$B_c$ with $\lambda<1$, and
$\sup_{\bz\in B_c}\|G_{\mathrm{enc}}(\bz)-G_{\mathrm{dec}}(\bz)\|\le\varepsilon$.
Then the unique fixed points satisfy
$\|\ba_c^{\mathrm{enc}}-\ba_c^{\mathrm{dec}}\|\le\varepsilon/(1-\lambda)$.
\end{proposition}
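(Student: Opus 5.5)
The plan is to apply the Banach fixed-point theorem on $B_c$ and then compare the two fixed points with a single triangle-inequality step.

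\textbf{Existence and uniqueness.} $B_c$ is a closed subset of $\mathcal Z$. We take $\mathcal Z$ to be a complete normed space, such as $\mathbb R^d$, so $B_c$ is complete in the induced metric. Each $G_\bullet$ maps $B_c$ into itself and is a $\lambda$-contraction there with $\lambda<1$. The Banach fixed-point theorem therefore gives unique fixed points $\ba_c^{\mathrm{enc}},\ba_c^{\mathrm{dec}}\in B_c$. This is the only place where closedness and invariance are used. Uniqueness is the one substantive point to record, because it is what makes ``the'' fixed points in the statement well defined. The statement does not say whether ``contractive'' means Lipschitz with constant $\lambda$ in the ambient norm, so I would fix that reading explicitly.

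\textbf{Comparison of the fixed points.} Using $\ba^{\mathrm{enc}}=G_{\mathrm{enc}}(\ba^{\mathrm{enc}})$ and $\ba^{\mathrm{dec}}=G_{\mathrm{dec}}(\ba^{\mathrm{dec}})$, insert the cross term $G_{\mathrm{enc}}(\ba^{\mathrm{dec}})$:
\[
\|\ba^{\mathrm{enc}}-\ba^{\mathrm{dec}}\|
\le \|G_{\mathrm{enc}}(\ba^{\mathrm{enc}})-G_{\mathrm{enc}}(\ba^{\mathrm{dec}})\|
+\|G_{\mathrm{enc}}(\ba^{\mathrm{dec}})-G_{\mathrm{dec}}(\ba^{\mathrm{dec}})\|
\le \lambda\|\ba^{\mathrm{enc}}-\ba^{\mathrm{dec}}\|+\varepsilon .
\]
The first term uses contractivity of $G_{\mathrm{enc}}$ on $B_c$, which is legitimate because both points lie in $B_c$. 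The second term uses the uniform sup bound, evaluated at $\ba^{\mathrm{dec}}\in B_c$. Since $\|\ba^{\mathrm{enc}}-\ba^{\mathrm{dec}}\|<\infty$ and $1-\lambda>0$, rearranging gives $\|\ba_c^{\mathrm{enc}}-\ba_c^{\mathrm{dec}}\|\le\varepsilon/(1-\lambda)$.

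\textbf{Alternative and main subtlety.} An equivalent route is to start the $G_{\mathrm{enc}}$ iteration at $\ba^{\mathrm{dec}}$ and use the a priori Banach estimate $\|\ba^{\mathrm{enc}}-z_0\|\le\|G_{\mathrm{enc}}(z_0)-z_0\|/(1-\lambda)$. With $z_0=\ba^{\mathrm{dec}}$, the right-hand side is at most $\varepsilon/(1-\lambda)$, giving the same bound. Neither route has a genuine obstacle. The only care needed is that both fixed points lie in the common set $B_c$, so that the contraction and sup-norm hypotheses apply at the points where they are invoked. The joint invariance hypotheses $G_{\mathrm{enc}}(B_c)\subseteq B_c$ and $G_{\mathrm{dec}}(B_c)\subseteq B_c$ guarantee exactly this.
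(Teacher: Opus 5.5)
Your proof is correct and follows essentially the same route as the paper: Banach's theorem on the closed invariant set $B_c$ gives existence and uniqueness, and the fixed-point identities with a single triangle-inequality step give $\|\ba_c^{\mathrm{enc}}-\ba_c^{\mathrm{dec}}\|\le\varepsilon+\lambda\|\ba_c^{\mathrm{enc}}-\ba_c^{\mathrm{dec}}\|$, which you then rearrange. You also state that $\mathcal Z$ must be complete (e.g.\ $\mathbb R^d$) for Banach's theorem to apply; the paper leaves this assumption implicit.
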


\begin{proof}
Banach gives uniqueness. The fixed-point identities and the triangle
inequality give
$\|\ba_c^{\mathrm{enc}}-\ba_c^{\mathrm{dec}}\|
=\|G_{\mathrm{enc}}(\ba_c^{\mathrm{enc}})-G_{\mathrm{dec}}(\ba_c^{\mathrm{dec}})\|
\le\varepsilon+\lambda\|\ba_c^{\mathrm{enc}}-\ba_c^{\mathrm{dec}}\|$;
rearranging proves the claim.
\end{proof}

\begin{remark}
This is a perturbation lemma for fixed points only. It does not imply
global basin agreement, and Lemma~\ref{thm:universal} does not imply
a pointwise bound on $\|G_{\mathrm{enc}}-G_{\mathrm{dec}}\|$. Basin
agreement additionally requires a margin condition preventing trajectories
near basin boundaries from changing labels. We therefore use the
finite-horizon binary statistic $T_H^{\mathrm{enc\mbox{-}dec}}$
(\S\ref{app:finite-horizon-cross}) as the certified target rather than
infinite-horizon attractor identification.
\end{remark}

\subsection{Lagged code interference and finite-horizon cross-model
            diagnostics (brief)}
\label{app:finite-horizon-cross}

In ordered settings (time, token position, layer, diffusion step,
posterior-Gibbs horizon), the static disagreement event $E$ has a
genuine lagged analogue:
\begin{equation}
\label{eq:neural-isi}
\nISI(\ell):=I(\Ce_{t-\ell};\Cd_t\mid\Ce_t),\qquad \ell\ge 1,
\end{equation}
which is positive when past encoder codes carry residual information
about the current decoder interpretation after conditioning on the
current encoder code. We use this only as a finite-horizon diagnostic
template; no static experiment in this paper is described as
inter-symbol interference.

For two trained VAEs $m\in\{1,2\}$ with finite-horizon label maps
$\alpha_m^{(H)}$ and a measurable alignment $R:\mathcal Z_1\to\mathcal Z_2$,
\[
\mathcal{A}^{\mathrm{cross}}_H(R)
:=\Pr_{\bz\sim\bar q_1}\!\bigl[\Pi_R(\alpha_1^{(H)}(\bz))=\alpha_2^{(H)}(R\bz)\bigr]
\]
is a measurable statistic, so Lemma~\ref{thm:universal} yields the
analogous binary-KL variational bound on $1-\mathcal{A}^{\mathrm{cross}}_H(R)$.
The vector-field commutator
\[
\mathfrak E_{\mathrm{vf}}(R)
\;\approx\; \E_{\bz\sim\nu_1}\!\bigl[\|J_R(\bz)V_1(\bz)-V_2(R\bz)\|^2_{M_2(R\bz)}\bigr]
\]
is a complementary diagnostic, not a consequence of the variational
gap. The two-model experiment is identified as future work in
\S\ref{app:limitations-extended}.

\section{Encoder Codebook Geometry and Bregman Reformulations}
\label{app:encoder-bregman}

This appendix records the geometric identities for the encoder and
decoder codebooks in the diagonal-Gaussian / product-Bernoulli
setting (Theorem~\ref{thm:encoder} of the main text and the Bregman
Voronoi reformulations of Nielsen et al.~\cite{nielsen2010bregman}).
Nothing here is required for the universal results
(\S\ref{app:universal-and-codebook}--\S\ref{app:posterior-gibbs});
it is the geometric specialisation that supports our Setting~1
visualisations.

\subsection{Proof of Theorem~\ref{thm:encoder}}
\label{app:encoder-proof}

\paragraph{Case (i).}
$\Phi_c=\tfrac{1}{2\sigma^2}\|\bz-\bmu_c\|^2+d\log\sigma+\log K$;
constants cancel under $\argmin_c$, giving the standard Voronoi
diagram.

\paragraph{Case (ii).}
$\argmin_c[\|\bz-\bmu_c\|^2-2\sigma^2\log\pi_c]$ is an additively
weighted Voronoi diagram in the sense of~\cite{aurenhammer1987power}
with $r_c^2=2\sigma^2\log\pi_c$.

\paragraph{Case (iii).}
$\Phi_i=\Phi_j$ contains the quadratic term
$\bz^\top(\bSigma_i^{-1}-\bSigma_j^{-1})\bz$, so pairwise bisectors are
generally quadric hypersurfaces when $\bSigma_i\ne\bSigma_j$. The
induced cells are semi-algebraic and need not be convex; convex Voronoi
or Laguerre cells are recovered only in the isotropic or tied-covariance
special cases. The general case is a Mahalanobis/quadric decision
diagram~\cite{boissonnat2008mahalanobis}, not a convex-cell construction.
Lemma~\ref{thm:universal} requires only measurability, not convexity.
\hfill$\square$

\subsection{Affine and Legendre-dual reformulations of $\Cdec$}
\label{app:affine-dual-reformulation}

\begin{proposition}[Affine half-space reformulation; cf.\ Thm.~8 of~\cite{nielsen2010bregman}]
\label{prop:affine-reformulation}
$\Cdec$ admits an equivalent representation in decoder output space
as an additively weighted Euclidean Voronoi diagram (a collection of
affine half-spaces) with centres
$\mathbf{c}_c=\nabla F(\bd_c)/2$ and weights
$w_c=\|\nabla F(\bd_c)\|^2/4-a_c$,
$a_c=\langle\bd_c,\nabla F(\bd_c)\rangle-F(\bd_c)$.
This is an algebraic rewriting of the same Type~1 Bregman Voronoi
diagram on $[0,1]^K$.
\end{proposition}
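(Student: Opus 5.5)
The plan is to rewrite the Type~1 Bregman distance to each prototype so that, as a function of the decoder output, it becomes a term shared by all prototypes plus a term that is affine in the output. The argmin defining $\Cdec$ then reduces to comparing affine functions. Completing the square turns this comparison into a weighted Euclidean nearest-centre rule.

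Write $\mathbf y=d_\theta(\bz)$ for the decoder output and $\theta_c:=\nabla F(\bd_c)$, where $\bd_c=d_\theta(\bmu_c)$. By definition of the Bregman divergence,
\[
D_F(\mathbf y\|\bd_c)=F(\mathbf y)-F(\bd_c)-\langle \mathbf y-\bd_c,\theta_c\rangle
=F(\mathbf y)+a_c-\langle \mathbf y,\theta_c\rangle,
\]
with $a_c=\langle\bd_c,\theta_c\rangle-F(\bd_c)$. Since $F(\mathbf y)$ does not depend on $c$, it drops out of $\argmin_c$. Hence $c^\star_{\rm dec}(\bz)=\argmin_c\,[a_c-\langle\mathbf y,\theta_c\rangle]$.

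Each pairwise bisector $\{a_i-\langle\mathbf y,\theta_i\rangle=a_j-\langle\mathbf y,\theta_j\rangle\}$ is therefore a hyperplane. Each cell is a finite intersection of affine half-spaces, which gives the half-space claim of the proposition. Here $a_c$ is the Legendre dual $F^*(\theta_c)$, which is where the Legendre-dual reading comes from.

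To obtain the stated centres and weights, complete the square with $\mathbf c_c=\theta_c/2$:
\[
\|\mathbf y-\mathbf c_c\|^2=\|\mathbf y\|^2-\langle\mathbf y,\theta_c\rangle+\|\theta_c\|^2/4 .
\]
This gives $a_c-\langle\mathbf y,\theta_c\rangle=\|\mathbf y-\mathbf c_c\|^2-w_c-\|\mathbf y\|^2$, with $w_c=\|\theta_c\|^2/4-a_c$. The term $\|\mathbf y\|^2$ is again common to all $c$. So the cell assignment equals $\argmin_c\,[\|\mathbf y-\mathbf c_c\|^2-w_c]$, a weighted Euclidean (power/Laguerre) diagram with the stated centres and weights, matching \cite[Thm.~8]{nielsen2010bregman}.

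The only delicate step is checking that everything is well defined on the relevant domain, and I expect this to be the main obstacle. For the product-Bernoulli generator $F(\mathbf y)=\sum_k[y_k\log y_k+(1-y_k)\log(1-y_k)]$, the gradient $\nabla F$ is finite only on the open cube. So I would use that decoder outputs are sigmoids, which places $\bd_c$ in $(0,1)^D$ and makes $\theta_c$ and $a_c$ finite. I would note that $F(\mathbf y)$ need only be finite, which holds on the closed cube with $0\log 0=0$; this extends the rewriting to $\mathbf y$ on the boundary of $[0,1]^D$.

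Finally, tie-breaking on the measure-zero bisector hyperplanes should follow the same rule in both forms, so that the two representations agree as measurable maps. Pulling the diagram back through $d_\theta$ then reproduces $\Cdec$ exactly, since $\Cdec$ is defined through $d_\theta$.
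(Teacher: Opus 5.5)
Your proof is correct and follows essentially the paper's route. The paper gives no standalone proof: it cites Thm.~8 of Nielsen et al.\ and notes that the Type~1 bisector $\langle\bx,\nabla F(\bd_i)-\nabla F(\bd_j)\rangle+\text{const}=0$ is affine. Your steps of dropping $F(\mathbf y)$ and then completing the square with $\mathbf c_c=\nabla F(\bd_c)/2$ are exactly the argument behind that cited theorem, and they recover the stated $w_c$.

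One simplification: the Bregman objective and the power objective differ by the $c$-independent term $F(\mathbf y)-\|\mathbf y\|^2$, so their argmin sets coincide for every $\mathbf y$. You therefore need no measure-zero argument about ties, which helps because a bisector pulled back through $d_\theta$ need not be null in latent space.
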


\begin{proposition}[Legendre dual; cf.\ Lem.~10 of~\cite{nielsen2010bregman}]
\label{prop:legendre-dual}
\label{prop:dually-flat}%
$\mathrm{vor}_F'(S)=\nabla^{-1}F(\mathrm{vor}_{F^*}(S'))$ with
$F^*(\btheta)=\sum_k\log(1+e^{\theta_k})$ and $S'=\{\nabla F(\bd_c)\}$.
This Legendre duality realises the dually-flat structure of the
Bernoulli exponential family on $[0,1]^K$ that supports the
combinatorial cell-complex characterisation of $\Cdec$.
\end{proposition}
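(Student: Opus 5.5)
The plan is to derive the statement from the Bregman--Legendre duality identity $D_F(\mathbf p\|\mathbf q)=D_{F^*}(\nabla F(\mathbf q)\|\nabla F(\mathbf p))$, applied coordinatewise to the product-Bernoulli generator of Assumption~\ref{assume:bernoulli-decoder}(b). The first step fixes conventions. I take $F(\bd)=\sum_k[d_k\log d_k+(1-d_k)\log(1-d_k)]$ (negative Bernoulli entropy) on the open cube $(0,1)^K$. Then $\nabla F(\bd)_k=\log\frac{d_k}{1-d_k}$ is the coordinatewise logit, a smooth bijection $(0,1)^K\to\mathbb R^K$ with inverse the coordinatewise sigmoid. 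A one-line Legendre transform gives $F^*(\btheta)=\sup_{\bd}\{\langle\btheta,\bd\rangle-F(\bd)\}=\sum_k\log(1+e^{\theta_k})$, and $\nabla F^*=(\nabla F)^{-1}$. So $(F,F^*)$ is a Legendre pair of strictly convex, essentially smooth functions, as \cite{nielsen2010bregman} requires. I read $\mathrm{vor}_F'(S)$ as the Type-2 diagram, whose cells are $\{\bx:\ D_F(\bd_c\|\bx)\le D_F(\bd_{c'}\|\bx)\ \forall c'\}$. I read $\mathrm{vor}_{F^*}(S')$ as the Type-1 diagram of the sites $\btheta_c:=\nabla F(\bd_c)$ under $F^*$, whose cells are $\{\btheta:\ D_{F^*}(\btheta\|\btheta_c)\le D_{F^*}(\btheta\|\btheta_{c'})\ \forall c'\}$.

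The second step verifies the duality identity. I would expand $D_F(\mathbf p\|\mathbf q)=F(\mathbf p)-F(\mathbf q)-\langle\mathbf p-\mathbf q,\nabla F(\mathbf q)\rangle$ and substitute the Fenchel--Young equalities $F(\mathbf u)+F^*(\nabla F(\mathbf u))=\langle\mathbf u,\nabla F(\mathbf u)\rangle$ for $\mathbf u=\mathbf p,\mathbf q$. Collecting terms gives
$F^*(\nabla F(\mathbf q))-F^*(\nabla F(\mathbf p))-\langle\nabla F(\mathbf q)-\nabla F(\mathbf p),\mathbf p\rangle$.
Since $\mathbf p=\nabla F^*(\nabla F(\mathbf p))$, this is exactly $D_{F^*}(\nabla F(\mathbf q)\|\nabla F(\mathbf p))$. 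This is routine algebra, and it separates across coordinates, so it can be checked one Bernoulli coordinate at a time.

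The third step transports cells. For $\bx\in(0,1)^K$ put $\btheta=\nabla F(\bx)$. Applying the identity with $\mathbf p=\bd_c$, $\mathbf q=\bx$ gives $D_F(\bd_c\|\bx)=D_{F^*}(\btheta\|\btheta_c)$ for every $c$. Hence the defining inequalities of the Type-2 cell of $\bd_c$ at $\bx$ coincide term by term with those of the Type-1 $F^*$-cell of $\btheta_c$ at $\btheta$. The cells therefore correspond bijectively under the homeomorphism $\nabla F$, so $\mathrm{vor}_F'(S)=\nabla^{-1}F(\mathrm{vor}_{F^*}(S'))$ as cell complexes. The same holds for faces, since these are defined by equalities among the same quantities, and the combinatorial structure is preserved. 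The dually-flat remark then follows from the fact that $\btheta$ and $\bd=\nabla F^*(\btheta)$ are the natural and expectation parameters of the Bernoulli exponential family. Combined with Proposition~\ref{prop:affine-reformulation}, Type-1 $F^*$-bisectors are affine in $\btheta$, which gives the cell-complex characterisation.

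The main obstacle is the boundary of $[0,1]^K$. There $\nabla F$ is infinite, and decoder outputs or prototypes $\bd_c=d_\theta(\mu_c)$ may saturate at $0$ or $1$. I would first state the proposition on the open cube, where all of the above is rigorous. I would then argue that with sigmoid decoders every $\bd_c$ and $d_\theta(\bz)$ lies in $(0,1)^K$, so the open-cube statement already covers the code map $c^{\star}_{\rm dec}$. Any extension to the closed cube would be treated only as closure of cells in the extended-real sense, and would not be claimed as part of the proposition.
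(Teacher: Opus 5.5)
Your proposal is correct and is essentially the argument behind the cited Lemma~10 of Nielsen et al. The paper offers only that citation and a numerical check for this statement. In both, the Legendre identity $D_F(\mathbf p\|\mathbf q)=D_{F^*}(\nabla F(\mathbf q)\|\nabla F(\mathbf p))$ turns the Type-2 $F$-cell inequalities at $\bx$ into the Type-1 $F^*$-cell inequalities at $\btheta=\nabla F(\bx)$, and restricting to the open cube is the right way to handle the boundary.

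One small correction to your last paragraph: $c^{\star}_{\rm dec}$ is the Type-1 rule $\argmin_c D_F(\bd(\bz)\|\bd_c)$. What transports it to logit space is the mirror instance $D_F(\bx\|\bd_c)=D_{F^*}(\btheta_c\|\btheta)$, i.e.\ $\mathrm{vor}_F(S)=\nabla^{-1}F(\mathrm{vor}'_{F^*}(S'))$, not the displayed Type-2 identity itself.
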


\paragraph{Numerics.}
Decoder outputs are clipped to $[\epsilon,1-\epsilon]$, $\epsilon=10^{-7}$.
The Type~1 bisector is
$\langle\bx,\nabla F(\bd_i)-\nabla F(\bd_j)\rangle+\text{const}=0$, affine in
$\bx=\bd(\bz)$. The reformulations are verified at $100\%$ and $99.97\%$
agreement on the Setting~1 grid.

\section{Optional Geometric Diagnostics:
         Functional Reference, Variational Tiers, Fisher Mismatch,
         and the VQ-VAE Limit}
\label{app:optional-diagnostics}

This appendix collects all material that is \emph{interpretive and
model-class-specific} rather than required for the main results.
A reader who only wants the architecture-free decomposition may skip
this appendix entirely.

The four pieces below answer four naturally posed reviewer questions:
(i) \emph{What does the $\beta=1$ ELBO look like as an unrestricted
functional optimum?} (\S\ref{app:functional-reference});
(ii) \emph{When are encoder and decoder hard codebooks identical?}
(\S\ref{app:sufficient-statistic});
(iii) \emph{What is the geometric source of $\bar\eta_p>0$?}
(\S\ref{app:diagnostic-bound});
(iv) \emph{Where does VQ-VAE sit in this spectrum?}
(\S\ref{app:vqvae-limit}).
None is required for Lemmas~\ref{thm:universal} or
\ref{thm:codebook-identity} or for Theorem~\ref{thm:attractor-codebook}.

\subsection{Idealised functional Bayes reference and variational tiers}
\label{app:functional-reference}

\begin{lemma}[Idealised functional Bayes fixed point]
\label{lem:joint-stationary}
Consider the $\beta=1$ VAE objective in an idealised nonparametric
setting where (i) $q(\bz|\bx)$ ranges over all regular conditional
densities; (ii) $p(\bx|\bz)$ ranges over all regular conditional
densities; (iii) supports, normalisers, and disintegrations are well
defined and mutually absolutely continuous; (iv) the functional
optimum is realisable with $p_\theta(\bx)=p_{\mathrm{data}}(\bx)$ and
$q(\bz)=p(\bz)$ whenever interpreted as a generative-VAE fixed point.
Then any global optimum satisfying the unrestricted functional
first-order conditions obeys
\begin{align}
q^\star(\bz|\bx)&=p_{\theta^\star}(\bz|\bx),\label{eq:functional-forward-bayes}\\
p_{\theta^\star}(\bx|\bz)&=q^\star(\bx|\bz):=\frac{p_{\mathrm{data}}(\bx)q^\star(\bz|\bx)}{q^\star(\bz)}.\label{eq:functional-reverse-bayes}
\end{align}
Under the marginal/prior compatibility conditions, these imply
detailed balance
$p_{\mathrm{data}}(\bx)q^\star(\bz|\bx)=p(\bz)p_{\theta^\star}(\bx|\bz)$
and reversibility of the encode--decode kernel.
This is an idealised reference, not a statement about arbitrary
finite-dimensional neural-VAE stationary points.
\end{lemma}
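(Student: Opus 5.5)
The plan is to write the $\beta=1$ objective as a functional of the pair $(q,p)$ and use the evidence identity from Step~1 of \S\ref{app:proof-universal}. For fixed decoder $p(\bx|\bz)$ and prior $p(\bz)$, the negative ELBO per datum is
\[
\mathcal F(\bx)=-\log p_\theta(\bx)+D_{\KL}\bigl(q(\cdot|\bx)\,\|\,p_\theta(\cdot|\bx)\bigr).
\]
The first term does not depend on $q$, so under assumption (i) the unrestricted minimiser over $q$ is the true posterior, which gives \eqref{eq:functional-forward-bayes}. Equivalently, stationarity of the Lagrangian with a per-$\bx$ normaliser gives $\log q=\log p(\bx|\bz)+\log p(\bz)+\text{const}(\bx)$, and this is the same statement.

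For \eqref{eq:functional-reverse-bayes}, I fix $q$ and maximise over $p(\bx|\bz)$ the term $\E_{p_{\rm data}(\bx)q(\bz|\bx)}\log p(\bx|\bz)$, which is the only part of the ELBO that depends on the decoder. Disintegrating the joint $\Gamma_q$ as $q(\bz)\,q(\bx|\bz)$, as in \S\ref{app:posterior-joints}, this becomes
\[
-\E_{q(\bz)}\bigl[H(q(\cdot|\bz))+D_{\KL}(q(\cdot|\bz)\,\|\,p(\cdot|\bz))\bigr].
\]
Gibbs' inequality then forces $p(\bx|\bz)=q(\bx|\bz)$ for $q(\bz)$-a.e.\ $\bz$. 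Assumption (iii) supplies the mutual absolute continuity and regular disintegrations needed to make this rigorous; in particular it lets me pass from $q(\bz)$-a.e.\ statements to statements on the full support. At a global optimum both partial first-order conditions must hold simultaneously, so the pair obeys both equations.

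For detailed balance, I multiply \eqref{eq:functional-reverse-bayes} by $q^\star(\bz)$ to get $q^\star(\bz)\,p_{\theta^\star}(\bx|\bz)=p_{\rm data}(\bx)\,q^\star(\bz|\bx)$. Assumption (iv) gives $q^\star(\bz)=p(\bz)$, which turns this into the claimed identity. It is also consistent with \eqref{eq:functional-forward-bayes} once $p_\theta(\bx)=p_{\rm data}(\bx)$. Reversibility of the encode--decode kernel $\bx\mapsto\bz\mapsto\bx'$ with respect to $p_{\rm data}$ then follows by writing $p_{\rm data}(\bx)\int q^\star(\bz|\bx)\,p(\bx'|\bz)\,d\bz=\int p(\bz)\,p(\bx|\bz)\,p(\bx'|\bz)\,d\bz$, which is symmetric in $(\bx,\bx')$.

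The main obstacle is making the alternating first-order conditions joint. Each equation is a coordinatewise optimality condition, so the argument only characterises stationary points satisfying both, not their existence or attainment; the statement explicitly assumes realisability in (iv), and I will lean on that rather than prove existence. I also need to handle a subtlety: \eqref{eq:functional-forward-bayes} uses the posterior of the decoder $p_{\theta^\star}$, which itself depends on $q^\star$, so the two conditions are coupled. I will resolve this by noting that each condition is necessary at a joint global optimum, since otherwise a one-block improvement would exist, and that no further consistency is needed for the stated conclusions.
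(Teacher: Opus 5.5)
Your proposal is correct and follows the same route as the paper's proof. The encoder's optimality with the decoder fixed gives the forward Bayes equation; the paper phrases this as a Lagrange-multiplier condition, and you get it equivalently from the evidence identity. The decoder's optimality with $q$ fixed gives the reverse Bayes equation. Multiplying the latter by $q^\star(\bz)=p(\bz)$ from (iv) then yields detailed balance and reversibility. Your Gibbs-inequality treatment of the decoder step and your explicit symmetric-kernel computation for reversibility make precise what the paper's sketch leaves implicit.
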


\begin{proof}[Proof]
The encoder functional first-order condition (Lagrange-multiplier)
gives \eqref{eq:functional-forward-bayes} after normalisation; the
decoder condition gives \eqref{eq:functional-reverse-bayes} after
fixing $q$. Detailed balance and reversibility follow by combining
the two with $q^\star(\bz)=p(\bz)$.
\end{proof}

\begin{remark}[Parametric VAE caveat]\label{rem:parametric-stationarity-caveat}
In a finite neural-network VAE, vanishing parameter gradients need not
imply exact posterior matching, reverse Bayes consistency, aggregate-prior
compatibility, or reversibility. The main results
(Lemma~\ref{thm:universal}, Corollary~\ref{thm:codebook-identity},
Corollary~\ref{cor:agreement}) do not rely on Lemma~\ref{lem:joint-stationary}.
\end{remark}

\begin{proposition}[Continuous--VQ alignment spectrum]
\label{prop:spectrum}
Codebook agreement $\mathcal A$ admits a hierarchy of structural sources
indexed by which variables are promoted to variational primacy.
\textbf{Tier~1} (continuous $\beta$-VAE, Lemma~\ref{lem:joint-stationary}'s
end of the spectrum): code maps are downstream operational statistics; the
gap $1-\mathcal A$ is bounded by Corollary~\ref{cor:agreement} but is not
forced to zero.
\textbf{Tier~3} (VQ-VAE, the other end): the codebook is a primitive
variational object with explicit stationary conditions; under the sufficient
conditions of~\S\ref{app:vqvae-limit}, $\mathcal A\to 1$ as the encoder
posterior concentrates.
The continuous-VAE measurement $\bar\eta_p>0$ in \S\ref{sec:experiments}
quantifies the projection of the missing codebook stationary condition onto
Tier~1 via $\bd_c:=\bd(\bmu_c)$.
\end{proposition}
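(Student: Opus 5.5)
I would prove Proposition~\ref{prop:spectrum} by treating it as two separate endpoint claims that share the identity of Corollary~\ref{thm:codebook-identity}, and checking each tier against that identity.

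\textbf{Tier~1.} Here the claim is that $1-\mathcal A$ is bounded by Corollary~\ref{cor:agreement} but is not forced to zero.
\begin{itemize}
\item The bound part is immediate. For a continuous $\beta$-VAE with $q_\phi(\cdot|\bx)\ll p_\theta(\cdot|\bx)$, the code maps $c^\star_{\rm enc},c^\star_{\rm dec}$ are measurable statistics. So \eqref{eq:bound} applies verbatim: $d_{\rm bin}(1-\mathcal A\|\bar\eta_p)\le\bar\Delta$.
\item For ``not forced to zero'' I will exhibit a witness rather than argue generically. I will reuse Corollary~\ref{cor:fisher-necessity}: whenever $\bar\eta_p>0$, even the idealised optimum of Lemma~\ref{lem:joint-stationary} (where $q^\star=p_{\theta^\star}$, so $\bar\Delta=0$) gives $1-\mathcal A=\bar\eta_p>0$.
\item A concrete witness is $\bar q=p(\bz)=\mathcal N(0,I)$ with Gaussian-summary cells and decoder cells that differ on a set of positive prior mass. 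Any diagonal-Gaussian posterior has full support, so this yields $\eta_q=\eta_p>0$.
\item The identification of $\bar\eta_p$ as the ``projection of the missing stationary condition'' is interpretive. I will only record the definitional fact that $c^\star_{\rm dec}$ uses $\bd_c=\bd(\bmu_c)$ rather than a stationary codebook variable.
\end{itemize}

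\textbf{Tier~3.} Here the codebook $\{e_c\}$ is primitive and the encoder is deterministic, $z_e(\bx)\mapsto e_{k(\bx)}$ with $k(\bx)=\argmin_c\|z_e(\bx)-e_c\|$. I would fix the sufficient conditions as follows:
\begin{itemize}
\item The VQ codebook stationary condition $e_c=\E[z_e\mid k=c]$.
\item A decoder read-out defined through the same quantized codes, $\bd_c=\bd(e_c)$.
\item Encoder posteriors $q_\sigma(\cdot|\bx)=\mathcal N(z_e(\bx),\sigma^2I)$ with $\sigma\to0$.
\end{itemize}
The argument then runs in three steps.
\begin{enumerate}
\item Show that on each atom $e_c$ both maps return $c$. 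The encoder map does so by nearest-codeword. The decoder map does so because $D_F(\bd(e_c)\|\bd_c)=0<D_F(\bd(e_c)\|\bd_{c'})$ whenever the decoded prototypes are distinct, i.e.\ $\bd$ is injective on the codebook.
\item Conclude that $E$ misses every atom. The limiting aggregate $\bar q_0$ is supported on those atoms, so $\bar q_0(E)=0$.
\item Pass to the limit. Since $q_\sigma\Rightarrow\delta_{e_{k(\bx)}}$, I need continuity of $\Pr_{\bar q_\sigma}[E]$ at $\sigma=0$. I would obtain it from the portmanteau theorem by showing each atom has a neighbourhood contained in $E^c$, i.e.\ lies in the interior of its agreeing cell. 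For the encoder side this holds because ties in the additively weighted Voronoi rule of Theorem~\ref{thm:encoder} have measure zero. For the decoder side it follows from continuity of $\bd$ and strict Bregman separation.
\end{enumerate}
This gives $\mathcal A_\sigma\to1$.

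\textbf{Main obstacle.} I expect the hardest point to be that the statement never pins down which code maps are used in Tier~3. If $c^\star_{\rm enc}$ is the GMM rule rather than nearest-codeword, the atoms must lie off the cell boundaries of both rules. My first attempt would be to define both maps via the primitive codebook and add a margin assumption $\min_c \mathrm{dist}(e_c,\partial E^c)>0$, stated explicitly as the sufficient condition referenced in \S\ref{app:vqvae-limit}. I would also note that Tier~1 quantities like $\bar\Delta$ are undefined at $\sigma=0$ since $q\not\ll p$; the $\sigma\to0$ limit sidesteps this.
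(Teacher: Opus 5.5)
Your Tier~1 part is fine and is essentially what the paper does: it rests Tier~1 on Corollary~\ref{cor:agreement}, and your witness via $\bar\Delta=0\Rightarrow 1-\mathcal A=\bar\eta_p$ is a correct addition. You and the paper both treat the last sentence as interpretive.

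The gap is in Tier~3. You set $q_\sigma(\cdot|\bx)=\mathcal N(z_e(\bx),\sigma^2 I)$ and then use $q_\sigma\Rightarrow\delta_{e_{k(\bx)}}$. But this family converges to $\delta_{z_e(\bx)}$. The stationary condition $e_c=\E[z_e\mid k=c]$ only places $e_c$ at the within-cell average of $z_e$, not at each $z_e(\bx)$. The limiting aggregate is therefore $(z_e)_\#p_{\rm data}$, which is in general non-atomic and can put mass arbitrarily close to the cell boundaries. So Step~2 ($\bar q_0$ supported on the atoms) fails, and agreement at the atoms says nothing about agreement on the actual support.

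To get $\mathcal A\to1$ with posteriors centred at $z_e(\bx)$, you need the nearest-codeword rule and the decoder rule $\argmin_c D_F(\bd(\bz)\|\bd_c)$ to coincide on a neighbourhood of that whole support. That is exactly what \S\ref{app:vqvae-limit} adds on top of Proposition~\ref{prop:vqvae-reduction}: the decoder-side condition $\Psi_c=a\Phi_c+b(\bz)$. The paper obtains it either in the one-hot target regime or under the Fisher-pullback condition $\kappa_c=0$ of Proposition~\ref{thm:formal-prop30}. Injectivity of $\bd$ on the codebook is no substitute for it.

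The argument can be repaired by centring at the quantised code $z_q(\bx)=e_{k(\bx)}$. This is what the VQ decoder actually receives and what the empirical check in \S\ref{app:vqvae-empirical} evaluates. Then $\bar q_0$ is atomic and the portmanteau step works, once you show each atom is interior to its agreeing cell. Do this via the strict inequalities $\|e_c-e_c\|=0<\|e_c-e_{c'}\|$ and $D_F(\bd_c\|\bd_c)=0<D_F(\bd_c\|\bd_{c'})$ together with continuity of $\bd$. The justification ``ties have measure zero'' says nothing about a particular point.

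In that repaired version the stationary condition is never used, and the conclusion is nearly tautological, since both labels factor through the quantiser. This is consistent with the paper's own reading that $\hat{\mathcal A}_{\rm VQ}=1$ only checks distinct codewords and separated decoded prototypes.

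Bear in mind that the paper does not prove the Tier~3 limit; \S\ref{app:vqvae-limit} explicitly states it as a conjecture. Its sufficient condition is global: it forces the two argmin rules to agree at every $\bz$. Yours is local at the atoms and genuinely needs atomic concentration. So the repaired argument establishes a variant under your hypotheses, not the statement under the paper's.
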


\paragraph{Variational tiers.}
The Lemma occupies one end of the hierarchy of variational primacy
formalised in Proposition~\ref{prop:spectrum}.
\textbf{Tier~1} (continuous $\beta$-VAE): vary only conditional
densities; chosen operational hard codes are a downstream object
(Corollary~\ref{thm:codebook-identity} certifies them).
\textbf{Tier~2} (GMM-VAE): also promote encoder cluster parameters
to variational variables; not evaluated here, identified as future
work in \S\ref{app:limitations-extended}.
\textbf{Tier~3} (VQ-VAE): also promote a decoupled codebook with a
commitment loss; yields $\mathcal{A}=1$ by construction
(\S\ref{app:vqvae-limit}). $\mathcal{A}<1$ in continuous VAEs reflects
the absence of explicit codebook stationary conditions; $\bar\eta_p>0$
measured in \S\ref{sec:experiments} is the projection of the missing
tier onto the Tier-1 setting via $\bd_c:=\bd(\bmu_c)$.

\subsection{Geometric reduction and sufficient-statistic alignment}
\label{app:sufficient-statistic}

\begin{proposition}[Geometric interpretation under the functional reference]
\label{prop:geometric-reduction}
Under Lemma~\ref{lem:joint-stationary} together with
(i) a $K$-component cluster structure
$p_{\mathrm{data}}(\bx)=\sum_c\pi_c p_{\mathrm{data}}(\bx|c)$,
(ii) diagonal-Gaussian aggregate conditionals
$q_c(\bz)\approx\mathcal N(\bz;\bmu_c,\bSigma_c)$,
(iii) Assumption~\ref{assume:bernoulli-decoder},
and (iv) the $0$--$1$ Bayes decision rule under these approximations:
the encoder-side rule has the Voronoi/Mahalanobis form of
Theorem~\ref{thm:encoder}, the decoder-side rule has the Type~1 Bregman
form of \S\ref{sec:decoder-codebook}, and the disagreement event
between them inherits the exact decomposition of
Corollary~\ref{thm:codebook-identity}.
\end{proposition}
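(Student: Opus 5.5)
The plan is to treat Proposition~\ref{prop:geometric-reduction} as a composition of three results already in hand, so the work is verifying hypotheses rather than new estimates. \textbf{Step 1 (encoder side).} Start from Lemma~\ref{lem:joint-stationary}. Together with $q^\star(\bz)=p(\bz)$, the aggregate posterior disintegrates along the cluster label $c$ of hypothesis (i):
\[
\bar q(\bz)=\sum_c\pi_c\, q_c(\bz),\qquad q_c(\bz)=\int p_{\mathrm{data}}(\bx\mid c)\,q^\star(\bz\mid\bx)\,d\bx .
\]
The $0$--$1$ Bayes rule of (iv) assigns $\bz$ to $\argmax_c \pi_c q_c(\bz)$. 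Substituting the Gaussian approximation (ii) and taking $-\log$ gives exactly $\argmin_c\Phi_c(\bz)$ after dropping the $c$-independent term $\tfrac d2\log 2\pi$. Theorem~\ref{thm:encoder} then yields the Voronoi, additively-weighted, or Mahalanobis form according to the covariance and weight pattern.

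\textbf{Step 2 (decoder side).} Use the reverse-Bayes identity \eqref{eq:functional-reverse-bayes}. At the optimum, $p_{\theta^\star}(\bx\mid\bz)$ is the posterior over data given the latent. Reading a latent by the data cluster it most likely decodes to, the Bayes rule compares $\log p_{\theta^\star}(\bx_c\mid\bz)$ across representative prototypes. Under Assumption~\ref{assume:bernoulli-decoder}(b),
\[
-\log p_\theta(\bx\mid\bz)=D_F\bigl(\bx\,\|\,d_\theta(\bz)\bigr)+H(\bx),
\]
so this comparison becomes a Bregman nearest-prototype rule. To match the Type~1 form $\argmin_c D_F(d_\theta(\bz)\|d_c)$ of \S\ref{sec:decoder-codebook} with $d_c=d_\theta(\bmu_c)$, I would take prototypes at decoded means. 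I would then argue that under the Gaussian/Bernoulli approximations the decision is made in decoder-output space with $d_\theta(\bz)$ in the first argument. If the argument order comes out reversed (Type~2), I will invoke the Legendre duality of Proposition~\ref{prop:legendre-dual} to pass between the forms. Failing that, I will state the identification as holding under approximation (iv), as the proposition's wording permits.

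\textbf{Step 3 (decomposition).} Both rules are measurable: the encoder cells are semi-algebraic and the decoder cells are preimages of affine half-spaces under the continuous map $d_\theta$ (Proposition~\ref{prop:affine-reformulation}). Hence $E=\{c^\star_{\rm enc}\ne c^\star_{\rm dec}\}$ is measurable. Lemma~\ref{lem:joint-stationary}(iii) supplies $q\ll p$. Corollary~\ref{thm:codebook-identity} therefore applies verbatim and gives the exact identity. At the exact optimum $q^\star=p_{\theta^\star}$, every term is zero, which is consistent with the identity.

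\textbf{Main obstacle.} Step 2 is the hard part. The decoder Bayes rule naturally compares likelihoods of data prototypes, $D_F(\bx\|d_\theta(\bz))$, whereas the paper's rule compares $d_\theta(\bz)$ to decoded centres. Closing this gap requires an approximation argument: replace $\bx$ by its cluster-conditional mean and use the exponential-family property that the mean parameter minimizes expected Bregman loss. I would attempt that first and record it as part of hypothesis (iv).
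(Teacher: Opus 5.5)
Your Steps~1 and~3 are correct, and they are essentially all the paper relies on. The paper states this proposition without a separate proof. It amounts to chaining three results: Theorem~\ref{thm:encoder} (your reduction of $\argmax_c\pi_c q_c(\bz)$ to $\argmin_c\Phi_c$ is the intended link), the definition of $c^\star_{\rm dec}$ in \S\ref{sec:decoder-codebook}, and Corollary~\ref{thm:codebook-identity}, which needs only measurability and $q\ll p$. Step~3 works for whatever measurable decoder rule Step~2 produces, so the decomposition claim is safe either way. The problem is confined to the claim that the decoder rule is the Type~1 rule.

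That is where your plan has a genuine gap, and neither of your repairs closes it.

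\emph{Your starting point gives the wrong form.} Comparing $\log p_\theta(\bx_c\mid\bz)$ across prototypes gives $\argmin_c[D_F(\bx_c\|d_\theta(\bz))+H(\bx_c)]$. The query $d_\theta(\bz)$ sits in the \emph{second} slot, so this is a (weighted) Type~2 diagram. Its bisectors are affine in the logits $\nabla F(d_\theta(\bz))$, not in $d_\theta(\bz)$.

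\emph{Averaging over the cluster keeps the argument order.} The identity $\E D_F(X\|s)=D_F(\E X\|s)+\E D_F(X\|\E X)$ acts on the second argument. You therefore get $D_F(m_c\|d_\theta(\bz))+H(m_c)$ with $m_c=\E[\bx\mid c]$. This is still Type~2, still weighted, and has sites $m_c\ne d_\theta(\bmu_c)$ in general.

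\emph{Legendre duality does not swap arguments for a fixed generator.} The identity $D_F(a\|b)=D_{F^*}(\nabla F(b)\|\nabla F(a))$ turns Type~2 for $F$ into Type~1 for $F^*$ in gradient coordinates (Proposition~\ref{prop:legendre-dual}). That is a different partition from the paper's Type~1 diagram for $F$ in output space.

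\emph{The missing idea is to put the expectation on the query side.} Let the decoder read $\bz$ by comparing decoded distributions:
\[
\argmin_c D_{\KL}\bigl(p_\theta(\cdot\mid\bz)\,\|\,p_\theta(\cdot\mid\bmu_c)\bigr).
\]
Under Assumption~\ref{assume:bernoulli-decoder}(b), with $F$ the negative Bernoulli (or Categorical) entropy, $D_{\KL}(\mathrm{Ber}(a)\|\mathrm{Ber}(b))=D_F(a\|b)$ coordinatewise. So this rule is exactly $\argmin_c D_F(d_\theta(\bz)\|d_c)$ with $d_c=d_\theta(\bmu_c)$. Equivalently, $\E_{\bx\sim p_\theta(\cdot|\bz)}[-\log p_\theta(\bx\mid\bmu_c)]=D_F(d_\theta(\bz)\|d_c)+H(d_\theta(\bz))$, and the second term does not depend on $c$.

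If you want a literal 0--1 Bayes reading, treat $d_\theta(\bz)$ as the observation in the exponential-family/Bregman bijection~\cite{banerjee2005clustering}, with mean parameters $d_c$. This yields $\argmin_c[D_F(d_\theta(\bz)\|d_c)-\log\pi_c]$. That matches the paper's unweighted rule only when the $\pi_c$ are equal, and it is itself approximate because $d_\theta(\bz)$ is not binary.
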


\begin{proposition}[Alignment hierarchy]
\label{prop:sufficient-statistic}
Under the hypotheses of Proposition~\ref{prop:geometric-reduction},
$q_\phi(\bz)$-a.e.:
(a) hard agreement $c_{\mathrm{Bayes}}^{\mathrm{enc}}(\bz)=c_{\mathrm{Bayes}}^{\mathrm{dec}}(\bz)$
is equivalent to the two $\argmin$ rules selecting the same label;
(b) pairwise order-equivalence
$\Phi_i<\Phi_j\Leftrightarrow\Psi_i<\Psi_j$
implies (a);
(c) sufficiency of $\bd:\mathcal Z\to[0,1]^K$ for $c$ implies equal
posteriors and (under zero-tie mass) hard agreement;
(d) hard agreement alone does \emph{not} imply sufficiency; sufficiency
is recovered only if hard agreement persists under all sufficiently
small perturbations of $\{\pi_c\}$.
\end{proposition}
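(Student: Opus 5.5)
The plan is to treat the four parts in order, since each uses the one before it. Throughout, I write the encoder rule as $c^{\mathrm{enc}}_{\mathrm{Bayes}}(\bz)=\argmin_c\Phi_c(\bz)$ with $\Phi_c$ as in Theorem~\ref{thm:encoder}. I write the decoder rule as $c^{\mathrm{dec}}_{\mathrm{Bayes}}(\bz)=\argmin_c\Psi_c(\bz)$. Here $\Psi_c(\bz)=D_F(\bd(\bz)\|\bd_c)-\log\pi_c$ is the negative log-posterior score from the $0$--$1$ Bayes rule of Proposition~\ref{prop:geometric-reduction}(iv), up to a $c$-independent term. All statements are $q_\phi(\bz)$-a.e., so I discard the null set of ties, on which $\argmin$ is not a singleton, wherever a zero-tie hypothesis is available.

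\textbf{Parts (a) and (b).} Part (a) is essentially definitional. Off the tie set each rule returns a single label, so $c^{\mathrm{enc}}_{\mathrm{Bayes}}(\bz)=c^{\mathrm{dec}}_{\mathrm{Bayes}}(\bz)$ is literally the statement that the two $\argmin$ sets coincide. I would record this and note that it is exactly the complement of the event $E$ used in Corollary~\ref{thm:codebook-identity}.

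For (b), suppose $\Phi_i<\Phi_j\Leftrightarrow\Psi_i<\Psi_j$ for all pairs at a point $\bz$. Then the strict orders induced on $\{1,\ldots,K\}$ by $\Phi_\cdot(\bz)$ and by $\Psi_\cdot(\bz)$ are identical, so their minimal elements coincide. The ties are handled as well: $\Phi_i=\Phi_j$ holds iff neither strict inequality holds, and likewise for $\Psi$, so even the $\argmin$ sets agree. Part (a) then gives hard agreement.

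\textbf{Part (c).} Assume $\bd$ is sufficient for $c$, meaning $\Pr(c\mid\bz)=\Pr(c\mid\bd(\bz))$. Under the approximations (ii)--(iii), $\Phi_c$ equals $-\log\Pr(c\mid\bz)$ up to a $c$-independent constant. Likewise $\Psi_c$ equals $-\log\Pr(c\mid\bd(\bz))$ up to such a constant, via the Bregman representation of Assumption~\ref{assume:bernoulli-decoder}(b). Sufficiency makes the two posteriors equal, so $\Phi_c-\Psi_c$ is independent of $c$. Hence the pairwise order-equivalence of (b) holds, and hard agreement follows on the complement of the tie set, which is null by assumption.

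\textbf{Part (d), negative direction.} Here I would give a $K=2$ counterexample. Take one-dimensional $\bz$, with $\Phi_1-\Phi_2=\bz$ and $\Psi_1-\Psi_2=\bz^3$. These have the same sign everywhere, so there is hard agreement, but the posteriors differ. Hence $\bd$ is not sufficient, since sufficiency would force $\Phi_1-\Phi_2\equiv\Psi_1-\Psi_2$ up to a constant.

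\textbf{Part (d), positive direction.} A perturbation $\pi\to\pi'$ shifts $\Phi_i-\Phi_j$ and $\Psi_i-\Psi_j$ by the same constant $t_{ij}=\log(\pi_j'/\pi_i')$. Persistent agreement therefore says $\mathrm{sign}(\Phi_i-\Phi_j-t)=\mathrm{sign}(\Psi_i-\Psi_j-t)$ for all small $t$. This makes the level sets $\{\Phi_i-\Phi_j=t\}$ and $\{\Psi_i-\Psi_j=t\}$ coincide, so the two log-ratios are related by a map that is the identity near $0$.

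\textbf{Main obstacle.} I expect this last step to be the hard one. Perturbations small in absolute size only pin down the log-ratios in a neighbourhood of the decision boundary, not globally. I would first try to make "sufficiently small perturbations" act on the full open simplex: read it as perturbations that stay small relative to each $\pi$ along a path, so that the shifts $t$ sweep all of $\mathbb R$. If that reading is not available, I would weaken (d) to local sufficiency on a neighbourhood of the boundaries $\{\Phi_i=\Phi_j\}$, which is the region where the disagreement event $E$ lives.
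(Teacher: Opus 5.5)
The paper states this proposition without proof, so there is no route to compare against. Judged on its own, your parts (a)--(c) are fine. One caveat on (c): the identification $\Psi_c\simeq-\log\Pr(c\mid\bd(\bz))$ has to come from hypothesis (iv), not from Assumption~\ref{assume:bernoulli-decoder}(b), whose Bregman form is the likelihood of $\bx$ given $\bz$. Part (d), however, has two genuine gaps.

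First, your counterexample for the negative half contradicts the identification you rely on in (c). $\Psi_1-\Psi_2$ is a function of $\bd(\bz)$. If it equals $\bz^3$, then $\bz$ is a measurable function of $\bd(\bz)$ and $\sigma(\bd(\bz))=\sigma(\bz)$. So $\bd$ is automatically sufficient, which forces $\Psi_1-\Psi_2=\Phi_1-\Phi_2=\bz$. Separately, a Type-1 Bregman score difference is affine in $\bd(\bz)\in[0,1]^K$, hence bounded, so it cannot equal $\bz^3$ anyway. You cannot prescribe $\Psi$ independently of $\Phi$: the decoder posterior is $\E[\Pr(c\mid\bz)\mid\bd(\bz)]$, so a counterexample needs an information-losing $\bd$. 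For instance, with $\log\frac{\Pr(2\mid\bz)}{\Pr(1\mid\bz)}=\Phi_1-\Phi_2=\bz$, take $\bd(\bz)=\mathbf 1\{\bz>0\}$. The posterior given $\bd$ is constant on each half-line, with log-odds of the correct sign, so hard agreement holds off $\{0\}$. But $\Pr(c\mid\bz)$ varies within each half-line, so $\bd$ is not sufficient.

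Second, in the positive half you attempt the converse of what is stated. ``Sufficiency is recovered only if hard agreement persists'' asserts that sufficiency implies robust agreement. That direction is a one-liner you never give. Bayesian sufficiency under one full-support prior makes each likelihood ratio $p(\bz\mid i)/p(\bz\mid j)$ a function of $\bd(\bz)$. This is a prior-free property, so sufficiency holds for every perturbed $\{\pi'_c\}$, and (c) gives agreement under each. Equivalently, shifting $\Phi_c$ and $\Psi_c$ by the same $-\log(\pi'_c/\pi_c)$ leaves $\Phi_c-\Psi_c$ independent of $c$.

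The direction you try to prove is false for small perturbations, so your ``main obstacle'' is really a counterexample. Keep log-odds $\bz$, let $\bd$ be injective on $\{|\bz|<1\}$, and collapse each of $\{\bz\ge 1\}$ and $\{\bz\le -1\}$ to a single point. The lumped log-odds then have absolute value at least $1$, so agreement survives every prior shift $|t|<1$, yet $\bd$ is not sufficient. A converse needs persistence over the whole open simplex; when $K>2$, drive $\pi'_k\to 0$ for $k\ne i,j$ to isolate each pair. Weakening the claim to local sufficiency near the decision boundaries would prove a different statement.
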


\begin{proposition}[Sufficient-statistic alignment]
\label{prop:suff-stat}
Under the hypotheses of Proposition~\ref{prop:sufficient-statistic}, if
the decoder map $\bd:\mathcal Z\to[0,1]^K$ is a sufficient statistic for
the latent label $c$ in the sense that
$p(c\mid\bz)=p(c\mid\bd(\bz))$ for $q_\phi(\bz)$-a.e.\ $\bz$, then
$c^{\star}_{\rm enc}(\bz)=c^{\star}_{\rm dec}(\bz)$
$q_\phi$-a.e.\ (modulo zero-tie mass), and consequently
$\mathcal A=1$ and $\bar\rho_E=0$.
\end{proposition}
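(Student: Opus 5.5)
The plan has three steps: show that both code maps are Bayes argmax rules for the same posterior over labels, conclude that they pick the same label almost everywhere, and then read off $\mathcal A=1$ and $\bar\rho_E=0$.

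\textbf{Step 1: identify both maps as argmax rules.} Under the hypotheses of Proposition~\ref{prop:geometric-reduction}, I will first write each code map as a Bayes argmax rule.

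On the encoder side, $\Phi_c(\bz)$ equals $-\log\bigl(\pi_c\,\mathcal N(\bz;\bmu_c,\bSigma_c)\bigr)$ up to a $c$-independent constant. Hence $c^{\star}_{\rm enc}(\bz)=\argmax_c p(c\mid\bz)$, where $p(c\mid\bz)\propto\pi_c q_c(\bz)$ is the latent-label posterior.

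On the decoder side, the $0$--$1$ Bayes rule in hypothesis (iv) gives the Type~1 Bregman form. Under Assumption~\ref{assume:bernoulli-decoder}(b), $-\log p(\cdot\mid c)$ evaluated at $\bd(\bz)$ is $D_F(\bd(\bz)\|\bd_c)$ plus terms that do not depend on $c$. So $c^{\star}_{\rm dec}(\bz)=\argmax_c p(c\mid\bd(\bz))$, which is the Bayes rule for the label given the statistic $\bd(\bz)$.

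This identification is the content of Proposition~\ref{prop:sufficient-statistic}(a). I will invoke that proposition rather than recompute it.

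\textbf{Step 2: sufficiency gives hard agreement.} The sufficiency hypothesis says $p(c\mid\bz)=p(c\mid\bd(\bz))$ for $q_\phi$-a.e.\ $\bz$. So the two argmax rules maximize the same function of $c$ at almost every $\bz$. Off the tie set, the argmax is unique, so the labels coincide. This is Proposition~\ref{prop:sufficient-statistic}(c).

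It follows that $E\subseteq N\cup\mathcal T$, where $N$ is the exceptional null set of the sufficiency hypothesis and $\mathcal T$ is the tie set. By the zero-tie-mass assumption, $q_\phi(E)=0$. Then $\bar q(E)=0$, so $\mathcal A=1-\bar q(E)=1$.

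\textbf{Step 3: the within-cell residual vanishes.} This step needs care, because $\bar\rho_E=0$ does not follow from $\mathcal A=1$ alone.

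With $T=\mathbf 1_E$, the term $\bar\rho_E$ in \eqref{eq:rho-T-def} is an integral against $q_T(\cdot\mid\bx)$. Since $\eta_q(\bx)=0$ a.e., that law is a point mass at $T=0$. The residual then reduces to $\E_\bx D_{\KL}\bigl(q_\phi(\cdot\mid\bx,E^c)\,\|\,p_\theta(\cdot\mid\bx,E^c)\bigr)$. This is generally not zero: on $E^c$ the conditioning is essentially trivial, so it is about as large as $\Delta(\bx)$ itself.

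So $\bar\rho_E=0$ must come from the sufficiency route of Corollary~\ref{cor:dpi-equality}. The condition there is that $T$ be sufficient for the pair $(q_\phi(\cdot\mid\bx),p_\theta(\cdot\mid\bx))$, not merely for the label $c$. I would therefore read the proposition's sufficiency hypothesis, together with the functional Bayes reference of Lemma~\ref{lem:joint-stationary}, as forcing $q_\phi(\cdot\mid\bx)=p_\theta(\cdot\mid\bx)$. In that case every conditional KL in \eqref{eq:rho-T-def} is zero, and $\bar\rho_E=0$ follows for any $T$.

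My first attempt is exactly that: invoke Lemma~\ref{lem:joint-stationary} to get $q^\star=p_{\theta^\star}$, which is available under the hypotheses of Proposition~\ref{prop:geometric-reduction}, and conclude $\bar\rho_E=0$ trivially. If that is unavailable, the claim $\bar\rho_E=0$ must be weakened to $\bar\rho_E=\bar\Delta-\mathcal J_E-d_{\rm bin}(0\|\bar\eta_p)$ via \eqref{eq:main-identity}.

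I expect this last step to be the main obstacle. Justifying $\bar\rho_E=0$ requires identifying which sufficiency is meant and invoking the functional fixed point. The equality $\mathcal A=1$, by contrast, is routine.
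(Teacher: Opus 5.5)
Your argument is correct, and on the $\bar\rho_E$ clause it is more careful than the paper. The paper gives no proof of this proposition; it states it as a consequence of Proposition~\ref{prop:sufficient-statistic}(c). The only mechanism it names anywhere for $\bar\rho_E=0$ is the sufficiency clause of Corollary~\ref{cor:dpi-equality}. That clause needs $\mathbf 1_E$ to be sufficient for the pair $(q_\phi(\cdot|\bx),p_\theta(\cdot|\bx))$, and label-sufficiency of $\bd$ does not supply this.

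Your Steps~1--2 coincide with the intended route to $\mathcal A=1$. For $\bar\rho_E$ you take a different route, and it is valid. You can also sharpen your own diagnosis:
\begin{itemize}
\item By Assumption~\ref{assume:bernoulli-decoder}(a), each $q_\phi(\cdot|\bx)$ has a strictly positive Lebesgue density, so $\bar q(E)=0$ forces $E$ to be Lebesgue-null.
\item Since $p_\theta(\cdot|\bx)$ also has a density, this gives $\eta_p\equiv 0$.
\item Your reduced residual is then exactly $\rho(\bx;\mathbf 1_E)=\Delta(\bx)+\log(1-\eta_p(\bx))=\Delta(\bx)$.
\end{itemize}
So under these hypotheses $\bar\rho_E=\bar\Delta$, and $\bar\rho_E=0$ holds if and only if $\bar\Delta=0$. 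Appealing to Lemma~\ref{lem:joint-stationary} ($q^\star=p_{\theta^\star}$) is therefore not one option among several; it is the only thing that can license the clause. Your route gives a valid proof. It also shows that the clause rests entirely on the idealised fixed point: it would hold for every statistic $T$, and the sufficiency of $\bd$ plays no role in it. The substantive content of the proposition is $\mathcal A=1$.

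One small repair in Step~1. $c^\star_{\rm enc}$ contains the $-\log\pi_c$ term. The unweighted rule $\argmin_c D_F(\bd(\bz)\|\bd_c)$ maximises the Bregman class likelihood $\exp(-D_F(\bd(\bz)\|\bd_c))$, not $p(c\mid\bd(\bz))$. The posterior maximiser is $\argmin_c[D_F(\bd(\bz)\|\bd_c)-\log\pi_c]$. Hard agreement therefore needs one of two things: uniform $\pi_c$, or $c^\star_{\rm dec}$ taken to be the Bayes rule of hypothesis~(iv) itself. You effectively take the second option by deferring to Proposition~\ref{prop:sufficient-statistic}(a). The paper glosses over the same point.
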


\begin{corollary}[Cluster-agnostic specialization]
\label{cor:cluster-agnostic}
The conclusion $\mathcal A=1$ in Proposition~\ref{prop:suff-stat}
holds without invoking the $K$-component cluster structure of
Proposition~\ref{prop:geometric-reduction}: it is enough that
$\bd$ is sufficient for any measurable label
$c:\mathcal Z\to\{1,\dots,K\}$ that the encoder/decoder code maps
target. This makes the $\mathcal A=1$ conclusion architecture-agnostic
within the diagonal-Gaussian / product-Bernoulli class, at the cost of
the geometric (Voronoi/Bregman) interpretations of
Theorem~\ref{thm:encoder} and~\S\ref{sec:decoder-codebook}.
\end{corollary}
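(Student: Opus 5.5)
The plan is to repeat the argument of Proposition~\ref{prop:suff-stat} and check that it never uses the $K$-component mixture $p_{\mathrm{data}}(\bx)=\sum_c\pi_c p_{\mathrm{data}}(\bx|c)$ or the Gaussian summaries. The only facts it needs are that both code maps are $0$--$1$ Bayes (MAP) rules for one common measurable label $c:\mathcal Z\to\{1,\dots,K\}$, and that $\bd$ is sufficient for that label.

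\textbf{Step 1: cast the code maps as MAP rules.} I would first fix what ``the encoder/decoder code maps target $c$'' means. It means there is a joint law of $(\bz,c)$ with $\bz\sim\bar q$, and
\[
c^{\star}_{\rm enc}(\bz)\in\argmax_k p(c{=}k\mid\bz),\qquad c^{\star}_{\rm dec}(\bz)\in\argmax_k p(c{=}k\mid\bd(\bz)).
\]
The encoder rule uses latent geometry and the decoder rule reads $\bz$ only through the decoder output $\bd(\bz)$. In the geometric instantiation these are exactly $\argmin_c\Phi_c$ and the Type~1 Bregman rule, but here I would take the MAP form as the definition. The regular conditionals exist because $\mathcal Z$ and $[0,1]^K$ are standard Borel, as already used in Lemma~\ref{thm:universal}.

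\textbf{Step 2: sufficiency makes the two argmax sets coincide.} Sufficiency gives $p(c\mid\bz)=p(c\mid\bd(\bz))$ for $\bar q$-a.e.\ $\bz$. On this full-measure set, the two score vectors $k\mapsto p(c{=}k\mid\bz)$ and $k\mapsto p(c{=}k\mid\bd(\bz))$ are identical, so their argmax sets are identical.

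\textbf{Step 3: remove ties.} Let $N$ be the tie set, where the argmax is not a singleton. Under the zero-tie-mass hypothesis $\bar q(N)=0$. Off $N\cup\{\text{sufficiency failure}\}$ the two labels are therefore equal. Hence $\bar q(E)=0$ for $E=\{c^{\star}_{\rm enc}\neq c^{\star}_{\rm dec}\}$, and by Definition~\ref{def:agreement} together with Fubini, $\mathcal A=1-\bar\eta_q=1$.

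\textbf{Step 4: the residual.} Since $\bar q(E)=0$ and $\bar q=\E_{\bx}q_\phi(\cdot|\bx)$, Fubini gives $q_\phi(E\mid\bx)=0$ for $p_{\mathrm{data}}$-a.e.\ $\bx$. So the statistic $T=\mathbf 1_E$ is $q_\phi(\cdot|\bx)$-a.s.\ constant and its disintegration has a single atom. Two cases follow. If the intended conclusion is only $\mathcal A=1$, the corollary needs nothing more. If one also wants $\bar\rho_E=0$, as in Proposition~\ref{prop:suff-stat}, one additionally needs sufficiency of $T$ for the pair $(q_\phi,p_\theta)$, in the sense of Corollary~\ref{cor:dpi-equality}. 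I would state that honestly as inherited from Proposition~\ref{prop:suff-stat}'s hypotheses rather than claim it follows.

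\textbf{Main obstacle.} I expect the main difficulty to be the formalisation in Step 1 rather than any calculation. The word ``target'' has to be made precise enough that both maps are MAP rules for one common label under one common joint law. Without this, sufficiency of $\bd$ for $c$ says nothing about $c^{\star}_{\rm enc}$.

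A second, smaller point is measurable tie-breaking. I would fix a lowest-index rule so that both maps are measurable, and observe that the tie-breaking choice is irrelevant once $\bar q(N)=0$.

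Finally, I would note explicitly that no step invokes the clusters, the Gaussian form of Theorem~\ref{thm:encoder}, or the Bregman geometry. Only measurability and the product-Bernoulli existence of $\bd$ are used, which is why the conclusion is architecture-agnostic within the class of Assumption~\ref{assume:bernoulli-decoder}.
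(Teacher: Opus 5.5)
Your proposal is correct and follows essentially the reasoning the paper relies on (it gives no separate proof of this corollary): the step in Proposition~\ref{prop:sufficient-statistic}(c) that sufficiency of $\bd$ yields equal label posteriors, hence identical MAP decisions off a $\bar q$-null tie set, together with the observation that this step never uses the mixture structure or the Voronoi/Bregman forms. Your Step~4 caveat is also right and worth keeping. Sufficiency of $\bd$ for $c$ does not force $\bar\rho_E=0$ (if $E=\emptyset$ then $\bar\rho_E=\bar\Delta$), so that part of Proposition~\ref{prop:suff-stat} needs an extra hypothesis, whereas the corollary itself only asserts $\mathcal A=1$.
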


\begin{remark}\label{rem:alignment-hierarchy}
The affine condition $\Psi_c(\bz)=a\Phi_c(\bz)+b(\bz)$ of the main
text and the Fisher-pullback condition $G_c\propto\bSigma_c^{-1}$
(below) are progressively specialised sufficient conditions
for (b)--(c). Neither is necessary for (a)--(d) in general
architectures (e.g.\ normalising-flow decoders, full-covariance
encoders, non-Bernoulli likelihoods).
\end{remark}

\subsection{Sufficient-condition diagnostic bound: Fisher mismatch}
\label{app:diagnostic-bound}

The inverse encoder covariance $\bSigma_c^{-1}$ and the decoder
pullback Fisher metric
$G_c=J(\bmu_c)^\top\mathrm{diag}(\bd_c\odot(1-\bd_c))^{-1}J(\bmu_c)$
parametrise two local Mahalanobis-type partitions. The first-order
boundary displacement between them is diagnosed by
$\kappa_c:=\min_{a_c>0}\|\bSigma_c^{-1}-a_cG_c\|_F$ with
$a_c^\star=\langle\bSigma_c^{-1},G_c\rangle_F/\|G_c\|_F^2$.
A coordinate-invariant alternative is
$\kappa_c^{\mathrm{inv}}:=\sum_i(\log\lambda_i)^2$ with
$\{\lambda_i\}=\mathrm{eig}(\bSigma_c G_c^{-1})$.

\begin{assumption}[Cluster-separation regularity]
\label{assume:cluster-regularity}
There exist $\gamma>0$, $\rho>0$ such that for every $c\ne c'$:
(a) $\|\bd_c-\bd_{c'}\|_2\ge 2\gamma$;
(b) the second-order remainder of $\bd$ at each $\bmu_c$ is bounded by
$\rho$ on a neighbourhood of radius $\gamma/L$;
(c) the aggregate-posterior mass outside these neighbourhoods is at
most $e^{-c_0}$ for some $c_0>0$.
\end{assumption}

\begin{proposition}[Sufficient-condition diagnostic bound]
\label{thm:formal-prop30}
Under Assumptions~\ref{assume:bernoulli-decoder}, \ref{assume:cluster-regularity},
the idealised functional reference of Lemma~\ref{lem:joint-stationary},
and a first-order boundary-transport approximation,
\begin{equation}
\label{eq:formal-bound}
1-\mathcal{A}
\;\le\;
\frac{2L^2}{\gamma^2}\bar\Delta
+ C(L,\gamma,\{\pi_c\},\rho,c_0)\sum_c\pi_c\kappa_c
+ O(e^{-c_0}),
\end{equation}
with $C=\frac{1}{2\gamma^2}\cdot\max_c\frac{1}{\pi_c}\cdot(1+\rho\gamma/L)$.
The first term vanishes as $\bar\Delta\to 0$; the second vanishes when
$\bSigma_c^{-1}\propto G_c$ for every $c$. A coordinate-invariant
counterpart in $\kappa_c^{\mathrm{inv}}$ is identical in form.
Outside the stated assumptions the expression is a diagnostic, not a
certified bound.
\end{proposition}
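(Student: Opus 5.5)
The plan is to split the disagreement mass $1-\mathcal A=\Pr_{\bar q}[E]$ into three parts, each tied to one term on the right-hand side.
\begin{itemize}
\item A tail part: mass outside the union $U=\bigcup_c B_c$ of the radius-$\gamma/L$ neighbourhoods around the centres $\bmu_c$.
\item A boundary-mismatch part: mass inside $U$ where the local Mahalanobis cell (from $\bSigma_c^{-1}$) and the local pullback-Fisher cell (from $G_c$) disagree.
\item A posterior-mismatch part: the discrepancy between encoder and model-posterior labellings.
\end{itemize}
The tail part is at most $e^{-c_0}$ by Assumption~\ref{assume:cluster-regularity}(c), which produces the $O(e^{-c_0})$ term directly.

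For the posterior-mismatch term, I would work under the functional reference of Lemma~\ref{lem:joint-stationary}. There the encoder-side Bayes rule equals $\argmin_c\Phi_c$ and the decoder-side rule equals the Type-1 Bregman rule $\argmin_c\Psi_c$ (Proposition~\ref{prop:geometric-reduction}). Any deviation of the true $q_\phi$ from $p_\theta(\cdot|\bx)$ is then charged to $\bar\Delta$. Concretely, I apply Pinsker (Corollary~\ref{cor:pinsker}) together with the data-processing step of Corollary~\ref{cor:agreement}. This gives a total-variation displacement of order $\sqrt{\bar\Delta/2}$ between the encoder and model-posterior label laws.

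To reach the stated linear form $\frac{2L^2}{\gamma^2}\bar\Delta$, I convert it into a displacement in decoder-output space. By the $L$-Lipschitz bound on $\bd$, a latent perturbation moves $\bd(\bz)$ by at most $L$ times as much. A point is misread only if $\bd(\bz)$ moves by a margin of order $\gamma$, half the separation in Assumption~\ref{assume:cluster-regularity}(a). A Chebyshev/Markov step on the squared displacement, whose expectation is controlled by $2\bar\Delta$ through a transport--entropy inequality, then yields $(2L^2/\gamma^2)\bar\Delta$. I expect this to be the main obstacle: linear-in-$\bar\Delta$ control of a squared displacement needs a Talagrand-type $T_2$ inequality, which is not available in general. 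I would first try to invoke it locally, using the Gaussian form of $q_\phi$ from Assumption~\ref{assume:bernoulli-decoder}(a) inside each $B_c$, and declare it part of the ``first-order boundary-transport approximation'' hypothesis.

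For the boundary-mismatch term, inside each $B_c$ I Taylor-expand $\Psi_c$ to second order around $\bmu_c$. The quadratic part is $\tfrac12(\bz-\bmu_c)^\top G_c(\bz-\bmu_c)$, with remainder bounded by $\rho$ via Assumption~\ref{assume:cluster-regularity}(b). I then compare this with $\Phi_c$, whose quadratic form is $\bSigma_c^{-1}$. After optimal rescaling by $a_c^\star$, the two quadratic forms differ by at most $\kappa_c\|\bz-\bmu_c\|^2$. Under the first-order boundary-transport approximation, this perturbs the bisector between neighbouring cells by a displacement of order $\kappa_c/\gamma^2$, inflated by the factor $(1+\rho\gamma/L)$ from the remainder. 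The aggregate mass swept across the bisector is then bounded by the cell mass $\pi_c$ times this displacement divided by the minimal cell weight, giving the factor $\max_c 1/\pi_c$.

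Summing the three contributions with a union bound over $c$ gives \eqref{eq:formal-bound} with the stated constant $C$. Setting $\bSigma_c^{-1}=a_c G_c$ makes $\kappa_c=0$, which confirms the vanishing claim. For the coordinate-invariant variant, I would replace the Frobenius bound by $\|\log(\bSigma_cG_c^{-1})\|_F^2=\kappa_c^{\mathrm{inv}}$ and argue the same way.
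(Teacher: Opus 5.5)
Your proposal follows essentially the same route as the paper's proof sketch. The three terms are assigned the same way: a tail term from Assumption~\ref{assume:cluster-regularity}(c), a $\kappa_c$ term from the local comparison of $\bSigma_c^{-1}$ with $G_c$ weighted by $\{\pi_c\}$, and a $\bar\Delta$ term from data processing plus a Lipschitz/margin conversion. Your $T_2$+Markov step spells out what the paper's ``margin/Lipschitz conversion'' leaves implicit, and with a unit transport constant it reproduces exactly the coefficient $2L^2/\gamma^2$. One caution: Talagrand's inequality for $D_{\KL}(q_\phi\|p_\theta)$ must hold for the model posterior $p_\theta(\cdot|\bx)$ in the second slot, not for the Gaussian $q_\phi$. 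So this step has to be absorbed into the boundary-transport hypothesis, as you suggest, rather than derived from Assumption~\ref{assume:bernoulli-decoder}(a).
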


\begin{proof}[Proof sketch]
The misclassification rate decomposes via a first-order
boundary-transport argument into (A) a gap between $c^\star_{\mathrm{enc}}$
and the pulled-back Bayes-optimal classifier under $G_c$, bounded by
$2L^2\bar\Delta/\gamma^2$ via DPI of $\bd_\#$ (Lemma analogue of
\eqref{eq:dpi-pointwise}) combined with margin/Lipschitz conversion;
plus (B) a local geometric mismatch with first-order displacement
$\|\bSigma_c^{-1}-a_c^\star G_c\|_F$, mass-weighted by $\{\pi_c\}$
with remainder $O(e^{-c_0})$ from
Assumption~\ref{assume:cluster-regularity}(c).
\end{proof}

\paragraph{Numerical reading on Setting~1.}
$L\approx\LVAL$, $\gamma\approx\GAMMAVAL$,
$\sum_c\pi_c\kappa_c\approx\KAPPAMISMATCHTERM$. Plugging
$\Delta_{\mathrm{IWAE}}\approx\EDELTA$ into~\eqref{eq:formal-bound}
gives $2L^2\bar\Delta/\gamma^2\approx 8.6\cdot 1.13$, which is
numerically large and not advertised as a non-vacuous certificate for
this checkpoint. The useful empirical reading is diagnostic: the
measured local Fisher mismatch is large, which explains why the
encoder Voronoi and decoder Bregman cells can remain misaligned even
when reconstruction loss is small.

\subsection{VQ-VAE as a Tier-3 limiting case}
\label{app:vqvae-limit}

The VQ-VAE objective~\cite{vandenoord2017neural}
\[
\mathcal{L}_{\mathrm{VQ}}
=-\log p_\theta(\bx|\bz_q(\bx))
+\|\mathrm{sg}[\bz_e(\bx)]-\be\|^2
+\beta\|\bz_e(\bx)-\mathrm{sg}[\be]\|^2
\]
admits the following relation to our framework.

\begin{proposition}[VQ-VAE encoder reduction]\label{prop:vqvae-reduction}
At a stationary point of $\mathcal{L}_{\mathrm{VQ}}$, $\Cenc$ is a
standard Voronoi diagram---Case~(i) of Theorem~\ref{thm:encoder} with
$\bSigma_c\to 0$ and (under EMA codebook updates with balanced usage)
$\pi_c=1/K$.
\end{proposition}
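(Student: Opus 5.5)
The plan is to read off the stationarity conditions of $\mathcal L_{\mathrm{VQ}}$ with respect to the codebook vectors $\{\be_c\}$ and the encoder map $\bz_e$. I would then identify the resulting quantization rule with Case~(i) of Theorem~\ref{thm:encoder} in the degenerate-covariance limit. The first step is to fix what $\Cenc$ means for a VQ-VAE. The quantizer is $\bz_q(\bx)=\be_{k(\bx)}$ with $k(\bx)=\argmin_c\|\bz_e(\bx)-\be_c\|^2$, so the encoder code map on latent space is $c^\star_{\rm enc}(\bz)=\argmin_c\|\bz-\be_c\|^2$. This is literally a nearest-centre rule, and its cells are the standard Voronoi cells of $\{\be_c\}$.

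The second step connects this to the posterior-classification rule $\Phi_c$. I would take the $K$-component Gaussian summary of the aggregate encoder posterior with isotropic shared covariance $\Sigma_c=\sigma^2I$. Its centres $\mu_c$ are the codebook vectors, which is justified by the codebook stationarity condition. Setting the gradient of $\|\mathrm{sg}[\bz_e(\bx)]-\be_c\|^2$, averaged over data, to zero gives $\be_c=\E[\bz_e(\bx)\mid k(\bx)=c]$. Equivalently, under EMA updates, $\be_c$ is the fixed point of the cell-mean recursion. So $\be_c$ is the centroid of encoder outputs in cell $c$, and it coincides with $\mu_c$ of the cluster summary; this is the Lloyd/$k$-means fixed point. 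Next I would let $\sigma\to0$. Here $\Phi_c(\bz)=\frac{1}{2\sigma^2}\|\bz-\mu_c\|^2-\log\pi_c+\text{const}$, and the additive weight $-\log\pi_c$ is negligible relative to $\sigma^{-2}$ off the bisectors. Hence $\argmin_c\Phi_c$ converges pointwise, off a null tie set, to the unweighted Voronoi rule. The weights in Case~(ii), $r_c^2=2\sigma^2\log\pi_c$, also vanish as $\sigma\to0$, so even with unequal usage the limit is the standard diagram. The balanced-usage hypothesis $\pi_c=1/K$ makes Case~(i) hold exactly at every $\sigma$, not merely in the limit. I would argue this from the EMA cluster-size statistics: under balanced usage, the estimated $N_c/N$ equals $1/K$.

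The main obstacle is making ``$\Sigma_c\to0$'' meaningful. The VQ encoder is deterministic, so the aggregate posterior is a sum of point masses pushed through $\bz_e$, and it is not literally Gaussian. I would handle this by regarding the deterministic encoder as the limit of $\mathcal N(\bz_e(\bx),\sigma^2I)$ posteriors. The commitment term $\beta\|\bz_e-\mathrm{sg}[\be]\|^2$ drives $\bz_e(\bx)$ toward its assigned $\be_{k(\bx)}$ at stationarity, so the within-cell spread of encoder outputs collapses. The fitted $\Sigma_c$ therefore tends to $0$, and a shared isotropic $\sigma^2I$ is an admissible bounding summary. The remaining care is measure-theoretic: ties lie on affine bisectors, which are Lebesgue-null. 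I would therefore state the conclusion off this null set, consistent with the measurability-only requirement of Lemma~\ref{thm:universal}.
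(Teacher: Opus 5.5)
Your core argument is sound and is essentially the identification the paper relies on. The paper gives no written proof, and its follow-up remark reads ``$\bSigma_c\to 0$'' as the deterministic-encoder limit $\bSigma_q\to 0$, via $\mathcal N(\bmu_q,\varepsilon I)$ with $\varepsilon\to 0^+$, which is the same device you use. Your first step already carries the Voronoi conclusion. The quantizer is exactly $\argmin_c\|\bz-\be_c\|^2$, so $\Cenc$ is a standard Voronoi diagram at \emph{every} parameter value. Stationarity contributes only the centroid (Lloyd) condition $\be_c=\E[\bz_e(\bx)\mid k(\bx)=c]$, which is what licenses putting $\mu_c=\be_c$ into $\Phi_c$. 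Your observation that either $\sigma\to 0$ or $\pi_c=1/K$ alone suffices, making the other hypothesis redundant, is correct and slightly sharpens the statement.

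One supporting claim in your last paragraph is false and should be dropped: stationarity does not make the within-cell spread of $\bz_e$ collapse. The encoder's stationarity condition balances the commitment gradient $2\beta(\bz_e(\bx)-\be_{k(\bx)})$ against the straight-through reconstruction gradient. If the encoder is flexible enough for this to hold pointwise, then $\bz_e(\bx)=\be_{k(\bx)}-\tfrac{1}{2\beta}\nabla_\bz\bigl[-\log p_\theta(\bx|\bz)\bigr]_{\bz=\be_{k(\bx)}}$. This varies with $\bx$ inside a cell, because the decoder output $d_\theta(\be_c)$ is shared while the targets $\bx$ differ.

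A moment-matched summary of cell $c$ therefore has covariance $\sigma^2 I+\mathrm{Cov}(\bz_e(\bx)\mid k(\bx)=c)$. This does not vanish as $\sigma\to 0$, and it is generally neither isotropic nor shared across cells. Feeding it into $\Phi_c$ would give a Mahalanobis/quadric diagram (Case~(iii)) rather than Case~(i), and would not reproduce the quantizer's cells.

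The repair is simple. Treat $(\mu_c,\Sigma_c)=(\be_c,\sigma^2 I)$ as the chosen summary, with $\sigma^2$ the per-example posterior variance sent to zero. Theorem~\ref{thm:encoder} is a statement about the rule $\argmin_c\Phi_c$ given a summary, not about a fitted one. Then let your first step supply the Voronoi structure.
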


\paragraph{Decoder-side alignment requires an additional condition.}
$\mathcal A=1$ additionally requires
$\Psi_c(\bz)=a\Phi_c(\bz)+b(\bz)$, satisfied either (a) when $\bd_c$
approaches the one-hot indicator of $c$ (one-hot target regime), in
which case $\argmin_c D_F(\bd(\bz)\|\bd_c)$ reduces to $\argmax_c d_c(\bz)$;
or (b) under the Fisher-pullback condition of
Proposition~\ref{thm:formal-prop30} ($\kappa_c=0$).

\paragraph{Relation to Corollary~\ref{cor:agreement}.}
The deterministic VQ-VAE encoder corresponds to $\bSigma_q\to 0$, which
violates absolute continuity. A rigorous $\bar\Delta\to 0$ statement
therefore requires parametrising $q_\phi=\mathcal N(\bmu_q,\varepsilon I)$
and taking $\varepsilon\to 0^+$ while tracking $\mathcal A$ via continuity.
We conjecture that under either sufficient condition above this limit
gives $\mathcal A\to 1$ while the Bernoulli-KL bound becomes vacuous
($\bar\Delta\to 0$); a complete proof is left to future work.

\subsection{Empirical confirmation of Proposition~\ref{prop:vqvae-reduction}}
\label{app:vqvae-empirical}

We trained a VQ-VAE on MNIST $28{\times}28$ with $K{=}10$, $D{=}10$,
$\beta_{\rm commit}{=}0.25$, Adam ($10^{-3}$), $20$ epochs on a stratified
$15{,}000$-image subset; single seed, MPS backend (Apple M-series, $24.5$\,s
total wall-clock). The Setting~2 Conv-VAE backbone is shared
(\S\ref{app:experimental}); only the variational head is replaced by the
codebook quantizer with straight-through estimator. On the held-out
$10{,}000$-image test set we measure the codebook agreement
\[
  \hat{\mathcal A}_{\rm VQ}
  \;=\;
  \Pr_{\bx}\!\left[\,c^\star_{\rm enc}(\bx) \;=\; c^\star_{\rm dec}(\bx)\,\right]
  \;=\; \mathbf{1.000}
  \;\;(10{,}000/10{,}000\ \text{exact agreement}),
\]
with $c^\star_{\rm enc}(\bx)=\arg\min_k \|z_e(\bx)-\be_k\|$ (the encoder's
quantization choice) and $c^\star_{\rm dec}(\bx)=\arg\min_k D_F(\hat\bx(\bx)\,\|\,\bd_k)$,
$\bd_k=\mathrm{decoder}(\be_k)$ (the decoder's Type~1 Bregman reading at
the encoder's chosen quantization). All $K{=}10$ codebook entries remain
active on the test set; per-code utilisation ranges from $1.0\%$
(rarest) to $31.9\%$ (most frequent), with no dead codes.

This realises the prediction $\mathcal A\to 1$ of
Proposition~\ref{prop:vqvae-reduction} \emph{exactly} within numerical
precision (no estimator slack). The test is interpretable as a check on
two conditions: that the encoder's quantizer maps to distinct
$\{\be_k\}$, and that the decoder's outputs $\{\bd_k\}$ are
sufficiently separated under $D_F$ that the Bregman argmin uniquely
identifies the chosen code. A measured $\mathcal A<1$ in this protocol
would diagnose decoder collapse (some pair $\bd_i\approx \bd_j$); the
observed $\mathcal A=1.000$ rules this out for our checkpoint.
Combined with the Setting~1 Tier-1 audit
(Appendix~\ref{app:bound-illustrations}, ratio $2.71\!\times$
non-vacuous) and the MNIST Tier-1 audit (Table~\ref{tab:mnist-audited},
ratio $\sim\!160\!\times$), this realises both endpoints of the
Tier-1$\leftrightarrow$Tier-3 spectrum
(Proposition~\ref{prop:spectrum}) under the same reporting unit
$(\Ked,\mathcal A,\Reff,R,AU)$.

\begin{figure}[h]
\centering
\IfFileExists{figures/fig_vqvae_codebook.pdf}{%
\includegraphics[width=0.85\linewidth]{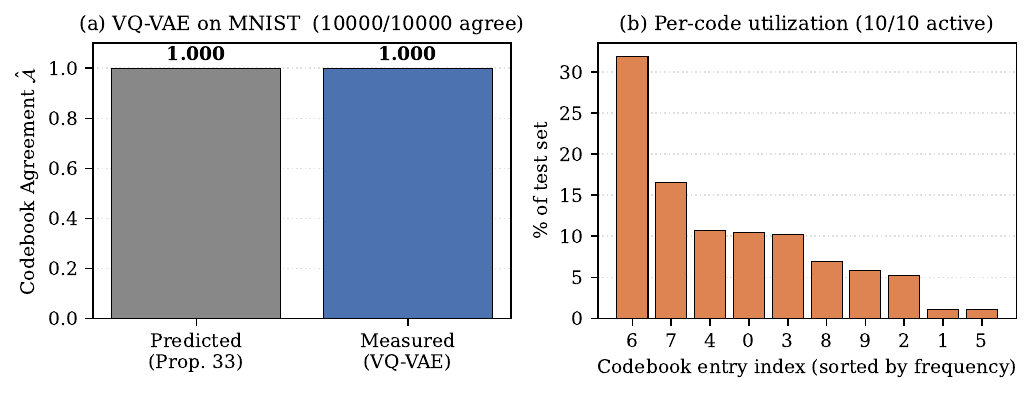}%
}{\fbox{\parbox{0.85\linewidth}{\centering\small\textit{[Figure~\ref{fig:vqvae-codebook} placeholder.]}\\[2pt]\rule{0pt}{0.6cm}}}}
\caption{\textbf{VQ-VAE codebook agreement on MNIST.}
(a) Measured $\hat{\mathcal A}_{\rm VQ}=1.000$ matches the predicted
limit of Proposition~\ref{prop:vqvae-reduction} exactly
($10{,}000/10{,}000$ test agreement). (b) Per-codebook-entry utilisation
on the $10{,}000$-image test set; all $10$ codes active. The result
realises the Tier-3 endpoint of
Proposition~\ref{prop:spectrum}; the Tier-1 endpoint is the
Conv-VAE row of Table~\ref{tab:mnist-audited}.}
\label{fig:vqvae-codebook}
\end{figure}

\section{Experimental Details}
\label{app:experimental}

This appendix covers the protocols needed to reproduce
\S\ref{sec:experiments}: the finite-grid exact certificate audit
(\S\ref{app:exact-audit}), multi-seed reporting
(\S\ref{app:multiseed-protocol}), closed-form separation stress tests
(\S\ref{app:closed-form-separations}), the four training settings
(\S\ref{app:settings}), the Setting~1-fast acceleration check
(\S\ref{app:setting1fast}), and the granularity sensitivity sweep
(\S\ref{app:gmm-sensitivity}).

\subsection{Finite-grid exact certificate audit}
\label{app:exact-audit}

For low-dimensional synthetic VAEs the safest way to turn
Corollary~\ref{cor:agreement} into a numerical certificate is to audit
a finite latent grid rather than to use an IWAE lower-bound difference.
Let $G=\{z_g\}_{g=1}^M$ be a grid with non-negative quadrature/prior
weights $w_g$ summing to one. The audited model is the discrete latent
model $p_{\theta,G}(x)=\sum_g p_\theta(x|z_g)w_g$ with
$p_{\theta,G}(g|x)\propto p_\theta(x|z_g)w_g$ and the discretised
encoder $q_{\phi,G}(g|x)\propto q_\phi(z_g|x)w_g$. All KL terms are
finite sums. This audit certifies the grid law
$(q_{\phi,G},p_{\theta,G})$; if the intended claim is about the
original continuous model, an additional quadrature error bound is
required.

\begin{algorithm}[h]
\caption{Finite-grid exact certificate audit}
\label{alg:exact-audit}
\begin{algorithmic}[1]
\Require trained $p_\theta(x|z)$, $q_\phi(z|x)$, data $\{x_n\}$,
         grid $\{(z_g,w_g)\}$, code maps $c_{\rm enc},c_{\rm dec}$
\For{each $x_n$}
\State $\ell_{ng}=\log p_\theta(x_n|z_g)+\log w_g$;\;
       $\log p_{\theta,G}(x_n)=\mathrm{logsumexp}_g\ell_{ng}$
\State $p_{ng}=\exp(\ell_{ng}-\log p_{\theta,G}(x_n))$;\;
       $q_{ng}\propto q_\phi(z_g|x_n)w_g$
\State $\Delta_G(x_n)=\sum_g q_{ng}\log(q_{ng}/p_{ng})$
\State $E_g=\mathbf 1[c_{\rm enc}(z_g)\ne c_{\rm dec}(z_g)]$;\;
       $\eta_q(x_n)=\sum_g q_{ng}E_g$;\;
       $\eta_p(x_n)=\sum_g p_{ng}E_g$
\EndFor
\State $\bar\Delta_G,\bar\eta_q,\bar\eta_p$ by sample averages
\State $B_G=d_{\rm bin}(\bar\eta_q\|\bar\eta_p)$;\;
       $\mathcal J_G=\overline{d_{\rm bin}(\eta_q(x_n)\|\eta_p(x_n))}-B_G$;\;
       $\bar\rho_G=\bar\Delta_G-B_G-\mathcal J_G$
\State validity flag: $\bar\Delta_G\ge B_G-10^{-6}$
\end{algorithmic}
\end{algorithm}

\begin{table}[h]
\centering\scriptsize
\resizebox{\linewidth}{!}{%
\begin{tabular}{@{}lcccccccc@{}}
\toprule
Dataset & seed & $\bar\Delta_G$ & $d_{\rm bin}$ & slack
& $\Aq$ & $\Ap$ & code-pair KL & valid \\
\midrule
digits & 0 & 0.2316 & 0.0010 & 0.2306 & 0.705 & 0.684 & 0.0069 & yes \\
digits & 1 & 0.2488 & 0.0032 & 0.2457 & 0.628 & 0.589 & 0.0086 & yes \\
digits & 2 & 0.2138 & 0.0014 & 0.2124 & 0.635 & 0.610 & 0.0057 & yes \\
digits & 3 & 0.3055 & 0.0009 & 0.3046 & 0.858 & 0.843 & 0.0056 & yes \\
digits & 4 & 0.2308 & 0.0026 & 0.2283 & 0.675 & 0.641 & 0.0100 & yes \\
wine & 0 & 0.1031 & 0.0018 & 0.1014 & 0.940 & 0.925 & 0.0019 & yes \\
wine & 1 & 0.1330 & 0.0034 & 0.1296 & 0.842 & 0.811 & 0.0051 & yes \\
wine & 2 & 0.1009 & 0.0002 & 0.1008 & 0.908 & 0.903 & 0.0002 & yes \\
wine & 3 & 0.1884 & 0.0008 & 0.1876 & 0.902 & 0.890 & 0.0028 & yes \\
wine & 4 & 0.1413 & 0.0105 & 0.1307 & 0.731 & 0.663 & 0.0193 & yes \\
breast cancer & 0 & 0.1551 & 0.0005 & 0.1546 & 0.904 & 0.895 & 0.0005 & yes \\
breast cancer & 1 & 0.1948 & 0.0003 & 0.1944 & 0.928 & 0.921 & 0.0005 & yes \\
breast cancer & 2 & 0.2060 & 0.0002 & 0.2058 & 0.883 & 0.876 & 0.0003 & yes \\
breast cancer & 3 & 0.1791 & 0.0009 & 0.1782 & 0.896 & 0.883 & 0.0016 & yes \\
breast cancer & 4 & 0.2184 & 0.0018 & 0.2166 & 0.901 & 0.882 & 0.0022 & yes \\
moons & 0 & 0.0056 & 0.0000 & 0.0056 & 0.500 & 0.500 & 0.0002 & yes \\
moons & 1 & 0.0058 & 0.0000 & 0.0057 & 0.517 & 0.516 & 0.0000 & yes \\
moons & 2 & 0.0063 & 0.0000 & 0.0063 & 0.516 & 0.514 & 0.0006 & yes \\
moons & 3 & 0.0048 & 0.0000 & 0.0048 & 0.578 & 0.580 & 0.0001 & yes \\
moons & 4 & 0.0056 & 0.0000 & 0.0056 & 0.500 & 0.499 & 0.0002 & yes \\
\bottomrule
\end{tabular}}
\caption{Per-seed finite-grid audit. Every row satisfies the
Bernoulli certificate $d_{\rm bin}\le\bar\Delta_G$ and the code-pair
pushforward-KL certificate for the induced grid law.}
\label{tab:exact-audit-perseed}
\end{table}

\subsection{Multi-seed reporting protocol}
\label{app:multiseed-protocol}

The released audit artifact reports five seeds per dataset (digits,
wine, breast cancer, moons). Each seed saves the full $\Ped$ table,
row-normalised $\Ked$, raw and matched agreement, off-diagonal mass,
$H(\Ce)$, $H(\Cd)$, $\Reff$, active units, rate, reconstruction loss,
the finite-grid fields of Table~\ref{tab:exact-audit-perseed}, and the
timing fields of Table~\ref{tab:timing-calibration}. Failed seeds are
included with a failure code rather than silently dropped; no seed
was dropped from the reported audit.

\subsection{Closed-form separation examples}
\label{app:closed-form-separations}

Stress tests can be generated without training. For a permutation
mismatch, set $\Ped(i,\pi(i))=1/K$ for a derangement $\pi$. For
many-to-one reading, $\Ped(i,1)=1/K$ for all $i$. For trivial collapse,
$\Ped(1,1)=1$. These prove that scalar marginal summaries do not
identify coupled communication failures and should be used as unit
tests for any implementation of $\Ked$.

\subsection{Training settings}
\label{app:settings}

\paragraph{Setting~1 (synthetic discrete-token VAE, $V{=}16$, $L{=}8$).}
Sample $z^*\sim\mathrm{GMM}_{K=10}$ in $\mathbb R^2$ on a
$\sqrt K\times\sqrt K$ lattice, isotropic component covariance
$\sigma_*^2{=}0.04I$, uniform weights. Tokeniser $\tau:\mathbb R^2\to\{1,\ldots,V\}^L$
bins $L=8$ random-orthonormal scalar projections into $V=16$ uniform-quantile
bins. Encoder input $\bx=\tau(z^*+\eta)$, $\eta\sim\mathcal N(0,\sigma_{\mathrm{tok}}^2 I)$,
$\sigma_{\mathrm{tok}}^2=0.01$. Decoder reconstructs $\bx$ via product-Categorical likelihood
(Assumption~\ref{assume:bernoulli-decoder}(b)). Encoder: token-embedding(32)
$\to$ flatten $\to$ $256^4$ $\to$ $(\bmu_q,\boldsymbol\sigma_q)$.
Decoder: $2\to 256^4\to L\cdot V$ with per-position softmax.
Adam ($\mathrm{lr}=10^{-3}$), $\beta=0.1$ constant, full-batch
$30\mathrm K$ epochs on $N=1{,}797$ samples (seed 0). Latent grid:
$300^2=90{,}000$ points spanning $[\min\bz_i-0.5,\max\bz_i+0.5]^2$.

\paragraph{Setting~1-long (legacy long-horizon timing case study).}
Same architecture and optimiser as Setting~1; $\EPOCHLONG$ epochs over
seeds $\{0,1,2\}$ with checkpoints every 500--1000 epochs.
Figure~\ref{fig:two-phase-multiseed} and
Table~\ref{tab:two-phase-multiseed} report the resulting timing case
study; the cross-dataset audit
(Table~\ref{tab:timing-calibration}) shows the plateau-then-takeoff
ordering is regime-dependent rather than universal, which motivates
the finite-horizon framing of Theorem~\ref{thm:attractor-codebook}.

\paragraph{Setting~2 (MNIST Conv-VAE).}
Encoder: $\mathrm{Conv}(1\to32,4,s2)\to\mathrm{ReLU}\to\mathrm{Conv}(32\to64,4,s2)\to\mathrm{ReLU}\to\mathrm{Flatten}\to\mathrm{FC}(64\!\cdot\!7^2\to256)\to\mathrm{ReLU}\to\mathrm{FC}(256\to 2d)$,
$d=10$. Decoder symmetric; sigmoid output; Bernoulli likelihood;
$\beta$-warmup $0\to 1$ over 5 epochs; Adam ($10^{-3}$); 20 epochs on a
stratified $15\mathrm K$ subset; $10\mathrm K$ test split for $\hat{\mathcal A}_{\mathrm{MC}}$.

\paragraph{Setting~3 (Fashion-MNIST Conv-VAE).}
Architecture identical to Setting~2; dataset substituted; single seed;
retained as a legacy visualisation check.

\paragraph{Setting~4 (CIFAR-10 extended Conv-VAE).}
Encoder adds a third $\mathrm{Conv}(64\to128,4,s2)$ stage; continuous-Bernoulli
relaxation; stratified $15\mathrm K$ subset; 20 epochs; single seed.

\subsection{Setting~1-fast: accelerated convergence variant}
\label{app:setting1fast}

Setting~1-fast is a legacy acceleration check, not central evidence.
Five modifications reduce the epoch count to reach $\hat{\mathcal A}\ge 0.95$
from $\EPOCHLONG$ to $\FASTCONV$ (a $\sim 40\times$ speed-up):
(S0) one-hot CIFAR-10 labels with a $K{=}10$ Bernoulli decoder, full
$32\!\times\!32\!\times\!3$ flattened images, class-balanced 500-per-class
subset; (S1) z-score standardisation, no PCA; (S2) AdamW with $\mathrm{lr}=5\!\times\!10^{-5}$
and weight decay $10^{-4}$; (S3) cyclical $\beta$ annealing
$\beta_{\min}=10^{-2}$ to $\approx 1$, period $\sim 250$ epochs;
(S4) gradient clipping $\|\nabla\|_2\le 10$; (S5) wider hidden dimension
(512). All other pipeline elements are identical to Setting~1.
The speed-up is consistent with three diagnostic mechanisms in
\S\ref{app:diagnostic-bound}: lower $L$ (weight decay), wider $\gamma$
(z-score), and reduced $\kappa_c$ (wider network).
$L$ is the mean spectral norm of the input--output Jacobian via
\texttt{torch.autograd.functional.jacobian} and \texttt{torch.linalg.svdvals};
$\gamma$ is the 5th percentile of per-grid-point Euclidean distance to
the nearest Type~1 Bregman bisector. The reported gap scale is an
IWAE--ELBO tightness diagnostic with $K=100$, not a certified value.

\subsection{Sensitivity to code granularity}
\label{app:gmm-sensitivity}

Codebook Agreement is defined after choosing an operational code map.
Different granularities ask different semantic questions; we sweep
unsupervised $K\in\{2,3,4,5,6\}$, refit the code maps, and recompute
deterministic mean-code agreement $\Amu$.

\begin{table}[h]
\centering\scriptsize
\resizebox{\linewidth}{!}{%
\begin{tabular}{@{}lccccc@{}}
\toprule
Dataset & $K{=}2$ & $K{=}3$ & $K{=}4$ & $K{=}5$ & $K{=}6$ \\
\midrule
digits & $0.759\pm 0.063$ & $0.773\pm 0.062$ & $0.853\pm 0.061$ & $0.877\pm 0.013$ & $0.872\pm 0.034$ \\
wine & $0.969\pm 0.016$ & $0.906\pm 0.090$ & $0.910\pm 0.092$ & $0.932\pm 0.049$ & $0.911\pm 0.073$ \\
breast cancer & $0.939\pm 0.026$ & $0.929\pm 0.021$ & $0.919\pm 0.021$ & $0.909\pm 0.029$ & $0.905\pm 0.032$ \\
moons & $0.996\pm 0.004$ & $0.973\pm 0.016$ & $0.983\pm 0.009$ & $0.986\pm 0.007$ & $0.988\pm 0.004$ \\
\bottomrule
\end{tabular}}
\caption{Sensitivity of $\Amu$ to $K$. The diagnostic is intentionally
code-map dependent; granularity should be swept or justified.}
\label{tab:k-sensitivity}
\end{table}

\paragraph{Over-clustering regime ($K\!\in\!\{8,10,12,15,20\}$).}
Table~\ref{tab:k-sensitivity} reports the low-$K$ coarsening regime on
the four-dataset audit (deterministic mean-code $\Amu$).
Figure~\ref{fig:gmm-sensitivity} complements this with the
over-clustering regime on Setting~1 (single seed, posterior-sampled
$\hat{\mathcal A}$). Unsupervised \texttt{GaussianMixture} fits on
$z_{\rm encoded}$ with $K\!\in\!\{8,10,12,15,20\}$ rebuild the decoder
codebook at each fit's component means and recompute $\hat{\mathcal A}$
on the $300\!\times\!300$ grid. The supervised baseline at the
ground-truth $K\!=\!10$ (class-conditional centroids) is
$\hat{\mathcal A}\!=\!0.849$; unsupervised fits peak nearby at
$K\!\in\!\{8,10\}$ with $\hat{\mathcal A}\!\approx\!0.88$ and degrade
smoothly toward chance as $K$ exceeds the true number of latent atoms.
The diagnostic therefore behaves as a smooth function of $K$ rather
than as a brittle estimator: it is informative under unsupervised
GMM model selection without requiring the true $K$ to be known.

\begin{figure}[h]
\centering
\IfFileExists{figures/fig_gmm_sensitivity.pdf}{%
\includegraphics[width=0.65\linewidth]{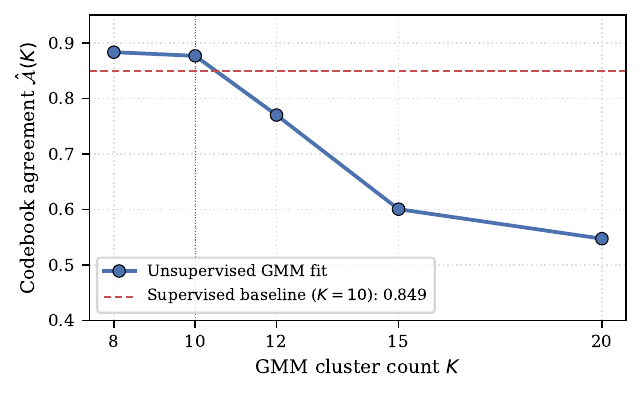}%
}{\fbox{\parbox{0.65\linewidth}{\centering\small\textit{[Figure~\ref{fig:gmm-sensitivity} placeholder.]}\\[2pt]\rule{0pt}{0.6cm}}}}
\caption{\textbf{GMM cluster-count sensitivity on Setting~1
($K\!\in\!\{8,10,12,15,20\}$).} Unsupervised \texttt{GaussianMixture}
fits on $z_{\rm encoded}$; the decoder codebook is rebuilt at each
fit's component means. The supervised baseline ($K\!=\!10$,
class-conditional centroids) is $\hat{\mathcal A}\!=\!0.849$ (dashed);
unsupervised over-clustering ($K\!\in\!\{15,20\}$) degrades
$\hat{\mathcal A}$ smoothly toward chance. This complements
Table~\ref{tab:k-sensitivity} (low-$K$ coarsening on four datasets).}
\label{fig:gmm-sensitivity}
\end{figure}

\section{Detailed Related Work}
\label{app:detailed-related}

\paragraph{Communication-theoretic mismatch and interference.}
Classical communication theory separates random channel noise from
structural receiver mismatch. Mismatched decoding studies the rate and
error consequences of decoding with a metric or channel model
different from the true channel~\cite{scarlett2020mismatched};
noisy-channel vector quantisation shows that index assignment can turn
small symbol confusions into large distortion~\cite{farvardin1990vq};
synchronisation channels study insertion/deletion/timing errors rather
than substitutions~\cite{mitzenmacher2009deletion}; and inter-symbol
interference is a temporal-memory phenomenon, not a generic synonym
for static code confusion~\cite{proakis2008digital}.
We import these distinctions carefully: static VAEs exhibit neural
\emph{inter-code interference} through the off-diagonal mass of $\Ked$,
while sequential or finite-horizon latent processes may additionally
be probed by lagged statistics such as~\eqref{eq:neural-isi}.

\paragraph{Rate--distortion and information-bottleneck views.}
Alemi et al.~\cite{alemi2018fixing} decompose ELBO into rate and
distortion; $\beta$-VAE~\cite{higgins2017beta} makes this tradeoff
explicit; the information bottleneck~\cite{tishby2000information} is a
related variational framework. Our codebook perspective is
complementary: we characterise the \emph{geometry} of the resulting
partition and couple it to the variational gap via DPI for KL
divergence~\cite{polyanskiy2024information,cover2006elements}.

\paragraph{Existing VAE diagnostics.}
Common representation diagnostics report rate, distortion, active
units~\cite{burda2016importance}, MIG and total-correlation
decompositions~\cite{chen2018isolating}, or aggregate code usage. These
are largely marginal: they can be invariant under a receiver-side
permutation or many-to-one reading. The neural codebook channel is
designed for the complementary question of \emph{coupled} encoder--decoder
confusion; Table~\ref{tab:diagnostic-stress-tests} of the main text
isolates cases where scalar diagnostics are intentionally insufficient.

\paragraph{Bregman divergences, vector quantisation, and information geometry.}
Banerjee et al.~\cite{banerjee2005clustering} establish a bijection
between exponential families and Bregman divergences;
Nielsen et al.~\cite{nielsen2010bregman} develop three types of Bregman
Voronoi diagrams via Legendre duality. We instantiate this in a trained
VAE and give the encoder--decoder reading.
VQ-VAE~\cite{vandenoord2017neural} explicitly designs a discrete
codebook with commitment; we place it at a Tier-3 endpoint
(\S\ref{app:vqvae-limit}). Lewis signaling games~\cite{lewis1969convention,lazaridou2017multi}
model emergent communication; the encoder--decoder pair is a single-agent
analog (\S\ref{app:limitations-extended}).
Arvanitidis et al.~\cite{arvanitidis2018latent} study the Riemannian
structure of VAE latent spaces; we characterise the combinatorial
(cell-complex) structure via Bregman geometry
(Proposition~\ref{prop:dually-flat}, \S\ref{app:diagnostic-bound}).

\paragraph{Latent dynamics and representation alignment.}
Fumero et al.~\cite{fumero2026latentdynamics} study latent vector
fields induced by the deterministic round trip $z\mapsto E(D(z))$.
Stochastic decoders require Markov kernels rather than deterministic
maps, so we work with the posterior-Gibbs kernels of
\S\ref{app:posterior-gibbs} and treat Fumero-style attractors as a
low-noise locally contractive limit
(\S\ref{app:fumero-relation}). The cross-model alignment program
connects to Relative Representations~\cite{moschella2022relative},
Latent Functional Maps~\cite{fumero2024latentfunctional}, and the
Platonic Representation Hypothesis~\cite{huh2024platonic}; our paper
supplies a discrete, variational-gap-controlled version for VAE code
maps.

\section{Extended Discussion, Limitations, and Outlook}
\label{app:limitations-extended}

This appendix collects the discussion material that would either
duplicate \S\ref{sec:discussion}/\S\ref{sec:limitations} of the main
text or take it off-topic. It is organised around five questions a
reviewer might raise but for which a brief answer is enough.

\paragraph{Q1: Are static VAE channel impairments literal?}
No. A static VAE does not have waveform inter-symbol interference; it
has a paired-label confusion channel $\Ked$ whose off-diagonal mass
measures inter-code interference (\S\ref{app:detailed-related}).
Mismatched decoding corresponds to $q_\phi(\bz|\bx)$ and
$p_\theta(\bz|\bx)$ inducing different operational partitions;
noisy-channel index-assignment distortion corresponds to off-diagonal
cells whose decoder prototypes are semantically far apart;
synchronisation errors correspond to ordered settings where a code
intended for one time/token/layer/horizon is read at another.

\paragraph{Q2: How does the codebook channel relate to mechanistic interpretability?}
A sparse-autoencoder dictionary~\cite{cunningham2023sparse} can be
read as an encoder-side feature proposer paired with a circuit that
implements a receiver-side codebook;
$\Ked$'s off-diagonal mass is then a direct diagnostic of feature
splitting and merging~\cite{elhage2022superposition}.
Concretely: train an SAE on layer-$\ell$ activations; define
$c_{\mathrm{enc}}^\star(h):=\arg\max_k a_k$ from SAE feature activations;
choose a (possibly many-to-one) matching $\Pi$ from features to circuit
outputs; define $c_{\mathrm{dec}}^\star(h):=\Pi^{-1}(\phi^\star(g_{\mathrm{SAE}}(f_{\mathrm{SAE}}(h))))$;
form $\hat\Ked$ on a held-out audit set; read feature splitting from
multi-modal rows and feature merging from multi-modal columns.
We do not run this experiment in the present paper; we offer the
recipe as one concrete way to instantiate $\Ked$ outside VAEs.

\paragraph{Q3: What is the relation to Lewis signaling games?}
The encoder--decoder pair can be viewed as an idealised cooperative
Lewis signaling game~\cite{lewis1969convention,lazaridou2017multi}
with states $c\in\{1,\ldots,K\}$, signals $\bz\in\mathcal Z$, actions
$\hat\bx=\bd(\bz)$, joint utility $\log p_\theta(\bx|\bz)$, and channel
cost $D_{\KL}(q_\phi(\bz|\bx)\|p(\bz))$; the negative $\beta$-VAE
ELBO is the cost-augmented objective. Under the symbol correspondence
$\bz\leftrightarrow m$, $q_\phi\leftrightarrow S_\phi$,
$p_\theta\leftrightarrow R_\theta$, this is identical to the
$\beta$-VAE ELBO. Codebook Agreement therefore provides a geometric
compatibility measure for this idealised abstraction, not a complete
equilibrium characterisation. A recent loss decomposition into
information loss and co-adaptation loss admits a structural parallel
with Corollary~\ref{thm:codebook-identity}---$\mathcal{L}_{\mathrm{info}}\!\leftrightarrow\!d_{\mathrm{bin}}$,
$\mathcal{L}_{\mathrm{adapt}}\!\leftrightarrow\!\mathcal{J}+\bar\rho$,
co-adaptation overfitting $\!\leftrightarrow\!\sum_c\pi_c\kappa_c$---but
the correspondences are qualitative (population-level KL chain rule
vs.\ finite-sample train/test gap; continuous $\bz$ vs.\ discrete $m$).

\paragraph{Q4: Is high agreement always good?}
No. The framework predicts that posterior collapse appears as high
apparent agreement over a low-entropy, low-information code, with
small $I(X;\Ce)$, low decoder sensitivity
$\E\|\nabla_z\log p_\theta(\bx|\bz)\|^2$, and low effective decoder
Fisher rank. Codebook Agreement is therefore uninformative
stand-alone; it should be paired with code entropy, $I(X;\Ce)$,
rate, finite-horizon multiplicity, and decoder-rank diagnostics.
Pre-trained encoder--decoder pairs such as Optimus
(BERT$\to$GPT-2)~\cite{li2020optimus}---where the autoregressive
Categorical decoder makes Assumption~\ref{assume:bernoulli-decoder}(b)
applicable position-wise---are a natural future-work target.

\paragraph{Q5: What are the key open directions?}
Beyond the four points in \S\ref{sec:limitations} we flag four:
\begin{enumerate}[label=(L5.\arabic*),itemsep=2pt,topsep=2pt,leftmargin=3em]
\item \textbf{Other specialisations of Lemma~\ref{thm:universal}:}
the universal identity supplies a menu indexed by the post-processing
$T$. Beyond binary disagreement, $K$-ary cluster-assignment $T$ yields
categorical-KL bounds; encoder-mean $T$ yields Gaussian-KL bounds;
threshold indicators yield concentration-based bounds.
\item \textbf{Decoder families beyond product-Bernoulli:}
Proposition~\ref{prop:dually-flat} is a first-order local realisation
specific to the diagonal-Gaussian / product-Bernoulli class;
Proposition~\ref{prop:suff-stat} and Corollary~\ref{cor:cluster-agnostic}
are architecture-independent starting points for adapting to
continuous Gaussian, mixture, or normalising-flow decoders.
\item \textbf{Cross-model latent-vector-field alignment:}
\S\ref{app:finite-horizon-cross} gives the formulation; the two-model
experiment (train two VAEs with different seeds, compute $R$ from
shared anchors or a latent functional map, report
$\mathcal A^{\mathrm{cross}}_H(R)$ and $\mathfrak E_{\mathrm{vf}}(R)$
for $H\in\{0,1,2,4,8\}$) is left for future work.
\item \textbf{Tier-2 GMM-VAE and tempered $\beta\ne 1$ reference:}
the Tier-2 model promoting only encoder cluster parameters
(\S\ref{app:functional-reference}) was not evaluated; a fully formal
$\beta\ne 1$ functional-reference analogue requires the tempered
posterior $p_{\theta,\beta}(\bz|\bx)\propto p(\bz)p_\theta(\bx|\bz)^{1/\beta}$
and is also future work.
\end{enumerate}

\paragraph{Differentiation from prior encoder--decoder consistency work.}
Three lines of prior work address related encoder--decoder questions but
ask materially different questions and use materially different objects.
Cemgil et al.~\cite{cemgil2020avae} (NeurIPS 2020) ask whether a trained
VAE consistently encodes typical samples generated from its decoder, and
answer ``no''; their fix is a self-consistency objective implemented
through a Markov chain that alternates encoder and decoder. Wang et
al.~\cite{wang2020coupled} formalize encoder--decoder incompatibility via
differential geometry of chart maps and propose a Coupled-VAE objective
that ties a deterministic-autoencoder branch to the VAE branch. Dang
et al.~\cite{dang2024posterior} (ICLR 2024) develop posterior-collapse
detection theory for conditional and hierarchical VAEs from the
\emph{marginal} side. Our framework is complementary to all three: rather
than (a) propose a training-time fix
(\cite{cemgil2020avae,wang2020coupled}) or (b) detect a marginal-side
failure mode (\cite{dang2024posterior}), we (c) construct a
\emph{coupled-channel finite-codebook diagnostic}
($\Ked,\mathcal A,\Reff$) whose off-diagonal mass receives a
one-dimensional Bernoulli-KL audit certificate via the variational gap
(Corollary~\ref{cor:agreement}). Proposition~\ref{prop:marginal-impossibility}
delineates exactly the diagnostic gap that motivates this coupled view,
and the operational reporting unit
$(\Ked,\mathcal A,\Reff,R,\mathrm{AU})$ is downstream-task agnostic by
construction. None of the three prior frameworks supplies a
finite-codebook channel diagnostic with this property.

\section{Practitioner Notes}
\label{app:practitioner-notes}

This appendix records implementation conventions and failure modes
that are easy to miss when reproducing the diagnostics. None of these
notes adds a theoretical claim; they are intended to make the main
identities harder to mis-implement.

\begin{remark}[Encoder/decoder $x$-consistency requirement]
\label{rem:x-consistency}
The certificate of Corollary~\ref{cor:agreement} is built around the
true model posterior $p_\theta(z|x)\propto p_\theta(x|z)p(z)$, so
\emph{the encoder input and the decoder output must be the same $x$}.
This rules out conditional models in which the encoder consumes one
variable (e.g.\ an image) and the decoder produces another (e.g.\ a
class label, a caption, or an action). Such image-to-label or
multimodal-conditional models are out of scope for the audit identity:
the relevant variational gap is well-defined, but the disagreement
event $E=\{c^{\star}_{\rm enc}\ne c^{\star}_{\rm dec}\}$ does not couple
to the same posterior, so the Bernoulli-KL bound does not apply.
\end{remark}

\paragraph{Bernoulli geometry on continuous-valued pixels.}
For $x\in[0,1]^K$, $\prod_k d_k^{x_k}(1-d_k)^{1-x_k}$ is not normalised
unless $x$ is binary. The correct continuous-Bernoulli density
includes a normaliser $C(\lambda)$ depending on the decoder parameter,
not on $x$~\cite{loaiza2019continuous}. Dropping this is therefore not
an additive constant in $\theta$. In this paper, the Bernoulli/BCE
construction should be read as a surrogate reconstruction geometry
for grey-scale pixels unless the continuous-Bernoulli normaliser is
explicitly included; this is harmless for Setting~1's one-hot labels
but matters when interpreting MNIST or CIFAR-10 as likelihood models.

\paragraph{Per-example versus aggregate gaps.}
$\bar\Delta$ denotes the per-example mean true variational gap.
The independent identity-verification check also reports the aggregate
batch sum $\Delta_{\mathrm{agg}}\approx\EDELTAAGG$ over
$|\mathcal B|=\CHECKBATCHSIZE$ examples, with aggregate residual
$\RESIDUALAGG$ nats. The relation
$\Delta_{\mathrm{agg}}/|\mathcal B|\approx 1.15\approx\EDELTA$
explains the apparent scale difference; the binary-KL bound uses the
per-example convention.

\paragraph{High-dimensional latent code maps.}
The Voronoi and Bregman instantiations of \S\ref{sec:theory} are most
natural in low-dimensional latent spaces where the cell complexes are
visualisable and the GMM summary of the aggregate posterior is reliable.
For larger $d$ (e.g.\ $d\!\geq\!32$ for transformer residual streams or
SAE features), the framework still applies since
Lemma~\ref{thm:universal} and Corollary~\ref{cor:agreement} require only
measurability of the code maps. Practical instantiations in this regime
include: (a) $k$-means or mini-batch $k$-means on the aggregate posterior
samples, with $c^\star_{\rm enc}(z)$ the nearest cluster index; (b) GMM
fit after a PCA projection to a manageable dimension, restoring the
weighted-Voronoi geometry of Theorem~\ref{thm:encoder} on the projected
subspace; (c) for SAE-style applications, the top-$1$ active feature
indicator $c^\star_{\rm enc}(h)=\arg\max_k a_k$ and a matched-circuit
decoder readout $c^\star_{\rm dec}$ as sketched in
Appendix~\ref{app:limitations-extended}~Q2. None of these changes the
audit identity (\ref{eq:main-identity}); they change only the geometric
interpretation of which $E$ the certificate bounds, and hence what
$\bar\rho_E$ is sensitive to. The portability of $\Ked$ across
dimensions is, in this sense, exactly the portability of choosing a
measurable hard-code map.

\paragraph{Cluster count $K$.}
Main runs set $K$ to the known class count---an evaluation protocol,
not an inference method for the true number of latent atoms.
\S\ref{app:gmm-sensitivity} reports a sensitivity sweep; in unlabelled
settings, $K$ should be selected by an explicit model-selection rule
or replaced by a non-parametric clustering diagnostic.

\paragraph{Determinism and seeds.}
Qualitative experiments are single-seed illustrations with fixed
seeds, deterministic data splits, and the same latent grid so that
changes in $\hat{\mathcal A}(t)$ reflect the diagnostic rather than
sampling noise. This is reproducibility, not statistical robustness;
multi-seed reporting is the appropriate target for empirical extensions
and is what the cross-dataset audit (\S\ref{app:exact-audit}) provides.

\paragraph{Bisection for the binary-KL bound.}
The inversion $d_{\mathrm{bin}}(p\|q)=\bar\Delta$ is numerically
stable only on the correct monotone branch; restrict to $p\in[q,1)$
since the bound is on the upper tail. Implementations should clip
neither $p$ nor $q$ outside $[0,1]$.

\paragraph{Computing the neural codebook channel.}
Given samples $\{\bz_n\}_{n=1}^N$ from $\bar q$, form paired labels
$(i_n,j_n)=(c^\star_{\mathrm{enc}}(\bz_n),c^\star_{\mathrm{dec}}(\bz_n))$,
the normalised contingency $\hat\Ped$, and the row-normalised channel
$\hat\Ked$. Report at minimum diagonal mass $\hat{\mathcal A}$,
off-diagonal mass $1-\hat{\mathcal A}$, row entropy, column
concentration, and either $\hat\Reff=I_{\hat\Ped}(\Ce;\Cd)$ in nats
or the explicitly normalised $\hat\Reff/\log K$. If labels are
arbitrary, align by a fixed prototype rule or report both raw and
optimal-permutation diagonals.

\paragraph{Agreement is not emergence.}
A model can have high $\mathcal A$ because both encoder and decoder
collapse to one effective basin. Agreement is therefore a disagreement
probe, not a stand-alone representation-quality metric, and should be
paired with the non-collapse diagnostics flagged in Q4 above.

\section{Additional Figures}
\label{app:additional-figures}

This appendix collects supplementary figures referenced from the main
text and from earlier appendices: the full long-run channel sequence,
cross-dataset visualisations, the four Bregman Voronoi types, the
boundary-overlay reading, the affine and Legendre-dual reformulations,
and the two-batch identity verification scatter.

\begin{figure}[h]
\centering
\IfFileExists{figures/fig_codebook_channel_heatmaps_full.png}{%
\includegraphics[width=\linewidth]{figures/fig_codebook_channel_heatmaps_full.png}%
}{\IfFileExists{figures/fig_codebook_channel_heatmaps_full.pdf}{%
\includegraphics[width=\linewidth]{figures/fig_codebook_channel_heatmaps_full.pdf}%
}{\IfFileExists{figures/fig_codebook_channel_heatmaps_main.png}{%
\includegraphics[width=\linewidth]{figures/fig_codebook_channel_heatmaps_main.png}%
}{%
\fbox{\parbox{0.85\linewidth}{\centering\small\textit{[Figure~\ref{fig:codebook-channel-heatmaps-full} placeholder.]}\\[2pt]\rule{0pt}{0.6cm}}}%
}}}
\caption{Checkpoint sequence for the neural codebook channel.
Annotated entries are row-normalised probabilities of $\Ked(j|i)$; the
diagonal guide marks raw encoder--decoder label agreement.
When the 5-panel asset is available the rendered checkpoints are
1000, 30000, 70000, 130000, 200000 and the corresponding
$\Reff/\log K$ values are 0.290, 0.160, 0.560, 0.852, 0.959; with the
3-panel asset the checkpoints are 1000, 70000, 200000 with values
0.290, 0.560, 0.959.}
\label{fig:codebook-channel-heatmaps-full}
\end{figure}

\begin{figure}[h]
\centering\small
\begin{tabular}{@{}lccp{0.34\textwidth}@{}}
\toprule
Setting & $\hat{\mathcal A}$ & ARI & Role \\
\midrule
MNIST Conv-VAE & $\HATATWO\%$ & $\ARITWO$ & higher-dimensional check \\
Fashion-MNIST Conv-VAE & $\HATAFMNIST\%$ & $\ARIFMNIST$ & harder image classes; single seed \\
CIFAR-10 Conv-VAE & $\HATACIFAR\%$ & $\ARICIFAR$ & continuous-Bernoulli sanity check \\
\bottomrule
\end{tabular}
\caption{Higher-dimensional visualisation summary for Settings~2--4.
Retained for breadth; main reviewer-facing robustness evidence is the
multi-seed audit in Table~\ref{tab:cross-dataset-summary}.}
\label{fig:cross-dataset}
\end{figure}

\begin{figure}[h]
\centering\small
\begin{tabular}{@{}lp{0.62\textwidth}@{}}
\toprule
Panel concept & What it shows \\
\midrule
Encoder Euclidean / weighted Voronoi & aggregate-posterior assignment regions \\
Decoder Type~1 Bregman Voronoi & decoder-side assignment pulled back through $\bd(\bz)$ \\
Decoder Type~2 dual & equivalent dual construction in gradient/logit coordinates \\
Decoder Type~3 symmetrised & effect of symmetrising the asymmetric Bregman divergence \\
\bottomrule
\end{tabular}
\caption{Encoder and decoder codebooks compared. The encoder and
decoder induce different cell complexes; their disagreement is
$1-\mathcal A$.}
\label{fig:four-types}
\end{figure}

\begin{figure}[h]
\centering\small
\begin{tabular}{@{}lp{0.66\textwidth}@{}}
\toprule
Boundary object & Interpretation \\
\midrule
Encoder boundary & two aggregate-posterior Mahalanobis energies tied \\
Decoder boundary & two Bregman reconstruction prototypes tied \\
Miscommunication zone & latent points where these two induced labels disagree \\
\bottomrule
\end{tabular}
\caption{The gap between encoder and decoder boundaries is the
geometric support of the disagreement event $E$.}
\label{fig:boundary-overlay}
\end{figure}

\begin{figure}[h]
\centering\small
\begin{tabular}{@{}llc@{}}
\toprule
Construction & Compared against & Agreement \\
\midrule
Affine reformulation & direct Type~1 Bregman diagram & $100.00\%$ \\
Weighted Euclidean rep. & direct Type~1 Bregman diagram & $99.97\%$ \\
Type~2 (dual) & Type~1 of $F^*$ in gradient space & $100.0\%$ \\
\bottomrule
\end{tabular}
\caption{Numerical verification of the Bregman reformulations
(Propositions~\ref{prop:affine-reformulation}--\ref{prop:legendre-dual}).}
\label{fig:bregman-reformulation}
\end{figure}

\begin{figure}[h]
\centering\small
\begin{tabular}{@{}llc@{}}
\toprule
Check & Value & Interpretation \\
\midrule
Identity residual & $\RESIDUALAGG$ nats & aggregate numerical residual \\
Reference scale & $\Delta_{\mathrm{agg}}\approx\EDELTAAGG$ nats & aggregate IWAE--ELBO tightness \\
Relative residual & $1.8\%$ & estimator-level consistency check \\
\bottomrule
\end{tabular}
\caption{Two-batch independent check of the
Corollary~\ref{thm:codebook-identity} estimator. The proof establishes
the identity (\S\ref{app:proof-codebook-specialization}); this table
is a numerical consistency check on the estimator
(\S\ref{app:empirical-estimator}).}
\label{fig:identity-verification}
\end{figure}


\section*{NeurIPS Paper Checklist}

\begin{enumerate}
\item \textbf{Claims.} Q: Do the main claims of the paper accurately reflect the contributions and scope? A: [Yes] Justification: The abstract and introduction state a Theory contribution: the coupled diagnostic object $\Ked$, the universal post-processing decomposition (Lemma~\ref{thm:universal}), the binary disagreement specialization and Bernoulli-KL certificate (Corollaries~\ref{thm:codebook-identity}--\ref{cor:agreement}), and the marginal-insufficiency separation (Proposition~\ref{prop:marginal-impossibility}). The experiments are described as finite-grid audits and diagnostic robustness checks, not as benchmark or training-dynamics claims.

\item \textbf{Limitations.} Q: Does the paper discuss limitations? A: [Yes] Justification: Section~\ref{sec:limitations} states that code maps are researcher-specified operational statistics; empirical certificates require exact enumeration or controlled quadrature; the finite-grid audits certify induced grid laws only; high agreement does not by itself imply non-collapsed emergence; the Bregman/Fisher analysis is model-class-specific; and no universal plateau-then-takeoff training law is claimed. Appendix~\ref{app:limitations-extended} gives extended limitations and future work.

\item \textbf{Theoretical results.} Q: Are assumptions and proofs provided for all theoretical results? A: [Yes] Justification: Lemma~\ref{thm:universal} is proved in Appendix~\ref{app:proof-universal}; Corollaries~\ref{thm:codebook-identity} and~\ref{cor:agreement} are proved in Appendices~\ref{app:proof-codebook-specialization} and~\ref{app:cor18-proof}; Proposition~\ref{prop:marginal-impossibility} is proved in the main text; Theorem~\ref{thm:encoder} and the model-class-specific geometric diagnostics are proved or derived in Appendices~\ref{app:encoder-proof}--\ref{app:diagnostic-bound}; and finite-horizon dynamics statements are isolated in Appendix~\ref{app:posterior-gibbs}. Assumptions such as measurability, absolute continuity, standard Borel spaces, finite grids, regularity, and boundary conditions are stated where used.

\item \textbf{Experimental reproducibility.} Q: Does the paper fully disclose all the information needed to reproduce the main experimental results? A: [Yes] Justification: Section~\ref{sec:experiments} and Appendix~\ref{app:experimental} specify datasets, architecture, optimizer, learning rate, batch size, epochs, seeds, grid size, code-map construction, and reporting protocol. The intended anonymized supplementary code archive regenerates the diagnostic, finite-grid, sensitivity, marginal-insufficiency, and timing tables.

\item \textbf{Open access to data and code.} A: [Yes] Justification: The datasets used in the main audits are standard public sklearn datasets or synthetic two-moons data. The submission is intended to include anonymized supplementary code for review and de-anonymized code after acceptance.

\item \textbf{Experimental setting.} A: [Yes] Justification: Section~\ref{sec:experiments} and Appendix~\ref{app:experimental} describe the four-dataset, five-seed audit, the $800$-epoch training schedule, checkpoint cadence, and $41\times41$ finite-grid posterior evaluation.

\item \textbf{Experiment statistical significance.} A: [Yes] Justification: Main diagnostic summaries are reported as mean $\pm$ standard deviation over five seeds per dataset. The paper does not use the experiments to claim superiority over baselines; they are reproducibility and calibration checks for a theory-first diagnostic.

\item \textbf{Compute resources.} A: [Yes] Justification: The main finite-grid audits are low-dimensional and use four small public/synthetic datasets over five seeds. Appendix~\ref{app:experimental} reports the training and audit protocol. Legacy long-horizon trajectory experiments are kept in the appendix only as illustrations and are not central evidence.

\item \textbf{Code of ethics.} A: [Yes] Justification: The work is a diagnostic/theoretical study using public or synthetic datasets and does not introduce a deployed system, human-subject data collection, or dual-use capability.

\item \textbf{Broader impacts.} A: [N/A] Justification: The direct contribution is a diagnostic and theoretical framework for representation analysis. Potential downstream impact is methodological: practitioners may avoid over-interpreting latent usage as shared meaning. No direct societal deployment is proposed.

\item \textbf{Safeguards.} A: [N/A] Justification: No model release with foreseeable deployment risk is proposed; the code artifact supports reproduction of small-scale diagnostics.

\item \textbf{Licenses.} A: [Yes] Justification: The main datasets are public sklearn datasets or synthetic data; implementation dependencies should be listed in the supplementary repository.

\item \textbf{Assets.} A: [Yes] Justification: Any released code, generated tables, and figures should be included under an explicit repository license.

\item \textbf{Crowdsourcing / human subjects.} A: [N/A] Justification: No crowdsourcing or human-subject study is used.

\item \textbf{IRB approvals.} A: [N/A] Justification: No human-subject data or intervention is used.

\item \textbf{LLM usage.} A: [N/A] Editing (e.g., grammar, spelling, word choice)
\end{enumerate}

\end{document}